\newtheorem{theorem}{Theorem}[section]
\newtheorem{lemma}[theorem]{Lemma}
\newtheorem{proposition}{Proposition}
\def\Alt{\text{Alt}}
\newcounter{alphasect}
\def\alphainsection{0}
\let\oldsection=\section
\def\section{%
  \ifnum\alphainsection=1%
    \addtocounter{alphasect}{1}
  \fi%
\oldsection}%
\renewcommand\thesection{%
  \ifnum\alphainsection=1%
    \Alph{alphasect}
  \else%
    \arabic{section}
  \fi%
}%
\newenvironment{alphasection}{%
  \ifnum\alphainsection=1%
    \errhelp={Let other blocks end at the beginning of the next block.}
    \errmessage{Nested Alpha section not allowed}
  \fi%
  \setcounter{alphasect}{0}
  \def\alphainsection{1}
}{%
  \setcounter{alphasect}{0}
  \def\alphainsection{0}
}%
\def\exp{\mathrm{exp}}
\theoremstyle{plain}
\title{Near Optimal Pure Exploration in Logistic Bandits}
\author{
  Eduardo Ochoa Rivera, Ambuj Tewari \\
  University of Michigan, \\
  Ann Arbor, USA \\
  \today
}
\begin{document}
\maketitle

\begin{abstract}
Bandit algorithms have garnered significant attention due to their practical applications in real-world scenarios. However, beyond simple settings such as multi-arm or linear bandits, optimal algorithms remain scarce. Notably, no optimal solution exists for pure exploration problems in the context of generalized linear model (GLM) bandits. In this paper, we narrow this gap and develop the first track-and-stop algorithm for general pure exploration problems under the logistic bandit called logistic track-and-stop (Log-TS). Log-TS is an efficient algorithm that asymptotically matches an approximation for the instance-specific lower bound of the expected sample complexity up to a logarithmic factor.
\end{abstract}


\section{Introduction}\label{section:intro}

The multi-arm bandit (MAB) problem is one of the most important and classical problems in sequential decision-making under uncertainty, and it has been studied for nearly a century \citep{thompson1933likelihood, robbins1952bandit}. The most common setting involves regret minimization, where the goal is to minimize regret over a finite time horizon. This scenario has been extensively studied under linear reward functions \citep{abe1999associative, dani2008stochastic, rusmevichientong2010linearly}. Extensions of the linear case include generalized linear models (GLM) \citep{filippi2010parametric, faury2020improved}, Lipschitz bandits \citep{bubeck2012regret} and spectral bandits \citep{valko2014spectral}. 

On the other hand, the pure exploration setting has gained significant attention in recent years, particularly in the context of best arm identification (BAI), which is well understood within the MAB framework \citep{kaufmann2016bai} and in stochastic linear bandits \citep{soare2015bailinear}. Other pure exploration problems have also been studied, such as the thresholding bandit problem (TBP) \citep{locatelli2016optimal, kano2019good}, and top-m arm identification \citep{bubeck13topm, kalyanakrishnan2012pac}. Similarly to the regret minimization problem, there are extensions to the GLM case for BAI \citep{kazerouni2021, jun2021improved}. However, these algorithms use loose inequalities or require warm-up phases that can be prohibitive in practise and they only can be applied to BAI. In this work we narrow this gap with the following contributions:




\begin{itemize}
    \item We propose Log-TS, the first track-and-stop type algorithm for {\em general} pure exploration problems in the logistic bandit setup.
    \item We prove both in-expectation and almost sure upper bounds for the sample complexity of Log-TS.
    \item We provide a lower bound for the expected sample complexity of {\em general}  pure exploration problems (including BAI, top-m, and thresholding bandits) and a tractable approximation. Log-TS matches this lower bound asymptotically up to a logarithmic factor.
     \item We confirm the practical performance of Log-TS for the classical hard instance for pure exploration problems and when the number of arms increases.
\end{itemize}

\subsection{Paper Structure}\label{section:struc}

The reminder of this paper is organized as follows:

In Section \ref{section:rel_work} we discuss the previous work on GLM bandits and pure exploration problems. In Section \ref{section:background} we formulate pure exploration problems under logistic bandits. We also state the definitions and results needed for the construction of the algorithm. In Section \ref{section:lbsc} we present an instance-specific lower bound for the expected sample complexity of {\em general}  pure exploration problems and a tractable approximation. In Section \ref{section:algorithm} we define the components needed for Log-TS: a stopping rule and a sampling rule. In Section \ref{section:sample_complexity} we state the asymptotic upper bounds for the sample complexity of Log-TS. In Section \ref{section:experiments} we describe the numerical experiments for two specific pure exploration problems: BAI and TBP. Finally, in Section \ref{section:discussion} we discuss the presented results and point out future directions.

\section{Related Work}\label{section:rel_work}

Extensions of stochastic linear bandits have attracted attention due to the restrictive assumption of linearity in real-world applications. For instance, when observing binary rewards, modeling the mean reward as a linear function can be inaccurate. A natural extension in such cases is generalized linear model (GLM) bandits, particularly logistic bandits. GLM bandits were studied by \citet{filippi2010parametric}, where they used a tail inequality similar to the one in the linear case \citep{rusmevichientong2010linearly}, combined with the worst-case behavior of the non-linearity of the link function $\mu$, $\kappa=\sup _{x \in \mathcal{X}, \theta \in \Theta} 1/\dot{\mu}\left(x^{\top} \theta\right)$. More precisely, $|\mu(X^{\top}_t\theta_t^{(1)})-\mu(X^{\top}_t\theta)| \le \rho(t, \delta)$ with probability $1-\delta$ where $\rho(t, \delta) = \mathcal{O} (\kappa\sqrt{d\log (t)\log (d/\delta)})$, $\theta_t^{(1)}$ is the projection of the maximum likelihood estimator (MLE) estimator and $X_t$ is any random varible in $\mathcal{X}$. 

As pointed out in \citet{faury2020improved}, $\kappa$ can be restrictively large for certain link functions in real applications, such as in the case of logistic bandits. It can be shown that $\kappa \geq \exp \left(\max _{x \in \mathcal{X}}\left|x^{\top} \theta_*\right|\right)$ for the logistic model. To address this, \citet{faury2020improved} proposed a new tail inequality that takes into account the local curvature of the link function. This inequality is independent of $\kappa$, significantly improving the regret upper bound. Another key property exploited in their work is the self-concordance of the logistic loss, which helps to bound the prediction errors $ |\mu(x^{\top} \theta_*)-\mu(x^{\top} \hat{\theta}_t)|$ using their tail inequality. 

A common assumption in GLM bandits is that $\|\theta^*\|<S$. While this assumption helps control errors, it also adds complexity, as some algorithms require projecting the maximum likelihood estimator (MLE) onto the parameter space $\Theta = \{\theta \in \mathbb{R}^d: \|\theta\| < S\}$, which can be computationally intensive. In \citet{russac2021self}, the authors exploit the self-concordance property more effectively, eliminating the need for this projection step. 

Another challenge in GLM bandits is that the MLE and the Fisher information matrix 
cannot be updated recursively, which increases the number of operations per round. This issue is addressed by \citet{faury2022jointly}, who propose an online procedure with a warm-up phase.

Track and stop (TS) algorithms have been among the most common and successful approaches in pure exploration problems due to their asymptotic optimality \citep{kaufmann2016bai,jedra2020optbailinear, degenne2020pureexp, wang2021fast}. The core of these algorithms is to track the oracle proportions of arm draws defined by the sample complexity lower bound. Then, stopping rules are designed using the generalized likelihood ratio. While this approach often involves complex optimization, \citet{jedra2020optbailinear} demonstrated that it is possible to achieve asymptotic optimality even when the optimal weights are not updated at every time, a condition known as the lazy setting.

However, no version of the TS algorithm exists for GLM bandits due to the complexity of the lower bound. The first pure exploration work in GLM bandits that we are aware of is by \citet{kazerouni2021}. In their study, they used a loose inequality from \citet{filippi2010parametric}, but as other authors have noted, relying on this worst-case inequality can lead to a dependency on an exponential factor. A more recent approach was proposed by \citet{jun2021improved}, who used a sharper inequality from \citet{faury2020improved}, accounting for the curvature of the logistic function. They designed an algorithm inspired by RAGE \citep{fiez2019sequential} which is an algorithm for BAI in the linear bandit case and provided a high-probability upper bound for the sample complexity. Additionally, they derived an instance-specific sample complexity lower bound. However, their algorithm requires a warm-up phase that depends on $\kappa_0 =\sup _{x \in \mathcal{X}} 1/\dot{\mu}\left(x^{\top} \theta^*\right)$ and the number of arms, which can be restrictive in real-world applications.

\section{Background and Preliminaries}\label{section:background}

\paragraph{Notation} For any vector $x \in \mathbb{R}^d$ and any positive definite matrix $\mathbf{M} \in \mathbb{R}^{d \times d}$, we define $\|x\|_{\mathbf{M}}:=\sqrt{x^{\top} \mathbf{M} x}$ as the $\ell^2$-norm of $x$ weighted by $\mathbf{M}$. When $\mathbf{M} = \mathbf{I}_{d \times d}$ is the identity matrix, we simply write $\|x\|:=\|x\|_{\mathbf{M}}$. We define $\lambda_{\min }(\mathbf{M})$ and $\lambda_{\max }(\mathbf{M})$  the smallest and largest eigenvalue of $\mathbf{M}$ respectively. We also denote by $\operatorname{Tr}(\mathbf{M})$ the trace of the matrix. For two matrices $\mathbf{A}$ and $\mathbf{B}$, $\mathbf{A} \succ \mathbf{B}$ means that $\mathbf{A}-\mathbf{B}$ is positive definite. We define $\mathcal{B}(d):=\left\{x \in \mathbb{R}^d:\|x\| \leq 1\right\}$ the $d$-dimensional ball of radius 1 under the norm $\ell^2$. For an univariate function $f$ we define $\dot{f}$ its derivative. We define $\Sigma := \{ w \in [0,1]^K : \sum_{k} w_k = 1 \}$ the $K-1$ simplex. For any $w, w' \in \Sigma$, we define $d_{\infty}(w,w') = \max _{k\in[K]} |w_k - w_k'|$, and for any compact set $C \subseteq \Sigma$, $d_{\infty}(w,C) = \min _{w' \in C} d_{\infty}(w,w')$. Finally, for $w \in \Sigma$, we define $\operatorname{supp}(w) =\{i \in [K]: w_i > 0\} $

\subsection{Settings}\label{subsection:settings}

Let $\mathcal{X} \subseteq \mathbb{R}^d$ be a finite set of arms, where $|\mathcal{X}| = K$ and a unknown parameter $\theta^* \in \Theta$. We consider the stochastic logistic bandit, where at each round $t \ge 1$ the decision maker selects an arm $x_t \in \mathcal{X}$ according to a sampling rule based on previously observed samples and obtains a reward $r_t \sim \text{Bernoulli}(\mu(x_t^{\top}\theta^{*}))$, where $\mu(x) = \frac{1}{1+e^{-x}}$. It then proceeds to the next round.

We are interested in general pure exploration problems, where the goal is to identify the true answer $i^{\star}(\theta^*)$ that belongs to a finite set $\mathcal{I}$ of possible answers (e.g., for best arm identification $i^{\star}(\theta^*) = \arg \max _{x\in\mathcal{X}} \mu(x^{\top}\theta^{*})$). We will assume $i^{\star}(\theta^*)$ is unique. We consider the fixed confidence setting, where the objective is to accurately identify $i^{\star}(\theta^*)$ with high probability as soon as possible. Formally, the sampling rule defines for all $t \ge 1$ a function $\pi_t$ from $(\mathcal{X} \times \{0,1\})^{t-1}$  to the space of probability distributions on $\mathcal{X}$, which is measurable with respect to the $\sigma$-algebra $\mathcal{F}_t := \sigma(\{x_s, r_s\}_{s \le t})$. We call that $\sigma$-algebra history before time $t$. At time $\tau$, where $\tau$ is an stopping time with respect to $\mathcal{F}_t$, and given an estimator $\hat{\theta}_{\tau}$ of $\theta^*$ the algorithm stops and the decision $i^{\star}(\hat{\theta}_{\tau})$ is made. We say the algorithm is $\delta$-correct if $\mathbb{P}_{\theta}[\tau_{\delta}<\infty, i^{\star}(\hat{\theta}_{\tau}) \ne i^{\star}(\theta)] < \delta$. Then, the goal is to design a $\delta$-correct algorithm that minimize the expected sample complexity $\mathbb{E}_{\theta}[\tau_{\delta}]$.

\paragraph{Assumptions} We will make the usual assumption in the logistic bandit problem

\begin{itemize}
    \item $\|x\| \in \mathcal{B}(d)$ for all $x\in\mathcal{X}$
    \item  $\Theta = \{\theta \in \mathbb{R}^d: \|\theta\| \le S \}$, $S > 0$
    \item $\mathcal{X}$ spans $\mathbb{R}^d$
    \item We have access to $\kappa_0 :=\sup _{x \in \mathcal{X}} 1/\dot{\mu}\left(x^{\top} \theta^*\right)$ 
\end{itemize}

\subsection{Maximum Likelihood Estimator}\label{subsection:mle}

For logistic regression setting, we can estimate the parameter $\theta^*$ using the MLE. At time $t$, the log-likelihood can be expressed as: 
$$
\begin{aligned}\label{eq:loglike}
    \mathcal{L}_t(\theta)&= \sum_{s=1}^t r_s \log \mu\left(x_s^{\top} \theta\right)\\
    &+\left(1-r_s\right) \log \left(1-\mu\left(x_s^{\top} \theta\right)\right)
\end{aligned}
$$
and the MLE is given by $\hat{\theta}_t = \underset{\theta \in \mathbb{R}^d}{\arg \max \mathcal{L}_t(\theta)}$. We also define Fisher information or Hessian matrix at $\theta$ as
\begin{align}
    \mathbf{H}_t(\theta)=\sum_{s=1}^t \dot{\mu}\left(x_s^{\top} \theta\right) x_s x_s^{\top} ,
\end{align}
and the design matrix as
\begin{align}
    \mathbf{A}_t = \sum_{s=1}^t x_s x_s^{\top}.
\end{align}
Similarly, for $w\in\Sigma$, we define 
$$\mathbf{H}_{w}(\theta)=\sum_{x \in \mathcal{X}} w_x \dot{\mu} (x^{\top} \theta) x x^{\top}.$$
Note that if $w_x = N_t(x)/t$, where $N_t(x)$ is the number of times the arm $x$ has been selected, then $t\mathbf{H}_{w}(\theta) = \mathbf{H}_t(\theta)$. We also denote $g_t(\theta) = \sum_{s=1}^{t} \mu\left(x_s^{\top} \theta\right) x_s$. This function plays an important role in the concentration inequalities as shown in \citet{faury2020improved}. In particular, we have that $g(\hat{\theta}_t) = \sum_{s=1}^{t} r_s x_s$ by definition of the MLE.

\subsection{Concentration}\label{subsection:consentration}

We will use the recent concentration tools developed by \citet{faury2020improved} for the logistic bandit. In particular, for all $t \ge 1$, $\|g_t(\hat{\theta}_t)-g_t\left(\theta_*\right)\|_{\mathbf{H}_t(\theta^*)^{-1}} \le \gamma_t(\delta)$ with probability at least $1-\delta$ for some function $\gamma_t(\delta)$. For completeness, we prove a slightly modified version of this inequality because we need to use the unregularized MLE to guarantee convergence to the true parameter. Addionally, we also use the self-concordant property of the logistic regression so we can apply the inequality to guarantee the algorithm is $\delta$-correct as explained in Section \ref{subsection:stopping_rule}.

\begin{lemma}\label{lemma:concentration}
    Let $\delta \in(0,1]$ and $\lambda(t) > 0$ for $ t \ge 1$. If exist $t_0 \ge 1$ such that for $t \ge t_0$, $\lambda_{\min }(\mathbf{H}_t\left(\theta_*\right)) > \lambda(t)$, with probability at least $1-\delta$:

    \begin{align}\label{eq:cont_inequality_gamma}
        \forall t \geq t_0, \quad\left\|g_t(\hat{\theta}_t)-g_t\left(\theta_*\right)\right\|_{\mathbf{H}_t^{-1}\left(\theta_*\right)} \leq \gamma_t(\delta)    \ ,
    \end{align}

    where $\gamma_t(\delta):=\frac{\sqrt{\lambda(t)}}{2}+\frac{4}{\sqrt{\lambda(t)}} \log \left(\frac{2^d}{\delta}\left(\frac{L t}{\lambda (t)d}\right)^{\frac{d}{2}}\right)$
\end{lemma}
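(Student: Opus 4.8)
The plan is to adapt the Faury et al.\ (2020) martingale argument to the \emph{unregularized} MLE, replacing their regularization parameter $\lambda$ by the (random, time-varying) lower bound $\lambda(t)$ on $\lambda_{\min}(\mathbf{H}_t(\theta_*))$ that holds for $t \geq t_0$. The starting point is the identity $g_t(\hat{\theta}_t) - g_t(\theta_*) = \sum_{s=1}^t (r_s - \mu(x_s^\top\theta_*)) x_s =: S_t$, which follows from the first-order optimality condition for the MLE, so $\varepsilon_s := r_s - \mu(x_s^\top\theta_*)$ is a bounded martingale difference sequence adapted to $\mathcal{F}_s$ with conditionally sub-Gaussian (indeed $[0,1]$-bounded) increments. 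I would then control $\|S_t\|_{\mathbf{H}_t^{-1}(\theta_*)}$ via a self-normalized concentration inequality: since $\mathbf{H}_t(\theta_*) = \sum_{s=1}^t \dot\mu(x_s^\top\theta_*) x_s x_s^\top$ is exactly the predictable variation of $S_t$ (up to the Bernoulli variance factor, which is $\dot\mu(x_s^\top\theta_*)$ since $\mathrm{Var}(r_s) = \mu(1-\mu) = \dot\mu$), this is the natural matrix normalization and the method of mixtures applies.

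The key steps, in order: (i) write $S_t$ and identify $\mathbf{H}_t(\theta_*)$ as its quadratic variation; (ii) apply a Freedman-type / method-of-mixtures self-normalized bound to get, with probability $\geq 1-\delta$ and simultaneously for all $t$, a bound of the form $\|S_t\|_{(\mathbf{H}_t(\theta_*) + \lambda(t) \mathbf{I})^{-1}}^2 \lesssim \log\bigl(\tfrac{1}{\delta}\tfrac{\det(\mathbf{H}_t(\theta_*)+\lambda(t)\mathbf{I})^{1/2}}{\lambda(t)^{d/2}}\bigr)$, using the standard covering/determinant-trace bound $\det(\mathbf{H}_t(\theta_*)+\lambda(t)\mathbf{I}) \leq (\lambda(t) + Lt/d)^d$ coming from $\dot\mu \leq L$ (here $L = 1/4$ for the logistic link, or $L=1$ as an absolute bound on $\|x_s x_s^\top\|$) and $\|x_s\|\leq 1$; (iii) restrict attention to $t \geq t_0$, where by hypothesis $\lambda_{\min}(\mathbf{H}_t(\theta_*)) > \lambda(t)$, so that $\mathbf{H}_t(\theta_*) + \lambda(t)\mathbf{I} \preceq 2\mathbf{H}_t(\theta_*)$ and hence $\|S_t\|_{\mathbf{H}_t^{-1}(\theta_*)}^2 \leq 2\|S_t\|_{(\mathbf{H}_t(\theta_*)+\lambda(t)\mathbf{I})^{-1}}^2$; (iv) combine and take square roots, using $\sqrt{a+b} \leq \sqrt{a} + \sqrt{b}$ and $\sqrt{\det(\cdot)} \le (Lt/\lambda(t)d + 1)^{d/2} \cdot \lambda(t)^{d/2}$-type algebra to match the stated $\gamma_t(\delta) = \tfrac{\sqrt{\lambda(t)}}{2} + \tfrac{4}{\sqrt{\lambda(t)}}\log\bigl(\tfrac{2^d}{\delta}(\tfrac{Lt}{\lambda(t)d})^{d/2}\bigr)$; the factor $\sqrt{\lambda(t)}/2$ is the usual artifact of passing from the regularized to the unregularized norm, and the $2^d$ comes from the $\mathbf{H}_t \preceq \mathbf{H}_t + \lambda(t)\mathbf{I} \preceq 2\mathbf{H}_t$ inflation inside the log-determinant.

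The main obstacle is handling the fact that $\lambda(t)$ plays the role of a \emph{deterministic, $t$-dependent} regularizer while the event $\lambda_{\min}(\mathbf{H}_t(\theta_*)) > \lambda(t)$ is only assumed for $t \geq t_0$: one must be careful that the self-normalized martingale bound is proved with a \emph{fixed} mixture distribution (so the "for all $t$" holds on a single probability-$(1-\delta)$ event), and only \emph{afterwards} intersect with the deterministic-looking growth condition on $\lambda_{\min}$ — in particular $\lambda(t)$ must be chosen in advance, not adaptively, for the mixture argument to go through. A secondary technical point is verifying the conditional variance proxy: because $r_s \in \{0,1\}$ with mean $\mu(x_s^\top\theta_*)$, its conditional variance is exactly $\dot\mu(x_s^\top\theta_*) = \mu(x_s^\top\theta_*)(1-\mu(x_s^\top\theta_*))$, so $\mathbf{H}_t(\theta_*)$ is the \emph{exact} predictable quadratic variation and no curvature/self-concordance slack is needed at this stage — the self-concordance is only invoked later (Section~\ref{subsection:stopping_rule}) to transfer this bound on $g_t$ to a bound on prediction errors $|\mu(x^\top\theta_*) - \mu(x^\top\hat\theta_t)|$.
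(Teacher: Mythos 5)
Your overall architecture matches the paper's: the first-order condition for the MLE gives $g_t(\hat{\theta}_t)-g_t(\theta_*)=S_t=\sum_s \varepsilon_s x_s$ with $\varepsilon_s=r_s-\mu(x_s^{\top}\theta_*)$, the conditional variance is exactly $\dot{\mu}(x_s^{\top}\theta_*)$ so $\mathbf{H}_t(\theta_*)$ is the predictable quadratic variation, one then invokes a self-normalized mixture bound, controls $\det(\mathbf{H}_t)$ by the determinant--trace inequality, and uses $\lambda_{\min}(\mathbf{H}_t(\theta_*))>\lambda(t)$ to compensate for the absence of regularization. Your remark that the mixture distribution must be fixed in advance (so a $t$-dependent $\lambda(t)$ needs care) is a genuinely good catch --- the paper's own Lemma A.1 is cavalier about exactly this point. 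Your deregularization route, $\mathbf{H}_t+\lambda(t)\mathbf{I}\preceq 2\mathbf{H}_t$, is a legitimate alternative to the paper's device of building the supermartingale with $\bar{\mathbf{H}}_t=\mathbf{H}_t-\lambda(t)\mathbf{I}$ and checking $\|\xi_0\|_2\le 1/2$.

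The gap is in your step (ii). You write the intermediate bound in the sub-Gaussian form $\|S_t\|^2_{(\mathbf{H}_t+\lambda(t)\mathbf{I})^{-1}}\lesssim \log\bigl(\det(\cdot)^{1/2}\lambda(t)^{-d/2}/\delta\bigr)$, i.e.\ the \emph{squared} norm bounded by a single logarithm. That is not what the available machinery delivers here, and if it were, taking square roots in step (iv) would give $\gamma_t(\delta)=\mathcal{O}(\sqrt{\log(1/\delta)})$ rather than the stated $\frac{4}{\sqrt{\lambda(t)}}\log(\cdot)$ --- which would contradict the paper's own discussion of why its sample complexity carries an extra $\log(1/\delta)$ factor. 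The obstruction is that $\varepsilon_s$ is bounded but \emph{not} conditionally sub-Gaussian with proxy $\sigma_s^2=\dot{\mu}(x_s^{\top}\theta_*)$ (the variance can be exponentially small while the increments stay of order one), so the exponential supermartingale $\exp(\xi^{\top}S_t-\|\xi\|^2_{\mathbf{H}_t})$ only holds on the restricted set $\|\xi\|_2\le 1$. This forces the mixture to be a Gaussian truncated to the unit ball, and evaluating the resulting lower bound at a feasible $\xi_0$ of norm $\mathcal{O}(\sqrt{\lambda(t)}/\lambda_{\min}(\mathbf{H}_t)^{1/2})$ is precisely what produces the $\frac{\sqrt{\lambda(t)}}{2}$ offset and the $\frac{4}{\sqrt{\lambda(t)}}\log(\cdot)$ scaling; the $2^d$ comes from the ratio of normalizing constants of the two truncated Gaussians, not from a $\mathbf{H}_t\preceq 2\mathbf{H}_t$ inflation. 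So the concentration step cannot be treated as an off-the-shelf Freedman/method-of-mixtures citation: it is exactly the content of Faury et al.'s Theorem~1 (reproved as Lemma~\ref{lemma:concentration_2} here), and reproducing the stated $\gamma_t(\delta)$ requires carrying out that truncated-mixture argument rather than the algebra you sketch in step (iv).
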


The main difference compare to the original inequality is the assumption $\lambda_{\min }(\mathbf{H}_t\left(\theta_*\right)) > \lambda (t)$. In our case this will be guaranteed by the forced exploration component of sampling rule and the knowledge of $\kappa_0$. The forced exploration can be thought as an adaptive warm-up phase. Moreover, the amount of forced exploration needed to guarantee $\delta$-correctness of the algorithm will depend on $\lambda (t)$. As \citet{faury2020improved} pointed out, we can use the bound in \citet{abbasi2011improved} to derive another high-probability bound

\begin{align}\label{eq:cont_inequality_linear}
\left\|g_t(\hat{\theta}_t)-g_t\left(\theta_*\right)\right\|_{\mathbf{H}_t^{-1}} = \mathcal{O}(\sqrt{ \kappa d \log \left(t/\delta\right)}) \ .
\end{align}

Although the bound in Eq. \eqref{eq:cont_inequality_gamma} is independent of $\kappa$, it has a disadvantage compare to Eq. \eqref{eq:cont_inequality_linear}. It has an extra factor of $\sqrt{ d \log \left(t/\delta\right)}$. Unfortunately, this impacts the asymptotic behavior of the algorithm, the upper bound will be proportional to $(\log (1/\delta))^2$ instead of $\log (1/\delta)$ for small values of $\delta$. However, in practice for a fixed $\delta$, if we set $\lambda (t) = \mathcal{O}(\log(t))$ we observe that $\gamma_t (\delta) = \mathcal{O}(\sqrt{d \log \left(t\right)})$. 


\paragraph{Projection step}

We introduce the projection of the MLE estimator $\hat{\theta}_t$ onto $\Theta$ as

\begin{align}
    \theta_t^{(1)}=\underset{\theta \in \Theta}{\arg \min }\left\|g_t(\theta)-g_t(\hat{\theta}_t)\right\|_{\mathbf{H}_t^{-1}(\theta)} .
\end{align}

    
Thanks to the fact that $\theta^{(1)}_t \in  \Theta$, we can establish an upper bound on bound $
\|\theta^{(1)}_t-\theta^*\|_{\mathbf{H}_t(\theta^{(1)}_t)}$, leveraging the self-concordance property of the logistic loss (See Lemma \ref{lemma:selfconcordance}). Note that $\hat{\theta}_t = \theta_t^{(1)}$ when $\|\hat{\theta}_t\| \le S$.

\section{Sample Complexity Lower Bound}\label{section:lbsc}

In this section, we provide an instance-specific lower bound for the expected sample complexity in \textit{general} pure exploration problems within the logistic bandits setting. \citet{jun2021improved} presented a similar lower bound only for BAI under logistic bandits. In contrast, our bound applies to a broader class of pure exploration problems. Moreover, we derive an approximation of the lower bound using the Taylor expansion of the KL divergence, making it tractable for some pure exploration problems such as BAI, TBP and top-m best arm identification (See Section \ref{section:experiments} and \ref{section:examples} in the appendix). This approximation provides an optimal proportion of arm draws, which we will track in the sampling rule of Log-TS similarly to previous track-and-stop algorithms \citep{kaufmann2016bai, jedra2020optbailinear}.


\paragraph{Alternative} For any $\theta \in \Theta$ we define the alternative to $i^{\star}(\theta)$, denoted by $\Alt(\theta)$, as the set of parameters where the answer $i^{\star}(\theta)$ is not correct. Formally, $\operatorname{Alt}(\theta):=\left\{\lambda \in \Theta: i^{\star}(\lambda) \neq i^{\star}(\theta)\right\}$.

\begin{theorem}\label{thm:thmsclb}
For any logistic bandit environment $(\mathcal{X}, \theta)$ and $\delta > 0$, the sample complexity $\tau_{\delta}$ of any $\delta$-correct strategy satisfies:
$$
\mathbb{E}_{\theta}[\tau_{\delta}] \geq \log (1 / 2.4 \delta) \frac{1}{T^{\star}(\theta)^{-1} + C(\theta)},
$$

Where $T^{\star}(\theta)^{-1}:=\max _{w \in \Sigma} \inf _{\lambda \in \operatorname{Alt}(\theta)} \frac{1}{2}\|\theta-\lambda\|_{\mathbf{H}_w(\theta)}^2$ and $C(\theta)$ is an instance-specific constant measuring the precision of the quadratic approximation to the \text{KL} divergence.

\end{theorem}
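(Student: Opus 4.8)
The plan is to follow the standard change-of-measure lower bound argument pioneered by \citet{kaufmann2016bai}, but with the KL divergence between the two logistic environments replaced by its quadratic approximation, which is where the extra constant $C(\theta)$ enters. First I would invoke the transportation / data-processing inequality: for any $\delta$-correct strategy and any alternative $\lambda \in \Alt(\theta)$, the expected log-likelihood ratio accumulated up to $\tau_\delta$ satisfies
$$
\sum_{x \in \mathcal{X}} \mathbb{E}_\theta[N_{\tau_\delta}(x)] \, \KL\!\big(\mathrm{Ber}(\mu(x^\top\theta)) \,\|\, \mathrm{Ber}(\mu(x^\top\lambda))\big) \;\geq\; \log\!\frac{1}{2.4\,\delta},
$$
using that the event $\{i^\star(\hat\theta_{\tau_\delta}) = i^\star(\theta)\}$ has probability $\geq 1-\delta$ under $\theta$ and $\leq \delta$ under $\lambda$ (since $i^\star(\lambda) \neq i^\star(\theta)$), together with $\mathrm{kl}(1-\delta,\delta) \geq \log(1/2.4\delta)$. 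Writing $w_x = \mathbb{E}_\theta[N_{\tau_\delta}(x)] / \mathbb{E}_\theta[\tau_\delta]$, this reads $\mathbb{E}_\theta[\tau_\delta] \cdot \sum_x w_x \KL(\cdots) \geq \log(1/2.4\delta)$ for every $\lambda \in \Alt(\theta)$, hence
$$
\mathbb{E}_\theta[\tau_\delta] \;\geq\; \log\!\frac{1}{2.4\delta} \cdot \frac{1}{\sup_{w \in \Sigma} \inf_{\lambda \in \Alt(\theta)} \sum_{x} w_x \KL\!\big(\mathrm{Ber}(\mu(x^\top\theta)) \,\|\, \mathrm{Ber}(\mu(x^\top\lambda))\big)}.
$$

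Next I would relate the Bernoulli-KL sum to the quadratic form $\tfrac12\|\theta-\lambda\|_{\mathbf{H}_w(\theta)}^2$. A second-order Taylor expansion of $\eta \mapsto \KL(\mathrm{Ber}(\mu(a))\,\|\,\mathrm{Ber}(\mu(\eta)))$ around $\eta = a$ gives leading term $\tfrac12 \dot\mu(a)(a-\eta)^2$, so $\sum_x w_x \KL(\cdots) = \tfrac12\|\theta-\lambda\|_{\mathbf{H}_w(\theta)}^2 + R(w,\lambda)$ where $R$ is the remainder. I would control $R$ uniformly in $w$ and over $\lambda \in \Alt(\theta) \subseteq \Theta$ (a compact set, since $\|\theta\|,\|\lambda\| \le S$ and $\|x\| \le 1$, so the arguments $x^\top\theta, x^\top\lambda$ lie in a bounded interval on which $\mu$ and its derivatives are analytic and bounded). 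The absolute value of the remainder is bounded by $C(\theta) \cdot \tfrac12\|\theta-\lambda\|_{\mathbf{H}_w(\theta)}^2$ for a suitable instance-dependent constant $C(\theta)$ — this is precisely the definition of $C(\theta)$ as ``measuring the precision of the quadratic approximation.'' Then $\sum_x w_x \KL(\cdots) \le (1+C(\theta)) \tfrac12\|\theta-\lambda\|_{\mathbf{H}_w(\theta)}^2 \le \tfrac12\|\theta-\lambda\|_{\mathbf{H}_w(\theta)}^2 + C(\theta) \cdot \tfrac12\|\theta-\lambda\|^2_{\mathbf{H}_w(\theta)}$; taking $\inf_\lambda$ then $\sup_w$ and bounding the second term by $C(\theta)$-times the relevant quantity yields $\sup_w \inf_\lambda \sum_x w_x \KL(\cdots) \le T^\star(\theta)^{-1} + C(\theta)$, which is exactly what is needed to convert the denominator above into $T^\star(\theta)^{-1} + C(\theta)$ and complete the bound.

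The main obstacle is the uniform control of the Taylor remainder $R(w,\lambda)$ and, more subtly, making sure the minimax structure is respected: the remainder must be bounded by something of the form $C(\theta) \cdot (\text{quadratic form})$ rather than an additive constant, because otherwise one cannot cleanly separate $T^\star(\theta)^{-1}$ from $C(\theta)$ after taking $\inf_\lambda \sup_w$. One has to argue that on the compact parameter set the relative error between $\KL$ and its quadratic surrogate is bounded by a constant depending only on $S$, $\mathcal{X}$ and $\theta$ (via $\kappa_0$-type quantities), uniformly over all directions $\theta - \lambda$; this is where self-concordance-style estimates for the logistic link (Lemma~\ref{lemma:selfconcordance}-type bounds) do the work. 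A secondary technical point is handling the case $\mathbb{E}_\theta[\tau_\delta] = \infty$ (trivial) and ensuring $w \in \Sigma$ is well-defined even if some arms are never pulled (set the corresponding $w_x = 0$; the inequality is unaffected). Once these pieces are in place the result follows by stringing the displayed inequalities together.
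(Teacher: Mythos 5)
Your proposal follows the same two-step skeleton as the paper's proof — the transportation/change-of-measure inequality $\sum_x \mathbb{E}_\theta[N_x]\,\mathrm{KL}_x(\theta,\lambda) \ge \log(1/2.4\delta)$ followed by a second-order Taylor expansion of the Bernoulli KL in the natural parameter, whose Hessian produces exactly $\mathbf{H}_w(\theta)$ — so the core argument is identical. Where you diverge is in how the Taylor remainder is packaged. The paper keeps it additive, writing $\mathrm{KL}_x(\theta,\lambda)=\tfrac12\dot\mu(x^\top\theta)(x^\top(\theta-\lambda))^2+R_x(\lambda)$, bounding $\sum_x w_x R_x(\lambda)\le\max_x|R_x(\lambda)|$, and defining $C(\theta)=\inf_{\lambda\in\text{Alt}(\theta)}\max_x|R_x(\lambda)|$; this makes the remark that $C(\theta)\approx 0$ whenever some alternative lies close to $\theta$ immediate, but it requires splitting $\sup_w\inf_\lambda(f+g)$ into $\sup_w\inf_\lambda f+\inf_\lambda g$, a step whose inequality direction is delicate. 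You instead bound the remainder multiplicatively, $|R|\le C'(\theta)\cdot\tfrac12\|\theta-\lambda\|^2_{\mathbf{H}_w(\theta)}$ — justified, as you note, by self-concordance ($|\ddot\mu|\le\dot\mu$) and compactness of $\Theta$ — which makes the pointwise domination survive $\inf_\lambda$ and then $\sup_w$ with no minimax subtlety; this is arguably the cleaner route, and you correctly identify it as the reason an additive constant would be harder to push through. The one loose end in your write-up is the final algebra: your bound gives $\sup_w\inf_\lambda\sum_x w_x\mathrm{KL}_x \le (1+C'(\theta))\,T^\star(\theta)^{-1}$, which equals $T^\star(\theta)^{-1}+C(\theta)$ only after setting $C(\theta):=C'(\theta)\,T^\star(\theta)^{-1}$; as written the sentence claiming the additive form is a slip, though a harmless one since the theorem leaves $C(\theta)$ unspecified beyond "a constant measuring the precision of the quadratic approximation." With that redefinition made explicit, your argument is complete and, if anything, tightens the step the paper handles most loosely.
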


\begin{proof}
Let $\lambda \in \Alt(\theta)$, 
we can apply a slightly modified version of Theorem 33.5 from \citet{lattimore2020bandit} which has origin in the transportation theorem \citep{kaufmann2016bai} to show.
\begin{align}\label{eq:trans_lemma}
  \sum_{x \in \mathcal{X}} \mathbb{E}_{\theta}\left[N_{x}\right] \text{KL}_x\left( \theta, \lambda\right) \geq \log (1 / 2.4 \delta)  
\end{align}


Now, for each $x \in \mathcal{X}$ we can approximate the KL divergence with the second order Taylor expansion
$$
\begin{aligned}\label{eq:kl_approx}
\text{KL}_x\left( \theta,\lambda\right) &\approx \text{KL}_x\left( \theta,\theta\right) + (\theta - \lambda)^{\top}
\nabla \left . \text{KL}_x\left(\theta, \cdot\right)\right|_{\theta}\\
& + \frac{1}{2}(\theta - \lambda)^{\top} \left .\textbf{H}_{\text{KL}}\left(\theta,\cdot\right)\right|_{\theta} (\theta - \lambda) + R_x(\lambda)\\
&\approx \frac{1}{2} (\theta - \lambda)^{\top} \dot{\mu} (x^{\top} \theta) x x^{\top} (\theta - \lambda) + R_x(\lambda)
\end{aligned}
$$

Where $\text{KL}_x\left( \theta,\lambda\right)$ stands for the KL divergence between $r | x, \theta$ and $r | x, \lambda$,  $\nabla \left . \text{KL}_x\left(\theta, \cdot\right)\right|_{\theta}$ and $\left .\textbf{H}_{\text{KL}}\left(\theta,\cdot\right)\right|_{\theta}$ are the gradient and Hessian matrix of $\text{KL}_x\left( \theta,\lambda\right)$ with respect to $\lambda$ evaluated in $\theta$. We use the fact that the KL divergence between two distributions from the same exponential family can be expressed as $\text{KL}(\eta_1, \eta_2) = (\eta_1 - \eta_2) \mu_1 + A(\eta_1) - A(\eta_2)$. Where their probability distribution is given by $
p(x \mid \eta)=h(x) \exp \left\{\eta^T T(x)-A(\eta)\right\}
$. In particular, for two GLM models with parameters $\theta$ and $\lambda$ and link function $\mu(\theta)$, we have $\text{KL}_x(\theta, \lambda) = x^{\top}(\theta - \lambda) \mu (x^{\top}\theta) + A(x^{\top}\theta) - A(x^{\top}\lambda)$, $\nabla \text{KL}_x\left(\theta, \lambda\right) = x^{\top} (\mu (x^{\top}\lambda) -\mu (x^{\top}\theta))$ and $\textbf{H}_{\text{KL}}\left(\theta,\lambda\right) = \dot{\mu}(x^{\top}\lambda) xx^{\top}$ . Then, after substituting the KL approximation  in Eq. \eqref{eq:trans_lemma} we obtain
$$ \mathbb{E}_{\theta}[\tau_{\delta}] \left(\frac{1}{2}\|\theta - \lambda\|^2_{\mathbf{H}_{\tau_{\delta}}(\theta)} + \max _{x \in \mathcal{X}} |R_x(\lambda)|\right)  \geq \log (1 / 2.4 \delta)$$
Where $\mathbf{H}_{\tau_{\delta}}(\theta) = \left(
\sum_{x \in \mathcal{X}} \frac{\mathbb{E}_{\theta}\left[N_{x}\right] }{\mathbb{E}_{\theta}[\tau_{\delta}]} \dot{\mu} (x^{\top} \theta) x x^{\top} \right)$, then
$$
\begin{aligned}
     \mathbb{E}_{\theta}[\tau_{\delta}] \sup _{w \in \Sigma}\inf _{\lambda \in \operatorname{Alt}(\theta) }&\left(\frac{1}{2}\|\theta - \lambda\|^2_{\mathbf{H}_w(\theta)}\right. \\
     &\left. + \max _{x \in \mathcal{X}} |R_x(\lambda)|\right)  \geq \log (1 / 2.4 \delta)
\end{aligned}
$$
$$
\begin{aligned}
    \mathbb{E}_{\theta}[\tau_{\delta}] & \left( \sup _{w \in \Sigma} \inf _{\lambda \in \operatorname{Alt}(\theta)} \frac{1}{2}\|\theta-\lambda\|_{\mathbf{H}_w(\theta)}^2 \right . \\
    & \left . + \inf _{\lambda \in \operatorname{Alt}(\theta) } \max _{x \in \mathcal{X}} |R_x(\lambda)| \right)  \geq \log (1 / 2.4 \delta)    
\end{aligned}
$$

If we denote $C(\theta) = \inf _{\lambda \in \operatorname{Alt}(\theta) } \max _{x \in \mathcal{X}} |R_x(\lambda)|$ we conclude the proof.

\end{proof}

As noted previously, \citet{jun2021improved} showed an alternative lower bound for BAI under logistic bandits. 
$$
\mathbb{E}_{\theta}[\tau_{\delta}] \geq \log (1 / 2.4 \delta) \frac{1}{\max _{w \in \Sigma} \inf _{\lambda \in \operatorname{Alt}(\theta)}\left\|\theta-\lambda\right\|_{\mathbf{K}_w(\theta, \lambda)}^2},
$$

Where 
 $\mathbf{K}_w(\theta_1, \theta_2)=\sum_{x \in \mathcal{X}} w_x \beta\left(\theta_1, \theta_2\right) x x^{\top}$ and $\beta(a, b)=\int_0^1(1-t) \dot{\mu}(a+t(b-a)) d t$. The main difference between our bounds is that ours uses a quadratic approximation for the KL divergence, allowing us to establish a direct relationship with the Fisher information matrix $\mathbf{H}$. If the approximation of the KL divergence is accurate for $(\mathcal{X},\theta)$, we expect that $T^{\star}(\theta)^{-1} + C(\theta)\approx \max _{w \in \Sigma} \inf _{\lambda \in \operatorname{Alt}(\theta)}\left\|\theta-\lambda\right\|_{\mathbf{K}_w(\theta, \lambda)}^2$ and then

$$
\begin{aligned}
    \max _{w \in \Sigma} \inf _{\lambda \in \operatorname{Alt}(\theta)} \frac{1}{2} \|\theta-\lambda\|_{\mathbf{H}_w}^2 \le \max _{w \in \Sigma} \inf _{\lambda \in \operatorname{Alt}(\theta)}\left\|\theta-\lambda\right\|_{\mathbf{K}_w}^2
\end{aligned}
$$

where $\mathbf{H}_w = \mathbf{H}_w (\theta)$ and $\mathbf{K}_w = \mathbf{K}_w (\theta, \lambda)$. When the constant $C(\theta)$ is negligible, our lower bound closely resembles the lower bound for the linear case \citep{soare2014bailinear,degenne2020pureexp}, as the constant $C(\theta) = 0$ in this case. Our goal is to design an algorithm with the following asymptotic sample complexity:

$$
\limsup _{\delta \rightarrow 0} \frac{\mathbb{E}_{\theta}[\tau]}{(\log (1 / \delta))^2} \leq T^{\star}(\theta),
$$

\paragraph{Remark.} We may ask when the assumption $C(\theta) \approx 0$ is reasonable. From the proof of Theorem \ref{thm:thmsclb} we have $C(\theta) = \inf _{\lambda \in \operatorname{Alt}(\theta) } \max _{x \in \mathcal{X}} |R_x(\lambda)|$. From this definition, we can see that if there exists a parameter $\lambda \in \Alt (\theta)$ that allows a good approximation of the KL divergence, then $C(\theta)$ will be close to 0. This occurs when there is a $\lambda \in \Alt(\theta)$ sufficiently close to $\theta$, which implies that $T^{\star}(\theta)^{-1}$ will be small as well.  

\section{Logistic Track-and-Stop Algorithm}\label{section:algorithm}

In this section, we present the first track-and-stop type  algorithm for general pure exploration under logistic bandits. First, we propose a modified version of the classical Chernoff stopping rule \citep{kaufmann2016bai}, which utilizes the approximation of the generalized likelihood ratio from Theorem \ref{thm:thmsclb}. We prove that this stopping rule provides a $\delta$-correct algorithm under any sampling rule. 

Next, we define the standard tracking rule with forced exploration \citep{jedra2020optbailinear}. This component of the algorithm tracks the estimated optimal proportion of arm pulls based on the projection of the MLE onto $\Theta$. Thanks to forced exploration, we can ensure that the MLE, and consequently its projection, converge almost surely to $\theta^*$. As a result, the estimated optimal proportion of arm draws converges to a true optimal proportion due to the continuity of the function defined by the optimization problem.

\subsection{Stopping rule}\label{subsection:stopping_rule}

We will use the approximation of the generalized likelihood ratio given in Theorem \ref{thm:thmsclb}. We define the stopping rule as
\begin{align}
    \tau_{\delta} &= \inf \left\{ t \geq 1 : Z(t) > \beta(\delta, t), t \in B \right\}
\end{align}
Where $B =  \{t \ge 1: \lambda_{\min }(\mathbf{A}_s) > \kappa_0 \lambda(s), \forall s \ge t\} $, $Z(t) = \inf _{\lambda \in \Alt(\theta^{(1)}_t)} \frac{1}{2}\|\theta^{(1)}_t-\lambda\|_{H_t(\theta^{(1)}_t)}^2$ and $\beta(\delta, t) = 2((1+2S)\gamma_t(\delta))^2$. Then, by using this stopping rule, we obtain a $\delta$-correct algorithm.

\begin{lemma}\label{lemma:delta_correct}
    Under any sampling rule, we have
    $$\mathbb{P}_\theta\left(\tau_\delta<\infty \wedge i^{\star}(\theta^{(1)}_{\tau_\delta}) \neq i^{\star}(\theta)\right) \leq \delta$$
\end{lemma}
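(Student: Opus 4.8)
## Proof Proposal for Lemma \ref{lemma:delta_correct}

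The plan is to bound the error probability by relating the stopping condition $Z(t) > \beta(\delta,t)$ to the concentration inequality of Lemma \ref{lemma:concentration}. The key observation is that if the algorithm stops at time $t$ and makes a mistake, i.e. $i^\star(\theta^{(1)}_t) \ne i^\star(\theta)$, then $\theta$ itself belongs to $\Alt(\theta^{(1)}_t)$. Consequently, by definition of $Z(t)$ as an infimum over $\Alt(\theta^{(1)}_t)$, we get $Z(t) \le \frac{1}{2}\|\theta^{(1)}_t - \theta\|^2_{\mathbf{H}_t(\theta^{(1)}_t)}$. So on the mistake event, $\beta(\delta,t) < Z(t) \le \frac{1}{2}\|\theta^{(1)}_t - \theta^*\|^2_{\mathbf{H}_t(\theta^{(1)}_t)}$.

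Next I would show this forces the concentration event of Lemma \ref{lemma:concentration} to fail. First, since stopping requires $t \in B$, we have $\lambda_{\min}(\mathbf{A}_s) > \kappa_0 \lambda(s)$ for all $s \ge t$; because $\dot\mu(x^\top\theta^*) \ge 1/\kappa_0$ for every arm, this yields $\mathbf{H}_s(\theta^*) \succeq \frac{1}{\kappa_0}\mathbf{A}_s \succ \lambda(s)\mathbf{I}$, so the hypothesis $\lambda_{\min}(\mathbf{H}_s(\theta^*)) > \lambda(s)$ of Lemma \ref{lemma:concentration} holds (with $t_0 = t$, though more carefully one wants this uniformly — see below). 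Then I would use the self-concordance machinery (Lemma \ref{lemma:selfconcordance}) and the definition of the projection $\theta^{(1)}_t$: the projection is chosen precisely to make $\|g_t(\theta^{(1)}_t) - g_t(\hat\theta_t)\|_{\mathbf{H}_t^{-1}(\theta^{(1)}_t)}$ small (at least as small as the value at $\theta^*$, since $\theta^* \in \Theta$), and self-concordance converts a bound on $\|g_t(\theta^{(1)}_t) - g_t(\theta^*)\|_{\mathbf{H}_t^{-1}(\theta^{(1)}_t)}$ into a bound on $\|\theta^{(1)}_t - \theta^*\|_{\mathbf{H}_t(\theta^{(1)}_t)}$. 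Chaining: $\|g_t(\theta^{(1)}_t) - g_t(\theta^*)\|_{\mathbf{H}_t^{-1}(\theta^{(1)}_t)} \le \|g_t(\theta^{(1)}_t) - g_t(\hat\theta_t)\|_{\mathbf{H}_t^{-1}(\theta^{(1)}_t)} + \|g_t(\hat\theta_t) - g_t(\theta^*)\|_{\mathbf{H}_t^{-1}(\theta^{(1)}_t)}$, and both terms are controlled by $\gamma_t(\delta)$ on the good event (the second using $\mathbf{H}_t(\theta^{(1)}_t) \succeq c\,\mathbf{H}_t(\theta^*)$ for an explicit self-concordance constant $c$, or absorbing the comparison into the $(1+2S)$ factor). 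The factor $(1+2S)$ in $\beta(\delta,t) = 2((1+2S)\gamma_t(\delta))^2$ is exactly what accounts for the self-concordance distortion between the norms at $\theta^{(1)}_t$ and at $\theta^*$, given $\|\theta^{(1)}_t\|, \|\theta^*\| \le S$. This gives $\frac{1}{2}\|\theta^{(1)}_t - \theta^*\|^2_{\mathbf{H}_t(\theta^{(1)}_t)} \le 2((1+2S)\gamma_t(\delta))^2 = \beta(\delta,t)$ on the concentration event, contradicting $\beta(\delta,t) < Z(t) \le \frac{1}{2}\|\theta^{(1)}_t-\theta^*\|^2_{\mathbf{H}_t(\theta^{(1)}_t)}$.

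Therefore the mistake-and-stop event is contained in the complement of the event in \eqref{eq:cont_inequality_gamma}, whose probability is at most $\delta$ by Lemma \ref{lemma:concentration}. Formally: $\mathbb{P}_\theta(\tau_\delta < \infty \wedge i^\star(\theta^{(1)}_{\tau_\delta}) \ne i^\star(\theta)) \le \mathbb{P}_\theta(\exists t \in B : \|g_t(\hat\theta_t) - g_t(\theta^*)\|_{\mathbf{H}_t^{-1}(\theta^*)} > \gamma_t(\delta)) \le \delta$, where the last step applies Lemma \ref{lemma:concentration} with $t_0$ the (random but a.s. finite, or bounded by definition of $B$) first time the eigenvalue condition holds. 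One subtlety to handle carefully is that Lemma \ref{lemma:concentration} requires the eigenvalue lower bound to hold for \emph{all} $t \ge t_0$ with a \emph{deterministic} schedule $\lambda(t)$; the set $B$ is defined precisely so that on $\{t \in B\}$ this uniform condition is met from time $t$ onward, so one invokes the concentration lemma on the event $B \ne \emptyset$ and unions appropriately.

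The main obstacle I expect is the self-concordance bookkeeping in the second paragraph: getting the norm comparison $\|\cdot\|_{\mathbf{H}_t(\theta^{(1)}_t)}$ versus $\|\cdot\|_{\mathbf{H}_t(\theta^*)}$ to come out with exactly the constant $(1+2S)$ and verifying that the projection step's optimality genuinely delivers $\|g_t(\theta^{(1)}_t) - g_t(\hat\theta_t)\|_{\mathbf{H}_t^{-1}(\theta^{(1)}_t)} \le \|g_t(\theta^*) - g_t(\hat\theta_t)\|_{\mathbf{H}_t^{-1}(\theta^*)}$ or a close variant — the norm and the point of evaluation both change under the argmin, so this requires the self-concordant relative-Hessian bounds rather than a one-line convexity argument. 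Everything else (the $\theta \in \Alt(\theta^{(1)}_t)$ reduction, the eigenvalue computation via $\kappa_0$, the final union bound) is routine.
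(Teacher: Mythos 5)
Your proposal is correct and follows essentially the same route as the paper: reduce the error event to $\theta \in \Alt(\theta^{(1)}_{\tau_\delta})$ so that $Z(t) \le \frac{1}{2}\|\theta^{(1)}_t-\theta\|^2_{\mathbf{H}_t(\theta^{(1)}_t)}$, convert that quantity via the self-concordance bound $\|\theta^{(1)}_t-\theta^*\|_{\mathbf{H}_t(\theta^{(1)}_t)} \le 2(1+2S)\|g_t(\hat\theta_t)-g_t(\theta^*)\|_{\mathbf{H}_t^{-1}(\theta^*)}$ (Lemma~\ref{lemma:selfconcordance}, whose proof is exactly the triangle-inequality-plus-projection-optimality argument you flag as the main obstacle), and conclude with Lemma~\ref{lemma:concentration} using $t\in B$ and $\kappa_0$ for the eigenvalue condition. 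Your constant accounting ($\frac{1}{2}(2(1+2S)\gamma_t(\delta))^2 = \beta(\delta,t)$) matches the paper's definition of $\beta$ exactly.
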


Although Lemma \ref{lemma:concentration} requires $\lambda(\mathbf{H}_t\left(\theta_*\right)) > \lambda (t)$, it is not possible to guarantee this directly since we do not have access to the true matrix $\mathbf{H}_t\left(\theta_*\right)$. Instead, it is enough to ask $t \in B$, thanks to the fact that $\lambda_{\min}(\mathbf{H}_t\left(\theta_*\right)) > \frac{1}{\kappa_0} \lambda_{\min}(\mathbf{A}_t)$. We will see that the condition $t \in B$ is easily met due to forced exploration. In fact, from Lemma \ref{lemma:forced_expl} it is sufficient that $ c_{\mathcal{X}_0}\sqrt{t} > \kappa_0 \lambda (t)$ for some constant $c_{\mathcal{X}_0}$. Although this condition may seem easy to satisfy, it will depend on the relationship between $c_{\mathcal{X}_0}$ and $\kappa_0$. If the problem is highly complex or the set $\mathcal{X}_0$ is flat in some direction, this condition can be restrictive. 

In practice, instead of checking $t \in B$, we can verify if $\lambda_{\min }(\mathbf{H}_t(\hat{\theta})) > \lambda(t)$, which can result in an earlier stopping rule. Another parameter we can adjust is the function $\lambda (t)$. It is sufficient that exists $t^*\ge1$ such that $\lambda (t)$ satisfies $c_{\mathcal{X}_0}\sqrt{t} > \kappa_0 \lambda(t)$ for all $t \geq t^*$. This introduces a trade-off between achieving $Z(t) > \beta(\delta, t)$ and satisfying $c_{\mathcal{X}_0}\sqrt{t} > \kappa_0 \lambda(t)$, reflecting the amount of forced exploration required. 

One technical challenge in using the concentration inequality from Section \ref{subsection:consentration} is that $\mathbf{H}_t(\theta^*)$ requires knowledge of the true parameter. We address this issue by leveraging the generalized self-concordance property of the logistic loss, as noted by \citet{faury2020improved}. This property allows us to control the distance between $\theta^{(1)}_t$ and $\theta^*$
\begin{align}
    \|\theta^{(1)}_t-\theta^*\|_{\mathbf{H}_t^{(1)}} \le 2(1+2S) \|g_t(\hat{\theta}_t)-g_t(\theta^*)\|_{(\mathbf{H}^*_t)^{-1}}
\end{align}
where $\mathbf{H}_t^{(1)} = \mathbf{H}_t(\theta^{(1)}_{t})$ and $\mathbf{H}^*_t= \mathbf{H}_t(\theta^*)$. In \citet{jun2021improved}, the authors also used the self-concordance property to control $\mathbf{H}_t(\hat{\theta}_t)$ using a warm-up phase to control the linear prediction errors $\max _{s \in[t]}|x_s^{\top}(\hat{\theta}_t-\theta^*)| \leq 1$ with the advantage that they do not need the assumption $\|\theta^*\|<S$. 









\subsection{Sampling rule}\label{subsection:tracking}

\paragraph{Forced exploration}

Forced exploration is a crucial component of the Log-TS algorithm, as it allows us to apply Lemma \ref{lemma:concentration} and guarantees the convergence of the MLE estimator. A sampling rule from the family defined in Lemma \ref{lemma:forced_expl} is forced to explore an arm in $\mathcal{X}_0$ (in a round robin manner) if $\lambda_{\min }\left(\sum_{s=1}^t x_s x_s^{\top}\right)$ is too small \citep{jedra2020optbailinear}.

\begin{lemma}\label{lemma:forced_expl}
(Lemma 5 \citep{jedra2020optbailinear})
    Let $\mathcal{X}_0 =\{x_0(1), \dots, x_0(d)\} : \lambda_{\text{min}}(\sum_{x \in \mathcal{X}_0} xx^{\top}) > 0.$ Let $\{b_t\}_{t \geq 0} $ be an arbitrary sequence of arms. Furthermore, define for all $t \geq 1$, $f(t) = c_{\mathcal{X}_0}\sqrt{t}$ where $c_{\mathcal{X}_0} = \frac{1}{\sqrt{d}} \lambda_{\text{min}}(\sum_{x \in \mathcal{X}_0} xx^{\top})$. 
    Consider the rule, defined recursively as: $i_0 = 1$, and for $t \geq 0$, $i_{t+1} = (i_t \mod d) + \mathbbm{1}_{\left\{\lambda_{\min }\left(\sum_{s=1}^t x_s x_s^{\top}\right)<f(t)\right\}}$ and

\begin{align}\label{eq:forced_expl}
    x_{t+1}= \begin{cases}x_0\left(i_t\right) & \text { if } \lambda_{\min }\left(\sum_{s=1}^t x_s x_s^{\top}\right)<f(t) \\ b_t & \text { otherwise. }\end{cases}    
\end{align}

Then for all $t \geq \frac{5 d}{4}+\frac{1}{4 d}+\frac{3}{2}$, we have 

$$\lambda_{\min }\left(\sum_{s=1}^t x_s x_s^{\top}\right) \geq f(t-d-1)$$ 

\end{lemma}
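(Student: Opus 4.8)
The plan is to reproduce the argument of Lemma 5 in \citet{jedra2020optbailinear}, which is essentially a bookkeeping argument combined with a matrix monotonicity estimate. The key quantity to control is $\lambda_{\min}(\bA_t)$ where $\bA_t = \sum_{s=1}^t x_s x_s^\top$, and the driving observation is that the forced-exploration mechanism is designed precisely so that whenever $\lambda_{\min}(\bA_t) < f(t)$ the algorithm inserts an arm from $\cX_0$ according to the round-robin counter $i_t$. So I would first argue that $\cX_0$ cannot be "starved": if the condition $\lambda_{\min}(\bA_s) < f(s)$ holds at enough times, then because $\cX_0$ is cycled through in round-robin order, within any window of $d$ consecutive forced steps we complete a full sweep of $\cX_0$, and hence $\sum_{x \in \cX_0} x x^\top$ is added in full to the design matrix.

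Next I would set up the counting. Let $n(t)$ be the number of indices $s \le t$ at which forced exploration was triggered (i.e. $\lambda_{\min}(\bA_{s-1}) < f(s-1)$, with the appropriate index convention). The core dichotomy is: either $\lambda_{\min}(\bA_t) \ge f(t)$ already, in which case we need the slightly weaker statement $\lambda_{\min}(\bA_t) \ge f(t-d-1)$ that appears in the lemma (true since $f$ is nondecreasing), or $\lambda_{\min}(\bA_t) < f(t)$, in which case I must show $n(t)$ is large — because forced exploration keeps getting triggered — and then convert a lower bound on $n(t)$ into a lower bound on $\lambda_{\min}(\bA_t)$. For the conversion I would use that after $k$ complete round-robin sweeps of $\cX_0$ one has $\bA_t \succeq k \sum_{x\in\cX_0} x x^\top$, so $\lambda_{\min}(\bA_t) \ge k\,\lambda_{\min}(\sum_{x \in \cX_0} x x^\top) = k\sqrt{d}\, c_{\cX_0}$; with $k \approx n(t)/d$ forced steps contributing, this gives $\lambda_{\min}(\bA_t) \gtrsim (n(t)/d)\sqrt{d}\,c_{\cX_0} = c_{\cX_0} n(t)/\sqrt{d}$ up to an additive $O(1)$ loss from the incomplete final sweep.

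The remaining step is the self-consistency bound on $n(t)$: if forced exploration was still active at time $t$, every time the trigger fired it was because $\lambda_{\min}$ had fallen below $f(\cdot)$, and between triggers $\lambda_{\min}$ can only grow; the point is that the process cannot have "escaped" the forced-exploration regime, so essentially all but $O(d)$ of the steps up to $t$ were forced, giving $n(t) \ge t - O(d)$. Plugging $n(t) \gtrsim t$ back into the previous estimate yields $\lambda_{\min}(\bA_t) \gtrsim c_{\cX_0}\sqrt{t}\cdot(1/\sqrt d)\cdot\sqrt d$... here I have to be careful with constants: the claimed bound is exactly $f(t-d-1) = c_{\cX_0}\sqrt{t-d-1}$, so the $O(d)$ slack in $n(t)$ and the one-incomplete-sweep slack must be absorbed into the shift $t \mapsto t-d-1$ and the threshold $t \ge \frac{5d}{4} + \frac{1}{4d} + \frac{3}{2}$ (which is precisely where that peculiar constant comes from — it is the regime where $\sqrt{t-d-1}$ dominates the accumulated $O(d)$ error terms). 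I expect the main obstacle to be exactly this constant-chasing: tracking the off-by-one index conventions in the recursion for $i_t$ and $x_{t+1}$, correctly accounting for the at-most-one partial round-robin sweep, and verifying that the stated lower bound on $t$ is what makes the inequality $\lambda_{\min}(\bA_t) \ge f(t-d-1)$ come out cleanly rather than with a worse additive constant. Since this is verbatim Lemma 5 of \citet{jedra2020optbailinear}, I would either cite it directly or transcribe their bookkeeping, and the real work is just confirming the hypotheses ($\cX_0$ spans, so $\lambda_{\min}(\sum_{x\in\cX_0}xx^\top) > 0$) are in force in our setting, which they are by the assumption that $\cX$ spans $\bbR^d$.
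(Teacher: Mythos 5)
The paper does not actually prove this lemma: it is imported verbatim as Lemma 5 of \citet{jedra2020optbailinear}, so there is no in-paper argument to compare against, and your closing remark that the real work is confirming the hypotheses (here, that $\mathcal{X}$ spans $\mathbb{R}^d$ so a suitable $\mathcal{X}_0$ exists) matches what the authors actually do. Your reconstruction of the underlying proof has the right architecture — the dichotomy on whether forcing is active, the observation that $d$ consecutive forced rounds constitute one full sweep of $\mathcal{X}_0$ so that $\mathbf{A}_t \succeq \lfloor n(t)/d\rfloor \sum_{x\in\mathcal{X}_0}xx^{\top}$, and the constant-chasing absorbed into the shift $t\mapsto t-d-1$ — but one step is genuinely wrong.

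The flawed step is the ``self-consistency bound'' $n(t)\ge t-O(d)$. The process can leave and re-enter the forced-exploration regime arbitrarily many times: whenever a sweep pushes $\lambda_{\min}(\mathbf{A}_s)$ strictly above $f(s)$, the algorithm plays $b_t$ until the slowly growing threshold $f(t)=c_{\mathcal{X}_0}\sqrt{t}$ catches up again, and if $\lambda_{\min}(\mathbf{A}_s)$ overshoots $f(s)$ by a constant factor this produces $\Theta(t)$ consecutive non-forced rounds. So $\lambda_{\min}(\mathbf{A}_t)<f(t)$ at the current time does not imply that almost all earlier rounds were forced. A telltale sign is that your conclusion $\lambda_{\min}(\mathbf{A}_t)\gtrsim c_{\mathcal{X}_0}\,t/\sqrt{d}$ is linear in $t$, far stronger than the $\sqrt{t}$ rate the lemma asserts — and it is false in general, since the forcing mechanism shuts off precisely once $\lambda_{\min}$ reaches $c_{\mathcal{X}_0}\sqrt{t}$, so linear growth cannot be guaranteed. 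The correct bookkeeping conditions on the \emph{last} non-forced round $\sigma=\max\{s\le t-1:\lambda_{\min}(\mathbf{A}_s)\ge f(s)\}$: if $\sigma\ge t-d-1$, then $\lambda_{\min}(\mathbf{A}_t)\ge\lambda_{\min}(\mathbf{A}_\sigma)\ge f(\sigma)\ge f(t-d-1)$ by monotonicity of $\mathbf{A}$ and of $f$; otherwise every round in $\{\sigma+1,\dots,t\}$ is forced, contributing at least $\lfloor(t-\sigma)/d\rfloor$ full sweeps, and one verifies that $c_{\mathcal{X}_0}\sqrt{\sigma}+\lfloor(t-\sigma)/d\rfloor\sqrt{d}\,c_{\mathcal{X}_0}\ge c_{\mathcal{X}_0}\sqrt{t-d-1}$ uniformly over $\sigma$ once $t\ge\frac{5d}{4}+\frac{1}{4d}+\frac{3}{2}$ (the binding case is $\sigma\approx d/4$, which is where that peculiar constant originates). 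Only the $t-\sigma$ rounds after the last escape need to be forced, not $t-O(d)$ of them; with that replacement your outline goes through.
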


To ensure the MLE converges almost surely to the true parameter $\theta^*$, the ratio between the minimum eigenvalue and the logarithm of the maximum eigenvalue of the matrix $\sum_{s = 1}^t x_sx_s^{\top}$ must tend to infinity \citep{chen1999strong}. This convergence can be guaranteed by the forced exploration component.

\begin{lemma}\label{lemma:strong_cons}
    Under the forced exploration sampling defined in Eq. \eqref{eq:forced_expl}, the MLE estimator converge a.s. to the true parameter

    $$ \lim _{t \rightarrow \infty} \hat{\theta}_t \stackrel{a.s.}{=} \theta^*
    $$

\end{lemma}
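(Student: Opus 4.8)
The plan is to verify the sufficient condition for strong consistency of the MLE in generalized linear models established by \citet{chen1999strong}: if $\lambda_{\min}(\mathbf{A}_t) \to \infty$ and $\log \lambda_{\max}(\mathbf{A}_t) / \lambda_{\min}(\mathbf{A}_t) \to 0$ as $t \to \infty$, then $\hat\theta_t \to \theta^*$ almost surely. So the proof reduces to checking these two eigenvalue conditions for the design matrix $\mathbf{A}_t = \sum_{s=1}^t x_s x_s^\top$ generated under the forced-exploration rule of Eq. \eqref{eq:forced_expl}.

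First I would invoke Lemma \ref{lemma:forced_expl}: for all $t$ large enough, $\lambda_{\min}(\mathbf{A}_t) \ge f(t - d - 1) = c_{\mathcal{X}_0}\sqrt{t - d - 1}$. Since $c_{\mathcal{X}_0} > 0$ (because $\mathcal{X}_0$ spans $\mathbb{R}^d$ by construction, so $\lambda_{\min}(\sum_{x \in \mathcal{X}_0} x x^\top) > 0$), this immediately gives $\lambda_{\min}(\mathbf{A}_t) \to \infty$ at rate $\Omega(\sqrt{t})$. For the second condition, I would bound $\lambda_{\max}(\mathbf{A}_t) \le \operatorname{Tr}(\mathbf{A}_t) = \sum_{s=1}^t \|x_s\|^2 \le t$ using the assumption $\|x\| \le 1$ for all $x \in \mathcal{X}$. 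Hence $\log \lambda_{\max}(\mathbf{A}_t) / \lambda_{\min}(\mathbf{A}_t) \le \log t / (c_{\mathcal{X}_0}\sqrt{t - d - 1}) \to 0$. Combining the two verified conditions with the cited theorem yields $\hat\theta_t \to \theta^*$ a.s.

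There is one subtlety worth addressing carefully: the result of \citet{chen1999strong} is typically stated under the assumption that the true parameter lies in the interior of the relevant parameter region and that the design is not purely deterministic but the ``otherwise'' arms $b_t$ are chosen adaptively. I would note that the rule in Eq. \eqref{eq:forced_expl} places no restriction on $b_t$, so the eigenvalue lower bound holds path-by-path regardless of how the other arms are selected; the consistency theorem then applies conditionally on the entire sequence of chosen arms since the randomness enters only through the Bernoulli rewards, whose conditional distribution given $\mathcal{F}_{t-1}$ is exactly the logistic model at $\theta^*$. I would also remark that the logistic link $\mu(u) = 1/(1+e^{-u})$ is smooth and strictly increasing with $\dot\mu$ bounded away from $0$ on the compact set $\{|u| \le S\}$ (since $\|x\| \le 1$, $\|\theta^*\| \le S$), so the GLM regularity hypotheses of \citet{chen1999strong} are met.

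I expect the main obstacle to be purely expository rather than mathematical: making precise that Lemma \ref{lemma:forced_expl}'s deterministic eigenvalue guarantee interfaces correctly with the stochastic-regression consistency theorem, and confirming that the rate $\lambda_{\min}(\mathbf{A}_t) = \Omega(\sqrt{t})$ comfortably dominates $\log \lambda_{\max}(\mathbf{A}_t) = O(\log t)$. Once those bookkeeping points are settled, the two eigenvalue conditions follow in a line each and the conclusion is immediate from the cited theorem.
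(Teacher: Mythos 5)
Your proposal is correct and follows essentially the same route as the paper: invoke Lemma \ref{lemma:forced_expl} to get $\lambda_{\min}(\mathbf{A}_t)=\Omega(\sqrt{t})$, bound $\lambda_{\max}(\mathbf{A}_t)\le \operatorname{Tr}(\mathbf{A}_t)\le t$, and conclude via the strong-consistency theorem of \citet{chen1999strong}. Your additional remarks on verifying the regularity hypotheses of that theorem are a welcome bit of care the paper's proof leaves implicit, but the argument is the same.
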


Another consequence of forced exploration is the following tail inequality. We can upper bound the probability that the distance between the MLE and the true parameter exceeds $\varepsilon$. This inequality will play an important role to prove the upper bound for the expected value of the sample complexity.

\begin{lemma}\label{lemma:tail_error_th}
    Let $\varepsilon>0$, assume that $\lambda_{\min }\left(\mathbf{A}_t\right) \geq c t^{1/2}$ a.s. for all $t \geq t_0$ for some $t_0 \geq 1$ and for $c > \kappa_0 \lambda_0$. Then
    
    $$
    \forall t \geq t_0 \quad \mathbb{P}\left(\|\theta^{(1)}_t-\theta^*\| \geq \varepsilon\right) \leq c_2 t^{\beta_2} \exp \left(- c_1 t^{\beta_1}\right)
    $$

    Where $c_1, c_2, \beta_1, \beta_2$ are positive constants independent of $\epsilon$ and $t$.
\end{lemma}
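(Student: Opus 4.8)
The plan is to bound $\Pr(\|\theta^{(1)}_t-\theta^*\| \geq \varepsilon)$ by converting the parameter error into a $g_t$-error and then invoking a concentration argument under the good event that the design matrix is well-conditioned. First I would use the self-concordance inequality already recorded in Section \ref{subsection:stopping_rule},
$$
\|\theta^{(1)}_t-\theta^*\|_{\mathbf{H}_t^{(1)}} \le 2(1+2S)\|g_t(\hat{\theta}_t)-g_t(\theta^*)\|_{(\mathbf{H}^*_t)^{-1}},
$$
together with the hypothesis $\lambda_{\min}(\mathbf{A}_t)\geq c t^{1/2}$, which gives $\lambda_{\min}(\mathbf{H}_t(\theta^*)) \ge \frac{1}{\kappa_0}\lambda_{\min}(\mathbf{A}_t) \ge \frac{c}{\kappa_0} t^{1/2}$ (using $\dot\mu(x^\top\theta^*)\ge 1/\kappa_0$). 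Hence $\|\theta^{(1)}_t-\theta^*\|^2 \ge \varepsilon^2$ forces $\|\theta^{(1)}_t-\theta^*\|_{\mathbf{H}_t^{(1)}}^2 \ge \varepsilon^2 \lambda_{\min}(\mathbf{H}_t^{(1)})$; a lower bound on $\lambda_{\min}(\mathbf{H}_t^{(1)})$ of the same order $\gtrsim t^{1/2}$ follows because $\theta^{(1)}_t\in\Theta$ is bounded, so $\dot\mu(x^\top\theta^{(1)}_t)$ is bounded below by a positive constant uniformly, and again $\lambda_{\min}(\mathbf{A}_t)\gtrsim t^{1/2}$. Combining, the event $\{\|\theta^{(1)}_t-\theta^*\|\geq\varepsilon\}$ is contained in an event of the form $\{\|g_t(\hat\theta_t)-g_t(\theta^*)\|_{(\mathbf{H}^*_t)^{-1}} \geq a\, \varepsilon\, t^{1/4}\}$ for an explicit constant $a>0$ depending on $S,\kappa_0,c$.

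Next I would control this $g_t$-deviation. Writing $g_t(\hat\theta_t)-g_t(\theta^*) = \sum_{s=1}^t (r_s - \mu(x_s^\top\theta^*)) x_s =: S_t$, where the summands form a bounded martingale difference sequence with conditional variance proxy governed by $\mathbf{H}_t(\theta^*)$, the quantity $\|S_t\|_{(\mathbf{H}^*_t)^{-1}}$ is exactly the object controlled by the concentration machinery of \citet{faury2020improved} as restated in Lemma \ref{lemma:concentration}. Using that lemma with confidence level $\delta_t$ chosen so that $\gamma_t(\delta_t)$ equals a constant fraction of $a\,\varepsilon\, t^{1/4}$: recalling $\gamma_t(\delta) = \frac{\sqrt{\lambda(t)}}{2} + \frac{4}{\sqrt{\lambda(t)}}\log\!\big(\frac{2^d}{\delta}(\frac{Lt}{\lambda(t)d})^{d/2}\big)$, with the choice $\lambda(t)=\Theta(t^{1/2})$ inherited from the design-matrix hypothesis, solving $\gamma_t(\delta_t)\asymp \varepsilon t^{1/4}$ for $\delta_t$ yields a bound of the form $\delta_t \le c_2 t^{\beta_2}\exp(-c_1 t^{\beta_1})$ — the polynomial prefactor coming from the $(Lt/\lambda(t)d)^{d/2} = \mathrm{poly}(t)$ term and the $2^d$ factor, and the stretched-exponential decay $\exp(-c_1 t^{\beta_1})$ (with $\beta_1$ determined by how $\varepsilon t^{1/4}$ compares to $\sqrt{\lambda(t)}=\Theta(t^{1/4})$) from isolating $\log(1/\delta_t)$. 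One must be slightly careful that Lemma \ref{lemma:concentration} is stated as a uniform-in-$t$ bound from a fixed $t_0$; here it suffices to apply it (or rather its single-time ingredient, the self-normalized martingale tail bound before the union over $t$) at the fixed time $t$, so the stated constants $c_1,c_2,\beta_1,\beta_2$ are genuinely independent of $t$ and of $\varepsilon$ as claimed (the $\varepsilon$-dependence being absorbed into the constants once $\varepsilon$ is fixed).

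The main obstacle I anticipate is bookkeeping the exponents: one must track how the exponent on $t$ propagates from the design-matrix lower bound $\lambda_{\min}(\mathbf{A}_t)\gtrsim t^{1/2}$, through the $(\mathbf{H}^*_t)^{-1}$-weighting (which contributes a $t^{1/4}$ scale to the threshold), into the log-confidence $\log(1/\delta_t)$, and verify that the resulting $\beta_1>0$ strictly — this is what makes the probability summable and hence usable later for a Borel–Cantelli / dominated-convergence argument on $\mathbb{E}_\theta[\tau_\delta]$. The inequality $\dot\mu(u)\ge 1/\kappa_0$ on the relevant range and the uniform lower bound on $\dot\mu(x^\top\theta^{(1)}_t)$ for $\theta^{(1)}_t\in\Theta$, $\|x\|\le 1$, are elementary (since $\dot\mu(u)=\mu(u)(1-\mu(u)) \ge \dot\mu(S) > 0$ for $|u|\le S$), so those steps are routine. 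The condition $c > \kappa_0\lambda_0$ in the hypothesis is precisely what guarantees the gap needed so that the threshold $\varepsilon t^{1/4}$ dominates the irreducible $\sqrt{\lambda(t)}/2$ term in $\gamma_t$ for $t$ large, keeping $\beta_1$ positive.
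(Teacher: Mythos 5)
Your proposal is correct and follows essentially the same route as the paper's proof: apply the self-concordance bound to reduce $\|\theta^{(1)}_t-\theta^*\|$ to $\|g_t(\hat\theta_t)-g_t(\theta^*)\|_{\mathbf{H}_t^{-1}(\theta^*)}$ scaled by $\lambda_{\min}(\mathbf{H}_t(\theta^*))^{-1/2}\lesssim (\kappa_0/c)^{1/2}t^{-1/4}$, then invert the concentration inequality of Lemma \ref{lemma:concentration} by solving $\gamma_t(\delta)\asymp\varepsilon t^{1/4}$ for $\delta$, which yields the polynomial-times-stretched-exponential tail. The only cosmetic difference is that you pass through $\mathbf{H}_t(\theta^{(1)}_t)$ where the paper works directly with $\mathbf{H}_t(\theta^*)$; the bookkeeping is the same.
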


\paragraph{Tracking}

As noted in \citet{jedra2020optbailinear, wang2021fast}, we can define the function $\psi(\theta, w)$ to recover the optimal proportions from the lower bound in Theorem \ref{thm:thmsclb}

\begin{align}
    \psi(\theta, w)= \inf _{\lambda \in \Alt(\theta)}
\frac{1}{2}\|\theta-\lambda\|_{\mathbf{H}_w(\theta)}^2
\end{align}

This function has a tractable form for BAI, TBP and top-m best arm identification (See Section \ref{section:examples} in the appendix). The objective is to optimize $\psi(\theta, w)$ over the simplex $\Sigma$ so we can estimate the optimal proportions needed in the TS algorithm.

\begin{align}
    \psi^{\star}(\theta) = \max _{w \in \Sigma} \psi(\theta, w)
\end{align}

\begin{align}\label{eq:optimal_prop}
    C^{\star}(\theta) = \arg \max _{w \in \Sigma} \psi(\theta, w)
\end{align}

Note that $\psi(\theta^*, w^{\star}) = T^{\star}(\theta^*)^{-1}$, where 
$w^{\star} \in C^{\star}(\theta^*)$. As pointed out in previous works, the solution to Eq. \eqref{eq:optimal_prop} may involve multiple optimal proportions. However, similar to \citet{jedra2020optbailinear}, we only need to prove that $\psi^{\star}(\theta)$ is continuous in $\theta$ and that $C^{\star}(\theta)$ is convex to guarantee the algorithm converges to an optimal proportion inside $C^{\star}(\theta^*)$. We will use the Frank-Wolfe algorithm, as previous works, to solve the optimization problem in Eq. \eqref{eq:optimal_prop} \citep{jedra2020optbailinear, degenne2020pureexp}. A caveat of using Frank-Wolfe algorithm is that convergence is not guaranteed when the function $\psi^{\star}(\theta)$ is non-smooth. However, empirical evidence suggests that it can still converge in practice. Moreover, recent work on developing projection-free algorithms for non-smooth functions could be leveraged in the future \citep{asgari2022projection}. 

We use the standard tracking
procedure to track the estimated optimal proportions

\begin{align}\label{eq:track}
    b_t=\underset{x \in \operatorname{supp}\left(\sum_{s=1}^t w(s)\right)}{\arg \min }\left(N_x(t)-\sum_{s=1}^t w_x(s)\right),    
\end{align}

Where $w(t) \in C^{\star}(\theta^{(1)}_t)$, i.e.

\begin{align}\label{eq:optmial_weights}
    w(t) = \arg \max _{w \in \Sigma} \psi (\theta^{(1)}_t, w)    
\end{align}

We can prove that under this sampling rule, the observed proportions of the sampled arms converge to a true optimal proportions.

\begin{proposition}\label{prop:converg_track}
    Under the sampling rules defined by Eq. \eqref{eq:forced_expl} and Eq. \eqref{eq:track}, the proportions of arm draws approach $C^{\star}(\theta^*): \lim _{t \rightarrow \infty} d_{\infty}\left(\left(N_x(t) / t\right)_{x \in \mathcal{X}}, C^{\star}(\theta^*)\right)=0$, a.s..
    
\end{proposition}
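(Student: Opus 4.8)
The plan is to combine three ingredients: the almost-sure convergence of the MLE (and hence of its projection $\theta^{(1)}_t$) to $\theta^*$, the continuity of the optimization landscape $\psi^\star(\theta)$ together with convexity of the solution set $C^\star(\theta)$, and a standard tracking argument that transfers convergence of the tracked weights into convergence of the empirical proportions. First I would recall from Lemma~\ref{lemma:strong_cons} that forced exploration gives $\hat\theta_t \to \theta^*$ a.s.; since $\theta^{(1)}_t$ is the projection onto $\Theta$ and $\theta^*\in\Theta$, the nonexpansiveness of the projection (in the relevant metric, using the self-concordance control from Section~\ref{subsection:stopping_rule}) yields $\theta^{(1)}_t \to \theta^*$ a.s. as well. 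In particular, on the almost-sure event where this holds, for any neighborhood of $\theta^*$ all but finitely many $\theta^{(1)}_t$ lie inside it.

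Next I would establish the two regularity properties of the optimization problem. The map $(\theta,w)\mapsto \psi(\theta,w)=\inf_{\lambda\in\Alt(\theta)}\tfrac12\|\theta-\lambda\|^2_{\mathbf H_w(\theta)}$ is an infimum of continuous functions over a set $\Alt(\theta)$ that varies (upper-hemi)continuously in $\theta$; a Berge-type maximum-theorem argument then shows $\psi^\star(\theta)=\max_{w\in\Sigma}\psi(\theta,w)$ is continuous and the argmax correspondence $C^\star(\theta)$ is upper hemicontinuous and compact-valued. Convexity of $C^\star(\theta)$ follows because $w\mapsto \psi(\theta,w)$ is concave (it is an infimum of functions $w\mapsto \tfrac12\|\theta-\lambda\|^2_{\mathbf H_w(\theta)}$, each of which is \emph{linear} in $w$ since $\mathbf H_w(\theta)=\sum_x w_x\dot\mu(x^\top\theta)xx^\top$), so its set of maximizers over the simplex is convex. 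Upper hemicontinuity of $C^\star$ at $\theta^*$ then says: for every $\epsilon>0$ there is a neighborhood $V$ of $\theta^*$ with $C^\star(\theta)\subseteq \{w: d_\infty(w,C^\star(\theta^*))<\epsilon\}$ for all $\theta\in V$. Combined with the first paragraph, this gives $d_\infty\big(w(t),C^\star(\theta^*)\big)\to 0$ a.s., where $w(t)\in C^\star(\theta^{(1)}_t)$ is the tracked weight from Eq.~\eqref{eq:optmial_weights}.

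Finally I would run the classical C-tracking argument of \citet{jedra2020optbailinear,kaufmann2016bai}: the cumulative-tracking rule in Eq.~\eqref{eq:track} guarantees a deterministic bound $\big|N_x(t)-\sum_{s\le t} w_x(s)\big| = O(\log t)$ (more precisely, bounded by the number of arms plus the forced-exploration count, which is $O(\sqrt t)$ here), so $\big\|(N_x(t)/t)_x - \tfrac1t\sum_{s\le t}w(s)\big\|_\infty \to 0$. Since the Cesàro average $\tfrac1t\sum_{s\le t}w(s)$ of a sequence converging to the convex compact set $C^\star(\theta^*)$ also converges to $C^\star(\theta^*)$ (here convexity is exactly what makes the average stay asymptotically in the set), the triangle inequality gives $d_\infty\big((N_x(t)/t)_x, C^\star(\theta^*)\big)\to 0$ a.s. One must also check that the forced-exploration pulls, which occur on a set of rounds of size $O(\sqrt t)=o(t)$ by Lemma~\ref{lemma:forced_expl}, do not perturb the proportions in the limit; this is immediate since their contribution to $N_x(t)/t$ is $O(t^{-1/2})$.

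The main obstacle I expect is the regularity step: proving that $\psi^\star$ is continuous and $C^\star$ upper hemicontinuous requires care because $\Alt(\theta)=\{\lambda\in\Theta: i^\star(\lambda)\neq i^\star(\theta)\}$ can change discontinuously as $\theta$ crosses a decision boundary, and the inner infimum is over this moving, possibly non-closed set. The uniqueness assumption on $i^\star(\theta^*)$ is what saves us: near $\theta^*$ the answer $i^\star$ is locally constant, so $\Alt(\theta)$ is locally \emph{constant} as well (equal to $\Alt(\theta^*)$), and on such a neighborhood $\psi(\theta,w)=\inf_{\lambda\in\Alt(\theta^*)}\tfrac12\|\theta-\lambda\|^2_{\mathbf H_w(\theta)}$ is jointly continuous on a compact domain, from which Berge's theorem applies cleanly. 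Making this "locally constant answer" statement precise — and verifying it is enough, since we only need convergence to $C^\star(\theta^*)$ rather than global continuity — is the crux of the argument.
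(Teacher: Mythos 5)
Your proposal is correct and follows essentially the same route as the paper: forced exploration gives a.s.\ convergence of the MLE (Lemma~\ref{lemma:strong_cons}), Berge's maximum theorem with the local constancy of $i^{\star}$ near $\theta^*$ gives continuity of $\psi^{\star}$ and upper hemicontinuity, convexity and compactness of $C^{\star}$ (Lemmas~\ref{lemma:lemma_contpsi} and~\ref{lemma:lemma_maxthm}), and the C-tracking lemma of \citet{jedra2020optbailinear} (Lemma~\ref{lemma:tracking_c}) converts convergence of $w(t)$ into convergence of the empirical proportions. The only cosmetic difference is that you invoke ``nonexpansiveness'' of the projection to pass from $\hat\theta_t$ to $\theta^{(1)}_t$ (the projection here is not a metric projection, so this needs the simpler observation that $\theta^{(1)}_t=\hat\theta_t$ once $\|\hat\theta_t\|\le S$), and you unpack the Cesàro/convexity argument inside the tracking lemma that the paper uses as a black box.
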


\begin{algorithm}[tb]
   \caption{Log Track-and-Stop}
   \label{alg:GLMTaS}
\begin{algorithmic}
   \State {\bfseries Input:} Arms $\mathcal{X}$, confidence level $\delta$;
   \State {\bfseries Initialize:} 
   $t=0$, $i=0$, $\mathbf{A}_0=0$, $Z(0)=0$,$N(0)=(N_x(0))_{x \in \mathcal{X}}=0$;
   \While{$t \notin B$ or $Z(t) < \beta(\delta, t)$}
   \If{$\lambda_{\min}(\mathbf{A}_t) < f(t)$}
    \State select $x$ according to Eq. \eqref{eq:forced_expl}
    \Else
    \State select $x$ according to Eq. \eqref{eq:track}
   \EndIf
   \State $t \leftarrow t+1$, 
    \State sample arm $x$
    \State update $N(t)$, $\hat{\theta}_{t}$, $\theta^{(1)}_t$, $Z(t)$, $\mathbf{A}_{t}$, $\mathbf{H}_{t}$
    \State $w(t) = \arg \max _{w \in \Sigma} \psi (\theta_t^{(1)}, w)$
    \EndWhile
    \State{\bfseries Return $i^{\star}(\theta^{(1)}_{\tau})$}
\end{algorithmic}
\end{algorithm}

Although updating $w(t)$ according to Eq. \eqref{eq:optmial_weights} can be computationally expensive, Log-TS can easily adapted to the lazy approach of \citet{jedra2020optbailinear}, so we do not need to update $w(t)$ at every step.

\section{Sample Complexity of Log-TS}\label{section:sample_complexity} 

In this section we state two upper bounds for the sample complexity under Log-TS when we set $\lambda(t) = c\log (t)$ for some constant $c > 0$. The pseudocode is provided in Algorithm \ref{alg:GLMTaS}. 

\begin{theorem}\label{thm:thm_ubsce}

Logistic Track-and-stop (Log-TS) satisfies the following sample complexity upper bound

$$ \mathbb{P}_{\theta}[\limsup _{\delta \rightarrow 0} \frac{\tau_{\delta}}{(\log(\frac{1}{\delta}))^2} \lesssim T^{\star}(\theta)] = 1$$

\end{theorem}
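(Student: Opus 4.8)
The plan is to establish an almost-sure upper bound on $\tau_\delta$ by analyzing the stopping condition $Z(t) > \beta(\delta,t)$ together with the membership condition $t \in B$, and then letting $\delta \to 0$. The key ingredients will be (i) the continuity of $\psi^\star$ and Proposition~\ref{prop:converg_track}, which tells us that the tracked proportions $N_x(t)/t$ converge a.s.\ to $C^\star(\theta^*)$, hence $\mathbf{H}_t(\theta^{(1)}_t)/t \to \mathbf{H}_{w^\star}(\theta^*)$ for some $w^\star \in C^\star(\theta^*)$; (ii) the a.s.\ convergence $\theta^{(1)}_t \to \theta^*$ from Lemma~\ref{lemma:strong_cons}; and (iii) the explicit form $\beta(\delta,t) = 2((1+2S)\gamma_t(\delta))^2$ with $\gamma_t(\delta) = \tfrac{\sqrt{\lambda(t)}}{2} + \tfrac{4}{\sqrt{\lambda(t)}}\log\!\big(\tfrac{2^d}{\delta}(\tfrac{Lt}{\lambda(t)d})^{d/2}\big)$ and the choice $\lambda(t) = c\log t$.

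First I would show that on the a.s.\ event of Proposition~\ref{prop:converg_track} and Lemma~\ref{lemma:strong_cons}, we have the lower bound $Z(t) \ge t\,\psi(\theta^{(1)}_t, N(t)/t) = t\big(\psi^\star(\theta^*) + o(1)\big)$ as $t \to \infty$; more carefully, since $\psi^\star(\theta^*) = T^\star(\theta^*)^{-1} > 0$ and $\psi$ is continuous, for any $\eta > 0$ there is a (random but a.s.\ finite) time after which $Z(t) \ge t\,(1-\eta)\,T^\star(\theta^*)^{-1}$. Next, with $\lambda(t) = c\log t$, one checks that $\gamma_t(\delta)^2 = O\big((\log(1/\delta) + \log t)^2 / \log t\big)$, so $\beta(\delta,t) = O\big((\log(1/\delta))^2/\log t\big) + O(\log t)$ — the important point is that $\beta(\delta,t)$ grows only polylogarithmically in $t$, while $Z(t)$ grows linearly. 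Therefore the stopping inequality $Z(t) > \beta(\delta,t)$ is triggered once $t \gtrsim (\log(1/\delta))^2 \cdot T^\star(\theta)$ up to lower-order corrections; combining with the condition $t \in B$ (which by Lemma~\ref{lemma:forced_expl} holds once $c_{\mathcal{X}_0}\sqrt{t} > \kappa_0\lambda(t)$, i.e.\ for all sufficiently large $t$ independent of $\delta$), I get $\tau_\delta \le (1+o(1))\,T^\star(\theta)\,(\log(1/\delta))^2$ on the a.s.\ event. Dividing by $(\log(1/\delta))^2$ and taking $\limsup_{\delta\to 0}$, then sending $\eta \to 0$, yields the claim with probability one.

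The subtle point — and the main obstacle — is that the argument above is not uniform: the time $T_\eta$ after which $Z(t)/t \ge (1-\eta)T^\star(\theta)^{-1}$ is random and depends on $\eta$, so I need to be careful about the order of quantifiers when combining "for $\delta$ small enough, $\tau_\delta$ is large" with "for $t$ large enough, $Z(t)/t$ is close to its limit." The clean way is: fix a realization in the a.s.\ event; for that realization and each $\eta>0$ pick $T_\eta$; then for all $\delta$ small enough that the $\delta$-driven threshold exceeds $T_\eta$, the stopping time satisfies the desired bound; this gives $\limsup_{\delta\to0}\tau_\delta/(\log(1/\delta))^2 \le (1-\eta)^{-1}T^\star(\theta)$ for every $\eta$, hence $\le T^\star(\theta)$. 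A secondary technical nuisance is handling the $O(\log t)$ additive term in $\beta(\delta,t)$ coming from the $\tfrac{\sqrt{\lambda(t)}}{2}$ part of $\gamma_t$; since this is $o(t)$ it is absorbed, but one must verify it does not interfere with the leading-order balance $t\,T^\star(\theta)^{-1} \approx 2(1+2S)^2\gamma_t(\delta)^2$. I would also invoke Lemma~\ref{lemma:delta_correct} to note the bound is on the correct quantity (the algorithm does stop and stops correctly), though strictly that is about $\delta$-correctness rather than the sample-complexity magnitude.
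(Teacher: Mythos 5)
Your proposal is correct and follows essentially the same route as the paper's proof: establish on the almost-sure event (convergence of $\hat{\theta}_t$ and of the tracked proportions, plus forced exploration guaranteeing $t\in B$ and $\theta^{(1)}_t=\hat{\theta}_t$ eventually) that $Z(t)=t\,\psi(\theta^{(1)}_t,(N_x(t)/t)_x)\geq t(1-\eta)T^{\star}(\theta)^{-1}$ for all large $t$, then solve the crossing condition against $\beta(\delta,t)$ (the paper invokes Lemma 8 of Jedra--Proutiere for this inversion) and send $\eta\to 0$. The quantifier-ordering point you flag is exactly how the paper handles it, by absorbing the random times $t_0,t_1,t_2$ into a $\delta$-independent maximum that vanishes under the $\limsup$.
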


\begin{theorem}\label{thm:thm_ubsce_exp}

Logistic Track-and-stop (Log-TS) satisfies the following sample complexity upper bound

$$\limsup _{\delta \rightarrow 0} \frac{\mathbb{E}_{\theta}[\tau_{\delta}]}{(\log(\frac{1}{\delta}))^2} \lesssim T^{\star}(\theta)$$

\end{theorem}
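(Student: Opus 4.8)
The plan is to upgrade the almost-sure bound of Theorem~\ref{thm:thm_ubsce} to a bound in expectation by integrating the tail: since $\mathbb{E}_{\theta}[\tau_\delta] = \sum_{T \ge 0}\mathbb{P}_{\theta}(\tau_\delta > T)$, it suffices to produce a deterministic threshold $T_\delta(\varepsilon)$ and a family of high-probability concentration events $\mathcal{E}_T(\varepsilon)$ with (i) $\{\tau_\delta > T\} \subseteq \mathcal{E}_T(\varepsilon)^c$ for every $T \ge T_\delta(\varepsilon)$; (ii) $\sum_{T \ge 1}\mathbb{P}_{\theta}(\mathcal{E}_T(\varepsilon)^c) =: K(\varepsilon) < \infty$ with $K(\varepsilon)$ not depending on $\delta$; and (iii) $T_\delta(\varepsilon) \le (1+o_\delta(1))\,(\log(1/\delta))^2/(\psi^\star(\theta)-\xi(\varepsilon))$ with $\xi(\varepsilon)\to 0$ as $\varepsilon\to 0$. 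Together these give $\mathbb{E}_{\theta}[\tau_\delta] \le T_\delta(\varepsilon) + K(\varepsilon)$; dividing by $(\log(1/\delta))^2$, sending $\delta\to 0$ and then $\varepsilon\to 0$, and using $\psi(\theta^*,w^\star) = T^\star(\theta^*)^{-1}$, yields $\limsup_{\delta\to0}\mathbb{E}_{\theta}[\tau_\delta]/(\log(1/\delta))^2 \le 1/\psi^\star(\theta) = T^\star(\theta)$.

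For the events I would take $\mathcal{E}_T(\varepsilon) = \bigcap_{t=h(T)}^{T}\{\|\theta^{(1)}_t-\theta^*\| \le \varepsilon\}$, where $h(T)$ is a slowly growing window boundary (for instance $h(T) = \lceil\sqrt{T}\,\rceil$) large enough that $\lambda_{\min}(\mathbf{A}_t) \ge c_{\mathcal{X}_0}\sqrt{t} > \kappa_0\lambda(t)$ for all $t \ge h(T)$, so that $t \in B$ on the window and Lemma~\ref{lemma:tail_error_th} applies there. Then $\mathbb{P}_{\theta}(\mathcal{E}_T(\varepsilon)^c) \le \sum_{t=h(T)}^{T} c_2 t^{\beta_2}e^{-c_1 t^{\beta_1}}$, and summing over $T$ yields a finite $K(\varepsilon)$ by comparison with $\sum_t t^{\beta_2+1}e^{-c_1 t^{\beta_1}}$, which is (ii). On $\mathcal{E}_T(\varepsilon)$, the continuity of $\theta\mapsto\psi^\star(\theta)$ and of $w\mapsto\psi(\theta,w)$, the convexity/continuity of the optimal-weight set $C^\star(\cdot)$ (the ingredients already used in Proposition~\ref{prop:converg_track}), and the standard tracking guarantee behind Eq.~\eqref{eq:track} imply that the empirical proportions $N_x(t)/t$ are within $\xi(\varepsilon) + o_T(1)$ of some $w^\star \in C^\star(\theta^*)$ for all $t$ in a final sub-window $[\eta T, T]$, $\eta\in(0,1)$ fixed (the unreliable prefix $[1,h(T)]$ and the forced-exploration rounds perturb these proportions only by $O(h(T)/t + d\log t / t) = o(1)$). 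Using $\mathbf{H}_t(\theta) = t\,\mathbf{H}_{N(t)/t}(\theta)$, so that $Z(t) = t\,\psi(\theta^{(1)}_t, N(t)/t)$, this gives $Z(t) \ge t(\psi^\star(\theta)-\xi(\varepsilon))$ for $T$ large.

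Finally I would pin down $T_\delta(\varepsilon)$: with $\lambda(t) = c\log t$ one has $\gamma_t(\delta) = O(\sqrt{\log t}) + 4\log(1/\delta)/\sqrt{c\log t}$, hence $\beta(\delta,t) = 2((1+2S)\gamma_t(\delta))^2 = O((\log(1/\delta))^2)$ uniformly over $t$ up to any polynomial in $\log(1/\delta)$. Take $T_\delta(\varepsilon)$ to be the least $T$ such that $t \in B$ and $t(\psi^\star(\theta)-\xi(\varepsilon)) > \beta(\delta,t)$ for all $t \ge T$; since $t \in B$ holds for all $t$ beyond a $\delta$-free constant (forced exploration plus Lemma~\ref{lemma:forced_expl} give $\lambda_{\min}(\mathbf{A}_t) \gtrsim \sqrt{t} \gg c\log t$), the constraint $t\in B$ does not affect the leading order, and $T_\delta(\varepsilon) = (1+o_\delta(1))(\log(1/\delta))^2/(\psi^\star(\theta)-\xi(\varepsilon))$. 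Then for $T \ge T_\delta(\varepsilon)$, on $\mathcal{E}_T(\varepsilon)$ we get $Z(T) \ge T(\psi^\star(\theta)-\xi(\varepsilon)) > \beta(\delta,T)$ with $T\in B$, so $\tau_\delta \le T$, which is (i).

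The main obstacle is the second step: making the tracking/continuity estimate \emph{quantitative and uniform over the window}, so that on $\mathcal{E}_T(\varepsilon)$ the inequality $Z(t) \ge t(\psi^\star(\theta)-\xi(\varepsilon))$ holds simultaneously over the sub-window while $K(\varepsilon)$ stays independent of $\delta$. This forces a delicate choice of $h(T)$: fast enough that the unreliable prefix $[1,h(T)]$ is a vanishing fraction of the arm counts at times $t \ge \eta T$ (so the tracked proportions are genuinely $\xi(\varepsilon)$-close to $C^\star(\theta^*)$), yet slow enough that $\sum_T\sum_{t \ge h(T)} t^{\beta_2}e^{-c_1 t^{\beta_1}}$ still converges. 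The non-smoothness of $\psi^\star$ (noted in connection with Frank--Wolfe) is only a minor nuisance here, since the modulus $\xi(\varepsilon)$ is produced by a compactness argument using continuity, not a convergence rate.
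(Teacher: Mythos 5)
Your proposal is correct and follows essentially the same route as the paper's proof: the same decomposition $\mathbb{E}_\theta[\tau_\delta]\le T_\delta(\varepsilon)+\sum_T\mathbb{P}(\mathcal{E}_T^c)$, the same ingredients (the tail bound of Lemma~\ref{lemma:tail_error_th}, the tracking guarantee of Lemma~\ref{lemma:tracking_c}, continuity of $\psi$ and of the correspondence $C^\star$, and inversion of $t\,\psi^\star(\theta)>\beta(\delta,t)$ to get the $(\log(1/\delta))^2$ scaling), and the same $\delta$-independent summability of $\mathbb{P}(\mathcal{E}_T^c)$. The only cosmetic difference is that you take a finite concentration window $[h(T),T]$ with $h(T)=\lceil\sqrt{T}\rceil$, whereas the paper uses the infinite tail intersection $\mathcal{E}_{1,\lceil\varepsilon_1^3T\rceil}=\bigcap_{t\ge\lceil\varepsilon_1^3T\rceil}\{\|\theta-\hat\theta_t\|\le\xi(\varepsilon)\}$; both choices are compatible with the tracking lemma and yield a summable complement.
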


As we mention in Section \ref{subsection:consentration} these bounds have an extra factor $\log(\frac{1}{\delta})$ that comes from the threshold $\gamma_t(\delta)$.




\section{Experiments}\label{section:experiments} 

\subsection{BAI}

In this section we evaluate the performance of Log-TS for BAI. To the best of our knowledge, there are only two algorithms that address BAI for the logistic bandit \citep{kazerouni2021,  jun2021improved}. We only compare our algorithm against RAGE-GLM-R and random sampling equipped with the same stopping rule as Log-TS, since RAGE-GLM-R is the state of the art algorithm for this task. Lets define $x^{\star} (\theta):=i^{\star}(\theta) = \arg \max _{x \in \mathcal{X}} \{x^{\top} \theta \}$. Then, to implement the algorithm, we use the following lemma:

\begin{lemma}\label{lemma:example_bai}
    For all $\theta \in \mathbb{R}^d$,

$$
T^{\star}(\theta)^{-1}=\max _{w \in \Sigma} \min _{x \neq x^{\star} (\theta)} \frac{\left(\theta^{\top}x^{\star} (\theta)-\theta^{\top}x\right)^2}{2\left\|x^{\star} (\theta)-x\right\|_{\mathbf{H}_w^{-1}}^2}
$$

\end{lemma}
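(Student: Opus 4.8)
The plan is to specialize the general formula $T^{\star}(\theta)^{-1} = \max_{w \in \Sigma} \inf_{\lambda \in \Alt(\theta)} \frac{1}{2}\|\theta - \lambda\|_{\mathbf{H}_w(\theta)}^2$ from Theorem \ref{thm:thmsclb} to the BAI answer map. Since $\mu$ is strictly increasing, $i^{\star}(\theta) = x^{\star}(\theta) = \argmax_{x \in \mathcal{X}} x^{\top}\theta$, so the key structural fact is that the alternative set is a finite union of half-spaces: one has $x^{\star}(\lambda) \neq x^{\star}(\theta)$ exactly when some competitor $x \neq x^{\star}(\theta)$ satisfies $\lambda^{\top}(x - x^{\star}(\theta)) \geq 0$. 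Hence, up to taking closures (which leaves the infimum of the continuous quadratic unchanged), $\Alt(\theta) = \bigcup_{x \neq x^{\star}(\theta)} \mathcal{C}_x$ with $\mathcal{C}_x := \{\lambda : (x - x^{\star}(\theta))^{\top}\lambda \geq 0\}$, and therefore $\inf_{\lambda \in \Alt(\theta)} \frac{1}{2}\|\theta - \lambda\|_{\mathbf{H}_w}^2 = \min_{x \neq x^{\star}(\theta)} \inf_{\lambda \in \mathcal{C}_x} \frac{1}{2}\|\theta - \lambda\|_{\mathbf{H}_w}^2$.

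Next I would solve each inner problem, which is a convex quadratic program of minimizing $\frac{1}{2}\|\theta-\lambda\|_{\mathbf{H}_w}^2$ over a half-space. Writing $a_x := x^{\star}(\theta) - x$, uniqueness of the best arm gives $a_x^{\top}\theta > 0$, so $\theta \notin \mathcal{C}_x$ and the constraint is active at the optimum; the minimizer is the $\mathbf{H}_w$-orthogonal projection of $\theta$ onto the hyperplane $\{a_x^{\top}\lambda = 0\}$. A Lagrange-multiplier computation (or the standard weighted-projection formula) gives the optimal value $\frac{1}{2}\cdot\frac{(a_x^{\top}\theta)^2}{a_x^{\top}\mathbf{H}_w^{-1}a_x} = \frac{(\theta^{\top}x^{\star}(\theta) - \theta^{\top}x)^2}{2\|x^{\star}(\theta) - x\|_{\mathbf{H}_w^{-1}}^2}$. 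Substituting back and taking the maximum over $w \in \Sigma$ produces the claimed identity.

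The main technical point to handle with care is the invertibility of $\mathbf{H}_w = \mathbf{H}_w(\theta)$: for $w$ on the boundary of $\Sigma$ the matrix $\mathbf{H}_w$ may be singular, in which case $\|\cdot\|_{\mathbf{H}_w^{-1}}$ must be interpreted as $+\infty$ along directions outside its range, so that competitor contributes value $0$ and such $w$ cannot be a maximizer — this is consistent because $\mathcal{X}$ spans $\mathbb{R}^d$, so some $w$ with $\mathbf{H}_w \succ 0$ exists. I would thus restrict the projection computation to the region $\mathbf{H}_w \succ 0$ and extend by continuity. A secondary, minor point is that $\Alt(\theta)$ is formally intersected with $\Theta = \{\|\lambda\|\le S\}$; since the statement is phrased for $\theta \in \mathbb{R}^d$, I would drop this constraint in line with the definition used for the tractable surrogate, or alternatively note that the $\mathbf{H}_w$-projection of $\theta$ onto a hyperplane through the origin has no larger $\mathbf{H}_w$-norm than $\theta$ itself, so the ball constraint is inactive in the regime of interest.
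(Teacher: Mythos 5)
Your proposal is correct and follows essentially the same route as the paper: decompose $\Alt(\theta)$ into the half-spaces indexed by competitors $x \neq x^{\star}(\theta)$, then solve each inner minimization in closed form via Lagrangian duality (the paper invokes Lemma 5 of Degenne et al., whose proof is exactly your weighted-projection computation). Your added remarks on singular $\mathbf{H}_w$ at the boundary of $\Sigma$ and on dropping the $\Theta$ constraint address edge cases the paper's proof passes over silently, but they do not change the argument.
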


We evaluated the algorithms under two settings:

\paragraph{Benchmark for BAI in linear bandits.}
The benchmark examples in the linear bandit BAI literature introduced by \citet{soare2014bailinear} is the following. Consider $\mathcal{X} = \{ e_1, \dots, e_d, x'\} \subseteq \mathbb{R}^d$ where $e_i$ is the $i$-standard basis vector, $x' = \cos (\alpha)e_1 + \sin (\alpha)e_2$ with $\alpha$ small,  and $\theta$  proportional to $e_1$ so that $e_1 = \arg \max _{x \in \mathcal{X}} x^{\top}\theta$. This setting is designed to be a hard instance for an algorithm because the rewards $\mu(\theta^{\top}e_1)$ and $\mu(\theta^{\top}x')$ are close, so distinguishing the best arm becomes a difficult task. In Figure \ref{figure:bai-bench}, we observe comparable results between Log-TS and Rage-GLM-R, suggesting that our algorithm is competitive with the state-of-the-art method.

\paragraph{Uniform Distribution on a Sphere.} In this example, $\mathcal{X}$ is sampled from a unit sphere of dimension $d = 2$ centered at the origin with $|\mathcal{X}| = K$. We set $\theta$ randomly so that $\|\theta\| = 1$. To control the variation between experiments, we restricted the complexity $T^{\star}(\theta)^{-1}$ of the experiments across the different number arms, allowing higher complexity for higher value of $K$. In Table \ref{table:bai-unif}, Log-TS shows a distinctive advantage, achieving almost a 5x reduction in sample complexity. This result highlights the efficiency of Log-TS, particularly in scenarios with many arms. The superiority of Log-TS can be attributed to the independence of the stopping rule with respect to the number of arms, allowing it to scale better in large action spaces.

In the implementation of the stopping rule, we adopt a heuristic by using $\lambda_{\min}(\mathbf{H}_t(\hat{\theta})) > \lambda(t)$ instead of $t\in B$ and select $\lambda(t) = d \log(t)$. This choice improves the algorithm’s sample complexity performance avoiding unnecessary exploration. In all instances, all algorithms found the best arm correctly using $\delta = 0.10$.



\begin{table}[h]
\caption{Average sample complexities, expressed in thousands, of uniform distribution for BAI. The results are shown for various numbers of arms $K$. Standard deviations and means were computed across 10 randomized trials.} \label{table:bai-unif}
\begin{center}
\begin{tabular}{cccc}
\toprule
$K$&Log-TS&Rage-GLM-R&Random \\
\midrule
100&\textbf{4.05 (0.35)} & 33.22 (19.14) & 11.44 (1.59) \\
200&\textbf{6.94 (0.39)} & 49.34 (13.26) & 25.10 (5.86)\\
300&\textbf{10.66 (1.44)} & 51.94 (15.96) & 44.32 (6.44)\\
400&\textbf{14.36 (2.31)} & 53.39 (18.40) & 55.64 (11.86)\\
500&\textbf{16.59 (2.32)} & 68.00 (19.13) & 74.10 (14.41)\\
\bottomrule
\end{tabular}
\end{center}
\end{table}


\begin{figure}[h]
\vspace{.1in}
\begin{center}
\includegraphics[width=10cm]{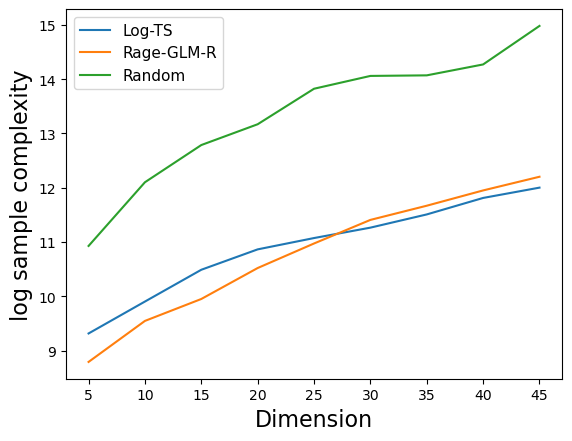}
\end{center}
\vspace{.1in}
\caption{Logarithm of sample complexity of the benchmark setup for BAI against dimension of the action space $\mathcal{X}$.}
\label{figure:bai-bench}
\end{figure}

\subsection{Thresholding Bandit Problem}

In comparison to BAI, there are not many algorithms that can solve TBP. For the linear case \citet{degenne2020pureexp} developed a general algorithm for pure exploration problems called LinGame. Another example is the work in \citet{mason2022nearly}. They developed a nearly optimal algorithm for level set estimation, aka TBP. However, to the best of our knowledge, there is no algorithm for TBP under GLM bandit. Therefore, we compare Log-TS with random sampling, both equipped with the same stopping rule. For TBP, given $\rho \in (0, 1)$, we have $i^{\star}(\theta) = \{x \in \mathcal{X}: \mu(x^{\top} \theta) > \rho\}$. Similarly to BAI, we have an explicit expression that allows us to implement the algorithm:

\begin{lemma}\label{lemma:example_tbp}
    For all $\theta \in \mathbb{R}^d$,

$$
T^{\star}(\theta)^{-1}=\max _{w \in \Sigma} \min _{x \in \mathcal{X}} \frac{(\theta^{\top} x - \mu^{-1}(\rho))^2}{2\left\|x\right\|_{\mathbf{H}_w^{-1}}^2}
$$

\end{lemma}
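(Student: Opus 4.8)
The plan is to specialize the general formula $T^{\star}(\theta)^{-1} = \max_{w \in \Sigma} \inf_{\lambda \in \Alt(\theta)} \frac12 \|\theta - \lambda\|_{\mathbf{H}_w(\theta)}^2$ to the thresholding geometry and reduce the infimum over $\Alt(\theta)$ to a minimum over single arms, exactly as in the BAI case (Lemma \ref{lemma:example_bai}). First I would describe $\Alt(\theta)$ explicitly: since $i^{\star}(\theta) = \{x \in \mathcal{X} : \mu(x^{\top}\theta) > \rho\} = \{x : x^{\top}\theta > \mu^{-1}(\rho)\}$ (using monotonicity of $\mu$), a parameter $\lambda$ lies in $\Alt(\theta)$ iff some arm $x$ switches sides of the threshold, i.e. $\Alt(\theta) = \bigcup_{x \in \mathcal{X}} \cH_x$, where $\cH_x = \{\lambda \in \Theta : \sign(x^{\top}\lambda - \mu^{-1}(\rho)) \neq \sign(x^{\top}\theta - \mu^{-1}(\rho))\}$. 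Because the infimum of a function over a union is the minimum over the pieces of the infima, $\inf_{\lambda \in \Alt(\theta)} \frac12\|\theta-\lambda\|_{\mathbf{H}_w}^2 = \min_{x \in \mathcal{X}} \inf_{\lambda \in \cH_x} \frac12 \|\theta - \lambda\|_{\mathbf{H}_w}^2$.

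The inner problem for a fixed $x$ is a quadratic minimization subject to a single linear inequality constraint. By continuity the infimum is attained on the boundary hyperplane $\{\lambda : x^{\top}\lambda = \mu^{-1}(\rho)\}$ (the constraint is active, since $\theta$ itself violates it), so it becomes: minimize $\frac12(\theta-\lambda)^{\top}\mathbf{H}_w(\theta)(\theta-\lambda)$ subject to $x^{\top}\lambda = \mu^{-1}(\rho)$. Writing $u = \theta - \lambda$, this is minimize $\frac12 u^{\top}\mathbf{H}_w u$ subject to $x^{\top} u = x^{\top}\theta - \mu^{-1}(\rho)$. The standard Lagrangian/projection computation gives the minimizer $u^{\star} = \frac{x^{\top}\theta - \mu^{-1}(\rho)}{\|x\|_{\mathbf{H}_w^{-1}}^2}\, \mathbf{H}_w^{-1} x$ and optimal value $\frac{(x^{\top}\theta - \mu^{-1}(\rho))^2}{2\|x\|_{\mathbf{H}_w^{-1}}^2}$; this uses that $\mathbf{H}_w(\theta) \succ 0$, which holds because $\mathcal{X}$ spans $\mathbb{R}^d$, every $\dot\mu(x^{\top}\theta) > 0$, and $w$ has full support when it is a maximizer (or one restricts to the relative interior / handles degeneracy by a limiting argument). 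One must also check that the minimizing $\lambda = \theta - u^{\star}$ indeed lies in $\Theta = \{\|\lambda\| \le S\}$, or else argue the formula still holds by the usual convention; since $\theta \in \Theta$ and we move toward the threshold hyperplane, $\lambda$ stays in a bounded region, and for the instances of interest this is not binding — I would note this as a mild caveat rather than belabor it. Substituting back and taking $\max_{w \in \Sigma} \min_{x \in \mathcal{X}}$ yields the claimed expression.

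The main obstacle is the boundary-of-$\Theta$ issue just flagged: the formula $\frac{(x^{\top}\theta-\mu^{-1}(\rho))^2}{2\|x\|_{\mathbf{H}_w^{-1}}^2}$ is derived for the unconstrained-except-for-the-hyperplane problem, and one needs the optimal $\lambda$ to actually be feasible ($\|\lambda\|\le S$). This is handled exactly as in \citet{degenne2020pureexp} and in the BAI lemma — either by assuming $\Theta$ is large enough that the constraint is inactive at the optimum, or by observing that the reported $T^{\star}(\theta)^{-1}$ is an upper bound on the true value and the two agree in the regime where the quadratic approximation and the lower bound are meaningful. Everything else is a routine computation with a positive-definite quadratic form under a linear constraint.
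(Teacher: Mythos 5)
Your proposal is correct and follows essentially the same route as the paper: decompose $\Alt(\theta)$ into a union over arms of half-spaces where that arm crosses the threshold, reduce the infimum over the union to a minimum of per-arm infima, and solve each constrained quadratic by the Lagrangian computation (which is exactly the auxiliary lemma from \citet{degenne2020pureexp} that the paper invokes). The caveats you flag --- invertibility of $\mathbf{H}_w$ and whether the minimizing $\lambda$ respects $\|\lambda\|\le S$ --- are real and are in fact glossed over in the paper's own proof, which takes the inner infimum over all of $\mathbb{R}^d$.
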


\paragraph{Benchmark for TBP in linear bandits.} We proposed a slightly modified version of the BAI benchmark setting such that it is hard to distinguish if one of the arms is above or below the threshold. Consider a similar setting as before, $\mathcal{X} = \{ e_1, \dots, e_d, x', x''\} \subseteq \mathbb{R}^d$ where $e_i$ is the $i$-standard basis vector, $x' = p(\cos (\alpha)e_1 + \sin (\alpha)e_2)$, $x'' = (1-p)(\cos (-\alpha)e_1 + \sin (-\alpha)e_2)$ with $\alpha$ small and $p$ close to $1/2$. Let $\rho = (\mu(\theta^{\top} x') +  \mu(\theta^{\top} x''))/2$, $\theta$ proportional to $e_1$, such that 
$x'^{\top}\theta, x''^{\top}\theta \approx \rho$. In Figure \ref{figure:tbp-bench}, we observe a reduction between 5 and 10 times in the sample complexity of Log-TS compare to random sampling.

\textbf{Uniform Distribution on a Sphere.} In the same way as in BAI, $\mathcal{X}$ is sampled from a unit
sphere of dimension $d = 2$ centered at the origin. We set $\theta$ randomly such that $\|\theta\| = 1$, and $\rho = 0.5$. We also control the complexity $T^{\star}(\theta)^{-1}$ of the experiments across the different number arms to avoid high variance. In Table \ref{table:tbp-unif}, it is shown that Log-TS outperforms random sampling by roughly 5 times in sample complexity. This confirms the efficiency of Log-TS in many arms scenarios for different pure exploration problems. 

Algorithms were able to identify the arms above and below the threshold $\rho$ correctly in all instances with $\delta = 0.10$. 


\begin{table}[h]
\caption{Average sample complexities, expressed in thousands, of uniform distribution for TBP. The results are shown for various numbers of arms $K$. Standard deviations and means were computed across 10 randomized trials.} \label{table:tbp-unif}
\begin{center}
\begin{tabular}{ccc}
\toprule
$K$ & Log-TS & Random \\
\midrule
20& \textbf{7.49 (2.36)}& 15.22 (4.06) \\
30& \textbf{9.50 (0.65)}& 37.64 (10.84)\\
40& \textbf{12.86 (0.44)}& 50.84 (12.21)\\
50& \textbf{16.15 (3.64)}& 94.47 (23.06)\\
\bottomrule
\end{tabular}
\end{center}
\end{table}

\begin{figure}[h]
\vspace{.1in}
\begin{center}
\includegraphics[width=10cm]{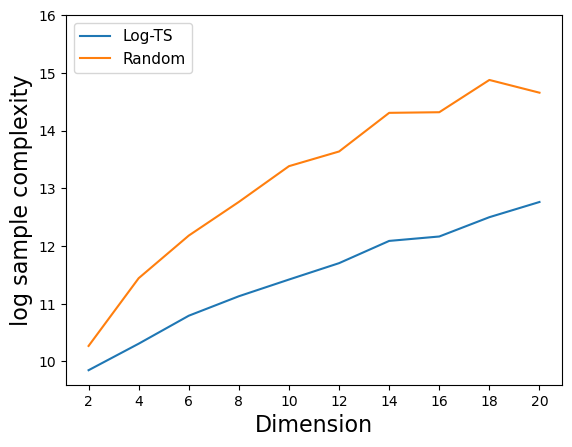}
\end{center}
\vspace{.1in}
\caption{Logarithm of sample complexity of the benchmark setup for TBP against dimension of the action space $\mathcal{X}$.}
\label{figure:tbp-bench}
\end{figure}

\section{Discussion}\label{section:discussion}

To the best of our knowledge, we propose the first track-and-stop algorithm for general pure exploration problems under logistic bandits, proving that our algorithm is near asymptotically optimal. This means that the sample complexity is upper-bounded in the limit up to logarithmic factor, and the upper bound matches the approximated lower bound. We evaluate our algorithm on BAI and TBP, demonstrating its advantages over existing algorithms and random sampling. The incorporation of forced exploration enables us to apply the tail inequality without a warm-up phase, enhancing the algorithm's applicability in real-world scenarios. On the other hand, the computation of the MLE and its projection can make the algorithm slow under certain conditions since it can not be computed recursively, so TS-Log could benefit from online approaches like the one presented in \citet{faury2022jointly}. Additionally, our asymptotic results suffer from an extra logarithmic factor $\log (1/\delta)$. This term arises directly from the tail inequality in Lemma \ref{lemma:concentration}; thus, to eliminate it, a tighter tail inequality is necessary, one that depends on $\gamma_t(\delta) = \mathcal{O}(\sqrt{\log(1/\delta)})$ instead of $\gamma_t(\delta) = \mathcal{O}(\log(1/\delta))$.


\newpage
\bibliographystyle{achemso}
\bibliography{references}


\begin{alphasection}

\section{Concentration}

The proof of Lemma \ref{lemma:concentration} follows tightly \citet{faury2020improved}, with the only difference that there is no regularization. Instead, we assume we can control the minimum eigenvalue of the fisher information matrix. First we will prove the general concentration bound

\begin{lemma} \label{lemma:concentration_2}
    Let $\left\{\mathcal{F}_t\right\}_{t=1}^{\infty}$ be a filtration. Let $\left\{x_t\right\}_{t=1}^{\infty}$ be a stochastic process in $\mathcal{B}_2(d)$ such that $x_t$ is $\mathcal{F}_t$ measurable. Let $\left\{\varepsilon_t\right\}_{t=1}^{\infty}$ be a martingale difference sequence such that $\varepsilon_{t}$ is $\mathcal{F}_{t}$ measurable. Furthermore, assume that conditionally on $\mathcal{F}_t$ we have $\left|\varepsilon_{t}\right| \leq 1$ almost surely, and note $\sigma_t^2:=\mathbb{E}\left[\varepsilon_{t}^2 \mid \mathcal{F}_{t-    1}\right]$. Let $\lambda(t)>0$ and for any $t \geq 1$ define:
$$
\mathbf{H}_t:=\sum_{s=1}^{t} \sigma_s^2 x_s x_s^T, \quad S_t:=\sum_{s=1}^{t} \varepsilon_{s} x_s .
$$

Then. if exist $t_0 \ge 1$ such that for $t \ge t_0$, $\lambda_{\min }(\mathbf{H}_t) > \lambda(t)$, for any $\delta \in(0,1]$ :

$$
\mathbb{P}\left(\exists t \geq 1,\left\|S_t\right\|_{\mathbf{H}_t^{-1}} \geq 2\sqrt{\lambda(t)}+\frac{2}{\sqrt{\lambda(t)}} \log \left(\frac{\operatorname{det}\left(\mathbf{H}_{t}\right)^{\frac{1}{2}} \lambda(t)^{-\frac{d}{2}}}{\delta}\right)+\frac{2}{\sqrt{\lambda(t)}} d \log (2)\right) \leq \delta
$$

\end{lemma}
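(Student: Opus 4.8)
The plan is to follow the method-of-mixtures argument of \citet{faury2020improved}, adapted to the regularization-free setting in which the role of the ridge term $\lambda \mathbf{I}$ is replaced by the hypothesis $\lambda_{\min}(\mathbf{H}_t) > \lambda(t)$. First I would recall the exponential supermartingale associated with the self-normalized process: for a fixed $\xi \in \reals^d$, the quantity $M_t(\xi) := \exp\bigl(\xi^\top S_t - \tfrac12 \xi^\top \mathbf{H}_t \xi\bigr)$ is a supermartingale with $\mathbb{E}[M_0] \le 1$, using the fact that $|\varepsilon_t| \le 1$ conditionally and hence its conditional log-MGF is dominated by the Gaussian one with variance $\sigma_t^2 = \mathbb{E}[\varepsilon_t^2 \mid \mathcal{F}_{t-1}]$ (a sub-Gaussianity-type bound of the form $\mathbb{E}[e^{s\varepsilon_t}\mid\mathcal F_{t-1}] \le e^{s^2\sigma_t^2/2}$ valid since $\varepsilon_t$ is bounded and centered). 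Then I would integrate $M_t(\xi)$ against a Gaussian mixing distribution $\xi \sim \mathcal{N}(0, H_0^{-1})$ for a suitable deterministic prior precision $H_0$; the standard Laplace-type Gaussian integral yields
$$
\bar M_t = \Bigl(\tfrac{\det H_0}{\det(H_0 + \mathbf{H}_t)}\Bigr)^{1/2} \exp\Bigl(\tfrac12 \|S_t\|_{(H_0 + \mathbf{H}_t)^{-1}}^2\Bigr),
$$
which is again a nonnegative supermartingale with expectation at most $1$.

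Next I would apply Ville's maximal inequality to $\bar M_t$: with probability at least $1-\delta$, $\|S_t\|_{(H_0 + \mathbf{H}_t)^{-1}}^2 \le 2\log(1/\delta) + \log\det(H_0 + \mathbf{H}_t) - \log\det H_0$ simultaneously for all $t \ge 1$. The remaining work is to convert the $(H_0 + \mathbf{H}_t)^{-1}$ norm into an $\mathbf{H}_t^{-1}$ norm and to bound the log-determinant ratio. Choosing the prior precision proportional to the identity, $H_0 = \lambda(t)\,\mathbf{I}$ — here I would have to be a little careful, since $\lambda(t)$ depends on $t$ whereas a genuine mixture prior must be fixed; the clean way around this, exactly as in \citet{faury2020improved}, is to take the prior to be a fixed small multiple of the identity and then use the monotonicity/stopping structure, or equivalently to note that on the event $t \ge t_0$ we have $\mathbf{H}_t \succeq \lambda(t)\mathbf{I}$ so that $H_0 + \mathbf{H}_t \preceq 2\mathbf{H}_t$ and hence $\|S_t\|_{\mathbf{H}_t^{-1}} \le \sqrt{2}\,\|S_t\|_{(H_0+\mathbf{H}_t)^{-1}}\cdot\sqrt{2}$ up to constants, giving the factor $2$ in front. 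For the log-determinant term I would bound $\det(H_0 + \mathbf{H}_t) \le \det(2\mathbf{H}_t) = 2^d \det(\mathbf{H}_t)$ and $\det H_0 = \lambda(t)^d$, producing $\log\det(H_0+\mathbf{H}_t) - \log\det H_0 \le \log\bigl(\det(\mathbf{H}_t)^{1/2}\lambda(t)^{-d/2}\bigr)^2 + d\log 2 \cdot(\text{const})$, and then taking square roots and using $\sqrt{a+b}\le\sqrt a + \sqrt b$ to split the bound into the three additive pieces $2\sqrt{\lambda(t)}$, $\frac{2}{\sqrt{\lambda(t)}}\log(\cdots)$, and $\frac{2}{\sqrt{\lambda(t)}}d\log 2$ as stated.

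The main obstacle I anticipate is precisely the $t$-dependence of $\lambda(t)$: the method of mixtures requires the mixing measure to be chosen before looking at the data, so one cannot literally plug in $H_0 = \lambda(t)\mathbf{I}$. The resolution is to run the mixture argument with a fixed prior and then restrict to the event $\{\lambda_{\min}(\mathbf{H}_t) > \lambda(t)\ \forall t \ge t_0\}$, on which all the sandwich inequalities $\mathbf{H}_t \preceq H_0 + \mathbf{H}_t \preceq 2\mathbf{H}_t$ hold with the running $\lambda(t)$; this is the one place the proof genuinely departs from the regularized version in \citet{faury2020improved} and where I would spend the most care to make sure the constants and the "$\forall t \ge 1$" versus "$\forall t \ge t_0$" bookkeeping are consistent with how the lemma is invoked in Lemma~\ref{lemma:concentration}. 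Everything else — the supermartingale property, the Gaussian integral, Ville's inequality, and the determinant manipulations — is routine.
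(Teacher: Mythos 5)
Your overall architecture (exponential supermartingale, Gaussian mixture, Ville's inequality, determinant--trace bounds) is the right family of argument and matches the paper's, and you correctly flag the $t$-dependence of $\lambda(t)$ as the place where the hypothesis $\lambda_{\min}(\mathbf H_t)>\lambda(t)$ must enter. But your very first step has a genuine gap. You assert that $\mathbb{E}[e^{s\varepsilon_t}\mid\mathcal F_{t-1}]\le e^{s^2\sigma_t^2/2}$ ``since $\varepsilon_t$ is bounded and centered,'' and build the supermartingale $M_t(\xi)=\exp\bigl(\xi^\top S_t-\tfrac12\xi^\top\mathbf H_t\xi\bigr)$ for arbitrary $\xi$ on top of it. That inequality is false in general: a bounded centered variable is not sub-Gaussian with variance proxy equal to its conditional variance. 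For example, if $\varepsilon=0.9$ with probability $0.1$ and $\varepsilon=-0.1$ with probability $0.9$, then $\sigma^2=0.09$ and at $s=1$ one has $\mathbb{E}[e^{\varepsilon}]\approx 1.060>e^{\sigma^2/2}\approx 1.046$ --- and this Bernoulli-type noise is exactly the case the lemma is built for. What is true, via $e^x\le 1+x+x^2$ for $x\le 1$, is $\mathbb{E}[e^{u\varepsilon_t}\mid\mathcal F_{t-1}]\le e^{u^2\sigma_t^2}$ \emph{only for} $|u|\le 1$: coefficient $1$ rather than $1/2$, and only locally. Consequently $\exp(\xi^\top S_t-\|\xi\|^2_{\mathbf H_t})$ is a supermartingale only for $\xi\in\mathcal B_2(d)$, and the untruncated Gaussian mixture with its clean closed-form $\bar M_t$ is unavailable.

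This is not cosmetic: the restriction to the unit ball is precisely why the paper (following Faury et al.) mixes against a Gaussian \emph{truncated} to $\mathcal B_2(d)$, lower-bounds the resulting integral by a Laplace-type argument over $\{\|\xi\|\le 1/2\}$, compares the two truncated normalization constants (whence the $2^d$ and $d\log 2$ terms in the bound), and finally evaluates at the specific point $\xi_0=\frac{\sqrt{\lambda(t)}}{4}\,\mathbf H_t^{-1}S_t/\|S_t\|_{\mathbf H_t^{-1}}$; the hypothesis $\lambda_{\min}(\mathbf H_t)>\lambda(t)$ is used exactly to guarantee $\|\xi_0\|\le 1/2$. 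That choice of $\xi_0$ is also what produces the additive threshold $c\sqrt{\lambda(t)}+\frac{c'}{\sqrt{\lambda(t)}}\log(\cdot)$. Your route, even if the MGF bound held, would yield a bound of the shape $\bigl(2\log(1/\delta)+\log\det(\cdots)\bigr)^{1/2}$, which does not reduce to the stated form via $\sqrt{a+b}\le\sqrt a+\sqrt b$ (you would get $\sqrt{\log(\cdot)}$ rather than $\lambda(t)^{-1/2}\log(\cdot)$). To repair the proof you need to replace the global sub-Gaussian step by the local bound above and then carry out the truncated-mixture computation.
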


\begin{proof}
    Let $\bar{\mathbf{H}}_{t} := \mathbf{H}_t - \lambda(t) \mathbf{I}_d$ , for $\xi \in \mathbb{R}^d$ let $M_0(\xi) = 1$ and for $t \ge t_0$ define:

    $$M_t(\xi):=\exp \left(\xi^T S_t-\|\xi\|_{\bar{\mathbf{H}}_t}^2\right)$$

    Using lemma 5 from \citet{faury2020improved}, for all $\xi \in \mathcal{B}_2(d),\left\{M_t(\xi)\right\}_{t=t_0}^{\infty}$ is a non-negative super-martingale.

    Let $h(\xi)$ be a probability density function with support on $\mathcal{B}_2(d)$ (to be defined later). For $t \geq t_0$ let:
    
    $$\bar{M}_t:=\int_{\xi} M_t(\xi) d h(\xi)
    $$

    By Lemma 20.3 of \citet{lattimore2020bandit} $\bar{M}_t$ is also a non-negative super-martingale, and $\mathbb{E}\left[\bar{M}_0\right]=1$. Let $\tau$ be a stopping time with respect to the filtration $\left\{F_t\right\}_{t=0}^{\infty}$. We can follow the proof of Lemma 8 in \citet{abbasi2011improved} to justify that $\bar{M}_\tau$ is well-defined (independently of whether $\tau<\infty$ holds or not) and that $\mathbb{E}\left[\bar{M}_\tau\right] \leq 1$. Therefore, with $\delta \in(0,1)$ and thanks to the maximal inequality:

    \begin{align}\label{eq:max_martingale}
        \mathbb{P}\left(\log \left(\bar{M}_\tau\right) \geq \log \left(\frac{1}{\delta}\right)\right)=\mathbb{P}\left(\bar{M}_\tau \geq \frac{1}{\delta}\right) \leq \delta  
    \end{align}

    We now proceed to compute $\bar{M}_t$ (more precisely a lower bound on  $\bar{M}_t$). Let $\lambda(t)$ be a strictly positive scalar, and set $h$ to be the density of an isotropic normal distribution of precision $2\lambda(t)$ truncated on $\mathcal{B}_2(d)$. We will denote $N(h)$ its normalization constant. Simple computations show that:

    $$
    \begin{aligned}
        \bar{M}_t &= \frac{1}{N(h)} \int_{\mathcal{B}_2(d)} \exp \left(\xi^T S_t-\|\xi\|_{\mathbf{H}_t}^2-\lambda(t)\|\xi\|^2\right) d \xi \\
         &\ge \frac{1}{N(h)} \int_{\mathcal{B}_2(d)} \exp \left(\xi^T S_t-2\|\xi\|_{\mathbf{H}_t}^2\right) d \xi
    \end{aligned}
    $$

To ease notations, let $f(\xi):=\xi^T S_t-2\|\xi\|_{\mathbf{H}_t}^2$ and $\xi_*=\arg \max _{\|\xi\|_2 \leq 1 / 2} f(\xi)$. Because:
$$
f(\xi)=f\left(\xi_*\right)+\left(\xi-\xi_*\right)^T \nabla f\left(\xi_*\right)-2\left(\xi-\xi_*\right)^T \mathbf{H}_t\left(\xi-\xi_*\right)
$$
we obtain that:
$$
\begin{aligned}
\bar{M}_t & =\frac{e^{f\left(\xi_*\right)}}{N(h)} \int_{\mathbb{R}^d} \mathbf{1}_{\|\xi\|_2 \leq 1} \exp \left(\left(\xi-\xi_*\right)^T \nabla f\left(\xi_*\right)-2\left(\xi-\xi_*\right)^T \mathbf{H}_t\left(\xi-\xi_*\right)\right) d \xi \\
& =\frac{e^{f\left(\xi_*\right)}}{N(h)} \int_{\mathbb{R}^d} \mathbf{1}_{\left\|\xi+\xi_*\right\|_2 \leq 1} \exp \left(\xi^T \nabla f\left(\xi_*\right)-2\xi^T \mathbf{H}_t \xi\right) d \xi \hspace{28pt} \text { (change of variable } \xi+\xi_* \text { ) } \\
& \geq \frac{e^{f\left(\xi_*\right)}}{N(h)} \int_{\mathbb{R}^d} \mathbf{1}_{\|\xi\|_2 \leq 1 / 2} \exp \left(\xi^T \nabla f\left(\xi_*\right)-2\xi^T \mathbf{H}_t \xi\right) d \xi \hspace{35pt} \text { (as }\left\|\xi_*\right\|_2 \leq 1 / 2 \text { ) } \\
& =\frac{e^{f\left(\xi_*\right)}}{N(h)} \int_{\mathbb{R}^d} \mathbf{1}_{\|\xi\|_2 \leq 1 / 2} \exp \left(\xi^T \nabla f\left(\xi_*\right)\right) \exp \left(-\frac{1}{2} \xi^T\left(4 \mathbf{H}_t\right) \xi\right) d \xi
\end{aligned}
$$

By defining $g(\xi)$ the density of the normal distribution of precision $4 \mathbf{H}_t$ truncated on the ball $\left\{\xi \in \mathbb{R}^d,\|\xi\|_2 \leq 1 / 2\right\}$ and noting $N(g)$ its normalizing constant, we can rewrite:
$$
\begin{aligned}
\bar{M}_t & \geq \exp \left(f\left(\xi_*\right)\right) \frac{N(g)}{N(h)} \mathbb{E}_g\left[\exp \left(\xi^T \nabla f\left(\xi_*\right)\right)\right] \\
& \geq \exp \left(f\left(\xi_*\right)\right) \frac{N(g)}{N(h)} \exp \left(\mathbb{E}_g\left[\xi^T \nabla f\left(\xi_*\right)\right]\right) \hspace{45pt} \text { (Jensen's inequality) } \\
& \geq \exp \left(f\left(\xi_*\right)\right) \frac{N(g)}{N(h)} \hspace{140pt} \text { (as } \mathbb{E}_g[\xi]=0 \text { ) }
\end{aligned}
$$

Unpacking this results and assembling (10) and (11), we obtain that for any $\xi_0$ such that $\left\|\xi_0\right\|_2 \leq 1 / 2$ :

\begin{equation}\label{eq:proba_martingale}
\begin{aligned}[b]
\mathbb{P}\left(\bar{M}_t \geq \frac{1}{\delta}\right) & \geq \mathbb{P}\left(\exp \left(f\left(\xi_*\right)\right) \frac{N(g)}{N(h)} \geq 1 / \delta\right) \\
& =\mathbb{P}\left(\log \left(\exp \left(f\left(\xi_*\right)\right) \frac{N(g)}{N(h)}\right) \geq \log (1 / \delta)\right) \\
& =\mathbb{P}\left(f\left(\xi_*\right) \geq \log (1 / \delta)+\log \left(\frac{N(h)}{N(g)}\right)\right) \\
& =\mathbb{P}\left(\max _{\|\xi\|_2 \leq 1 / 2} \xi^T S_t-2\|\xi\|_{\mathbf{H}_t}^2 \geq \log (1 / \delta)+\log \left(\frac{N(h)}{N(g)}\right)\right) \\
& \geq \mathbb{P}\left(\xi_0^T S_t-2\left\|\xi_0\right\|_{\mathbf{H}_t}^2 \geq \log (1 / \delta)+\log \left(\frac{N(h)}{N(g)}\right)\right)
\end{aligned}
\end{equation}

In particular, we can use:
$$
\xi_0:=\frac{\mathbf{H}_t^{-1} S_t}{\left\|S_t\right\|_{\mathbf{H}_t^{-1}}} \frac{\lambda(t)^{1/2}}{4}
$$
since
$$
\left\|\xi_0\right\|_2 \leq \frac{\lambda(t)^{1/2}}{4}\left(\lambda_{\min }\left(\mathbf{H}_t\right)\right)^{-1 / 2} \leq 1 / 2
$$

Using this value of $\xi_0$ in Eq. \eqref{eq:proba_martingale} yields:
$$
\mathbb{P}\left(\left\|S_t\right\|_{\mathbf{H}_t^{-1}} \geq \frac{\sqrt{\lambda(t)}}{2}+\frac{4}{\sqrt{\lambda(t)}} \log (1/\delta)+\frac{4}{\sqrt{\lambda(t)}} \log \left(\frac{N(h)}{ N(g)}\right)\right) \leq \mathbb{P}\left(\bar{M}_t \geq \frac{1}{\delta}\right)
$$

We can use the upper-bound from lemma 6 of \citet{faury2020improved} for the log of their ratio $\log \left(\frac{N(h)}{N(g)}\right)$. Therefore with probability at least $1-\delta$ and by using Eq. \eqref{eq:max_martingale}:
$$
\left\|S_\tau\right\|_{\mathbf{H}_\tau^{-1}} \leq \frac{\sqrt{\lambda(\tau)}}{2}+\frac{4}{\sqrt{\lambda(\tau)}} \log (1 / \delta)+\frac{4}{\sqrt{\lambda(\tau)}} \log \left(\frac{2^{d} \operatorname{det}\left(\mathbf{H}_{\tau}\right)^{1 / 2}}{\lambda(\tau)^{d/2} \delta}\right)+\frac{4}{\sqrt{\lambda(\tau)}} d \log (2)
$$

Directly following the stopping time construction argument in the proof of Theorem 1 of (Abbasi-Yadkori et al., 2011) we obtain that with probability at least $1-\delta$, for all $t \in \mathbb{N}$ :
$$
\left\|S_t\right\|_{\mathbf{H}_t^{-1}} \leq \frac{\sqrt{\lambda(t)}}{2}+\frac{4}{\sqrt{\lambda(t)}}\log \left(\frac{2^{d} \operatorname{det}\left(\mathbf{H}_t\right)^{1 / 2}}{\lambda(t)^{d/2} \delta}\right)+\frac{4}{\sqrt{\lambda(t)}} d \log (2)
$$

\end{proof}

\begin{lemma} 
(lemma 6 \citep{faury2020improved})
    The following inequality holds:
    $$
    \log \left(\frac{N(h)}{N(g)}\right) \leq \log \left(2^{d} \left(\frac{\operatorname{det}\left(\mathbf{H}_t\right)}{\lambda(t)^d}\right)^{1 / 2}\right)+d \log (2)
    $$
\end{lemma}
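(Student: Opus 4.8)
The plan is to estimate the two normalising constants directly and compare them. Recall from the construction that $h$ is the density of an isotropic Gaussian of precision $2\lambda(t)$ truncated to the unit ball and $g$ is the density of a Gaussian of precision $4\mathbf{H}_t$ truncated to the ball of radius $1/2$, so that $N(h)=\int_{\|\xi\|\le 1}e^{-\lambda(t)\|\xi\|^{2}}\,d\xi$ and $N(g)=\int_{\|\xi\|\le 1/2}e^{-2\xi^{\top}\mathbf{H}_t\xi}\,d\xi$. In each integral I would change variables so that the exponent becomes $\frac12\|\zeta\|^{2}$: for $N(h)$ the substitution $\zeta=\sqrt{2\lambda(t)}\,\xi$ gives $N(h)=(2\lambda(t))^{-d/2}G(\sqrt{2\lambda(t)})$, where $G(r):=\int_{\|\zeta\|\le r}e^{-\|\zeta\|^{2}/2}\,d\zeta$; for $N(g)$ the substitution $\zeta=2\mathbf{H}_t^{1/2}\xi$ has Jacobian $2^{-d}\det(\mathbf{H}_t)^{-1/2}$, sends $\{\|\xi\|\le 1/2\}$ onto the ellipsoid $\{\zeta:\zeta^{\top}\mathbf{H}_t^{-1}\zeta\le 1\}$, and yields $N(g)=2^{-d}\det(\mathbf{H}_t)^{-1/2}\int_{\zeta^{\top}\mathbf{H}_t^{-1}\zeta\le 1}e^{-\|\zeta\|^{2}/2}\,d\zeta$.

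Next I would use the assumption $\lambda_{\min}(\mathbf{H}_t)>\lambda(t)$ (the standing hypothesis whenever this lemma is applied in the proof of Lemma~\ref{lemma:concentration_2}): the ellipsoid $\{\zeta:\zeta^{\top}\mathbf{H}_t^{-1}\zeta\le 1\}$ contains the Euclidean ball of radius $\sqrt{\lambda_{\min}(\mathbf{H}_t)}>\sqrt{\lambda(t)}$, and since $G$ is nondecreasing this gives $N(g)\ge 2^{-d}\det(\mathbf{H}_t)^{-1/2}G(\sqrt{\lambda(t)})$. Combined with the exact expression for $N(h)$,
$$
\frac{N(h)}{N(g)}\ \le\ 2^{d}\,\det(\mathbf{H}_t)^{1/2}\,(2\lambda(t))^{-d/2}\,\frac{G(\sqrt{2\lambda(t)})}{G(\sqrt{\lambda(t)})}.
$$
The remaining ratio is controlled by one more substitution: writing $\zeta=\sqrt2\,\omega$ in $G(\sqrt2\,r)$ and using $e^{-\|\omega\|^{2}}\le e^{-\|\omega\|^{2}/2}$ gives $G(\sqrt2\,r)=2^{d/2}\int_{\|\omega\|\le r}e^{-\|\omega\|^{2}}\,d\omega\le 2^{d/2}G(r)$, so with $r=\sqrt{\lambda(t)}$ we get $G(\sqrt{2\lambda(t)})/G(\sqrt{\lambda(t)})\le 2^{d/2}$. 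Substituting back, the factors $2^{d}$, $(2\lambda(t))^{-d/2}=2^{-d/2}\lambda(t)^{-d/2}$ and $2^{d/2}$ collapse to $2^{d}\lambda(t)^{-d/2}$, hence $N(h)/N(g)\le 2^{d}(\det(\mathbf{H}_t)/\lambda(t)^{d})^{1/2}$; taking logarithms and adding the nonnegative term $d\log 2$ gives the stated inequality.

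I do not expect a genuine obstacle here: the proof is two Gaussian changes of variables together with a ball-in-ellipsoid inclusion. The one point that really warrants care is \emph{not} bounding $N(h)$ by the full-space integral $(\pi/\lambda(t))^{d/2}$ and keeping the truncation instead; the full-space route would force a lower bound on the mass of a truncated Gaussian over a ball of radius $\sqrt{\lambda(t)}$, which is delicate when $\lambda(t)$ is small relative to $d$, whereas comparing $G(\sqrt{2\lambda(t)})$ with $G(\sqrt{\lambda(t)})$ removes any size condition on $\lambda(t)$. The other detail to check is the inclusion of the ball of radius $\sqrt{\lambda(t)}$ inside the ellipsoid: this is the only place the hypothesis $\lambda_{\min}(\mathbf{H}_t)>\lambda(t)$ enters, and it is exactly what produces the $\lambda(t)^{d}$ in the denominator of the bound.
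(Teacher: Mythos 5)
Your argument is correct, and it is worth noting that the paper itself offers no proof of this statement --- it is imported verbatim as Lemma~6 of Faury et al.\ (2020) --- so yours is a genuinely self-contained derivation rather than a variant of one in the text. I checked the individual steps: the two changes of variables give $N(h)=(2\lambda(t))^{-d/2}G(\sqrt{2\lambda(t)})$ and $N(g)=2^{-d}\det(\mathbf{H}_t)^{-1/2}\int_{\zeta^{\top}\mathbf{H}_t^{-1}\zeta\le 1}e^{-\|\zeta\|^{2}/2}\,d\zeta$ with the Jacobians as you state; the inclusion of the ball of radius $\sqrt{\lambda_{\min}(\mathbf{H}_t)}\ge\sqrt{\lambda(t)}$ in the ellipsoid is exactly where the standing hypothesis $\lambda_{\min}(\mathbf{H}_t)>\lambda(t)$ of Lemma~\ref{lemma:concentration_2} enters; and $G(\sqrt{2}r)\le 2^{d/2}G(r)$ is immediate from the rescaling you give. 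The factors assemble to $N(h)/N(g)\le 2^{d}\bigl(\det(\mathbf{H}_t)/\lambda(t)^{d}\bigr)^{1/2}$, which is in fact \emph{stronger} than the claimed bound by the additive $d\log 2$. Your interpretation of $N(h)$ and $N(g)$ as the integrals of the bare Gaussian kernels over the truncation regions is the one forced by how they are used in the proof of Lemma~\ref{lemma:concentration_2} (e.g.\ in rewriting the integral as $N(g)\,\mathbb{E}_g[\cdot]$), so there is no convention mismatch. Compared with the original route in Faury et al., which upper bounds $N(h)$ by the full-space integral $(\pi/\lambda(t))^{d/2}$ and then must lower bound the truncated Gaussian mass in $N(g)$, your comparison of $G(\sqrt{2\lambda(t)})$ with $G(\sqrt{\lambda(t)})$ sidesteps any smallness condition on $\lambda(t)$ relative to $d$ and yields the cleaner constant; the only cost is that the bound is now explicitly tied to the hypothesis $\lambda_{\min}(\mathbf{H}_t)>\lambda(t)$, which is harmless here since the lemma is only invoked under that assumption.
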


\textbf{Lemma \ref{lemma:concentration}}
    Let $\delta \in(0,1]$ and $\lambda(t) > 0$ for $ t \ge 1$. If exist $t_0 \ge 1$ such that for $t \ge t_0$, $\lambda_{\min }(H_t\left(\theta_*\right)) > \lambda(t)$, with probability at least $1-\delta$:
    
    $$
    \forall t \geq t_0, \quad\left\|g_t(\hat{\theta}_t)-g_t\left(\theta_*\right)\right\|_{\mathbf{H}_t^{-1}\left(\theta_*\right)} \leq \gamma_t(\delta)
    $$

    Where $\gamma_t(\delta):=\frac{\sqrt{\lambda(t)}}{2}+\frac{4}{\sqrt{\lambda(t)}} \log \left(\frac{2^d}{\delta}\left(\frac{L t}{d \lambda(t)}\right)^{\frac{d}{2}}\right)$

\begin{proof}
    Recall that $\hat{\theta}_t$ is the unique maximizer of the log-likelihood:
$$
\mathcal{L}_t^\lambda(\theta):=\sum_{s=1}^{t}\left[r_{s} \log \mu\left(x_s^{\top} \theta\right)+\left(1-r_{s}\right) \log \left(1-\mu\left(x_s^{\top} \theta\right)\right)\right]
$$
and therefore $\hat{\theta}_t$ is a critical point of $\mathcal{L}_t^\lambda(\theta)$. Solving for $\nabla_\theta \mathcal{L}_t^\lambda=0$ and using the fact that $\dot{\mu}=\mu(1-\mu)$ we obtain:
$$
\sum_{s=1}^{t} \mu\left(x_s^{\top}\hat{\theta}_t \right) x_s=\sum_{s=1}^{t} r_{s} x_s
$$

This result, combined with the definition of $g_t\left(\theta_*\right)=\sum_{s=1}^{t-1} \mu\left(x_s^{\top} \theta_*\right) x_s$ yields:
$$
\begin{aligned}
g_t(\hat{\theta}_t)-g_t\left(\theta_*\right) & =\sum_{s=1}^{t} \varepsilon_{s} x_s \\
& =S_t
\end{aligned}
$$
where we denoted $\varepsilon_{s}:=r_{s}-\mu\left(x_s^{\top} \theta_*\right)$ for all $s \geq 1$ and $S_t:=\sum_{s=1}^{t} \varepsilon_{s} x_s$ for all $t \geq 1$. Then:

\begin{align}\label{eq:assem}
    \left\|g_t(\hat{\theta}_t)-g_t\left(\theta_*\right)\right\|_{\mathbf{H}_t^{-1}\left(\theta_*\right)} = \left\|S_t\right\|_{\mathbf{H}_t^{-1}\left(\theta_*\right)}    
\end{align}

Note that $\left\{\varepsilon_t\right\}_{t=1}^{\infty}$ is a martingale difference sequence adapted to $\mathcal{F}$ and almost surely bounded by 1 . Also, note that for all $s \geq 1$ :
$$
\dot{\mu}\left(x_s^{\top}\theta_*\right)=\mu\left(x_s^{\top} \theta_*\right)\left(1-\mu\left(x_s^{\top} \theta_*\right)\right)=\mathbb{E}\left[\varepsilon_{s}^2 \mid \mathcal{F}_{s-1}\right]=: \sigma_s^2
$$
and thus $\mathbf{H}_t\left(\theta_*\right)=\sum_{s=1}^{t} \sigma_s^2 x_s x_s^{\top}$. All the conditions of Lemma \ref{lemma:concentration_2} are checked and therefore:

\begin{equation}\label{eq:prob_lb}
\begin{aligned}[b]
1-\delta & \leq \mathbb{P}\left(\forall t \geq 1,\left\|S_t\right\|_{\mathbf{H}_t^{-1}\left(\theta_*\right)} \leq \frac{\sqrt{\lambda(t)}}{2}+\frac{4}{\sqrt{\lambda(t)}} \log \left(\frac{\operatorname{det}\left(\mathbf{H}_t\left(\theta_*\right)\right)^{1 / 2}}{\lambda(t)^{d/2} \delta}\right)+\frac{4 d}{\sqrt{\lambda(t)}} \log (2)\right) \\
& \leq \mathbb{P}\left(\forall t \geq 1,\left\|S_t\right\|_{\mathbf{H}_t^{-1}\left(\theta_*\right)} \leq \frac{\sqrt{\lambda(t)}}{2}+\frac{4}{\sqrt{\lambda(t)}} \log \left(\frac{(L t / d)^{d / 2}}{\lambda(t)^{d/2}\delta}\right)+\frac{4 d}{\sqrt{\lambda(t)}} \log (2)\right) \\
& \leq \mathbb{P}\left(\forall t \geq 1,\left\|S_t\right\|_{\mathbf{H}_t^{-1}\left(\theta_*\right)} \leq \frac{\sqrt{\lambda(t)}}{2}+\frac{4}{\sqrt{\lambda(t)}} \log \left(\frac{1}{\delta}\left(\frac{L t}{d \lambda(t)}\right)^{d / 2}\right)+\frac{4 d}{\sqrt{\lambda(t)}} \log (2)\right) \\
& =\mathbb{P}\left(\forall t \geq 1,\left\|S_t\right\|_{\mathbf{H}_t^{-1}\left(\theta_*\right)} \leq \gamma_t(\delta)\right)
\end{aligned}    
\end{equation}

where we used that:
$$
\operatorname{det}\left(\mathbf{H}_t\left(\theta_*\right)\right) \leq L^d \operatorname{det}\left(\sum_{s=1}^{t} x_s x_s^{\top}\right) \leq L^d\left(\frac{t}{d}\right)^d \leq\left(\frac{L t}{d}\right)^d
$$
thanks to Lemma \ref{lemma:det-trace-inneq}. Assembling Eq. \eqref{eq:assem} with Eq. \eqref{eq:prob_lb} yields:
$$
\begin{aligned}
\mathbb{P}\left(\forall t \geq 1,\left\|g_t(\hat{\theta}_t)-g_t\left(\theta_*\right)\right\|_{\mathbf{H}_t^{-1}\left(\theta_*\right)} \leq \gamma_t(\delta)\right) & = \mathbb{P}\left(\forall t \geq 1,\left\|S_t\right\|_{\mathbf{H}_t^{-1}\left(\theta_*\right)}\leq \gamma_t(\delta)\right) \\
& \geq 1-\delta
\end{aligned}
$$
hence the announced result.
\end{proof}

For next results, we will use the following notations:
$$
\begin{aligned}
\alpha\left(x, \theta_1, \theta_2\right) & :=\int_{v=0}^1 \dot{\mu}\left(v x^{\top} \theta_2+(1-v) x^{\top} \theta_1\right) d v>0 \\
\mathbf{G}_t\left(\theta_1, \theta_2\right) & :=\sum_{s=1}^{t-1} \alpha\left(x, \theta_1, \theta_2\right) x_s x_s^{\top} \mathbf{I}_d
\end{aligned}
$$
where $\theta_1, \theta_2$ and $x$ are vectors in $\mathbb{R}^d$.
The quantities $\alpha\left(x, \theta_1, \theta_2\right)$ and $\mathbf{G}_t\left(\theta_1, \theta_2\right)$ naturally arise when studying GLMs. Indeed, note that for all $x \in \mathbb{R}^d$ and $\theta \in \mathbb{R}^d$, the following equality holds:
$$
\mu\left(x^{\boldsymbol{\top}} \theta_1\right)-\mu\left(x^{\top} \theta_2\right)=\alpha\left(x, \theta_2, \theta_1\right) x^{\boldsymbol{\top}}\left(\theta_1-\theta_2\right)
$$

This result is classical (see \citet{filippi2010parametric}) and can be obtained by a straight-forward application of the mean-value theorem. It notably allows us to link $\theta_1-\theta_2$ with $g_t\left(\theta_1\right)-g_t\left(\theta_2\right)$. Namely, it is straightforward that:
$$
\begin{aligned}
g_t\left(\theta_1\right)-g_t\left(\theta_2\right) & =\sum_{s=1}^{t-1} \alpha\left(x_s, \theta_2, \theta_1\right) x_s x_s^{\boldsymbol{\top}}\left(\theta_1-\theta_2\right) \\
& =\mathbf{G}_t\left(\theta_2, \theta_1\right)\left(\theta_1-\theta_2\right)
\end{aligned}
$$

Because $\mathbf{G}_t\left(\theta_1, \theta_2\right) \succ \mathbf{0}_{d \times d}$ this yields:
$$
\left\|\theta_1-\theta_2\right\|_{\mathbf{G}_t\left(\theta_2, \theta_1\right)}=\left\|g_t\left(\theta_1\right)-g_t\left(\theta_2\right)\right\|_{\mathbf{G}_t^{-1}\left(\theta_2, \theta_1\right)}
$$

\begin{lemma}
    (Lemma 10 \citep{faury2020improved})For all $\theta_1, \theta_2 \in \Theta$ the following inequalities hold:
$$
\begin{aligned}
& \mathbf{G}_t\left(\theta_1, \theta_2\right) \geq(1+2 S)^{-1} \mathbf{H}_t\left(\theta_1\right) \\
& \mathbf{G}_t\left(\theta_1, \theta_2\right) \geq(1+2 S)^{-1} \mathbf{H}_t\left(\theta_2\right)
\end{aligned}
$$
\end{lemma}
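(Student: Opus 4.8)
The plan is to reduce both matrix inequalities to a single pointwise scalar comparison: for each arm $x$ with $\|x\|\le 1$ and each $\theta_1,\theta_2\in\Theta$, show that $\alpha(x,\theta_1,\theta_2)\ge (1+2S)^{-1}\dot{\mu}(x^{\top}\theta_1)$ (and symmetrically with $\theta_2$); summing the rank-one terms $x_s x_s^{\top}$ then yields the claim. The key analytic tool is the generalized self-concordance of the logistic link: from $\dot{\mu}(z)=\mu(z)(1-\mu(z))$ and $\ddot{\mu}(z)=\dot{\mu}(z)(1-2\mu(z))$ one gets $|\ddot{\mu}(z)|\le\dot{\mu}(z)$, hence $\bigl|\tfrac{d}{dz}\log\dot{\mu}(z)\bigr|\le 1$, so that $\log\dot{\mu}$ is $1$-Lipschitz and
$$
\dot{\mu}(z')\ \ge\ \dot{\mu}(z)\,e^{-|z-z'|}\qquad\text{for all }z,z'\in\mathbb{R}.
$$

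Next I would plug this into the integral defining $\alpha$. Writing $z_i=x^{\top}\theta_i$ and $u=|z_1-z_2|=|x^{\top}(\theta_1-\theta_2)|\le\|x\|\,\|\theta_1-\theta_2\|\le 2S$, the point $v z_2+(1-v)z_1$ sits at distance $vu$ from $z_1$, so
$$
\alpha(x,\theta_1,\theta_2)=\int_0^1\dot{\mu}\bigl(v z_2+(1-v)z_1\bigr)\,dv\ \ge\ \dot{\mu}(z_1)\int_0^1 e^{-vu}\,dv\ =\ \dot{\mu}(z_1)\,\frac{1-e^{-u}}{u}.
$$
I would then invoke the elementary inequality $\frac{1-e^{-u}}{u}\ge\frac{1}{1+u}$ (equivalent to $e^{u}\ge 1+u$) and the monotonicity $\frac{1}{1+u}\ge\frac{1}{1+2S}$ for $u\le 2S$, giving $\alpha(x,\theta_1,\theta_2)\ge(1+2S)^{-1}\dot{\mu}(x^{\top}\theta_1)$. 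Applying this to each sampled arm $x_s$ and summing the positive semidefinite terms $x_s x_s^{\top}$ produces $\mathbf{G}_t(\theta_1,\theta_2)\succeq(1+2S)^{-1}\mathbf{H}_t(\theta_1)$. The change of variables $v\mapsto 1-v$ shows $\alpha(x,\theta_1,\theta_2)=\alpha(x,\theta_2,\theta_1)$, so repeating the argument with the roles of $\theta_1,\theta_2$ swapped gives the second inequality $\mathbf{G}_t(\theta_1,\theta_2)\succeq(1+2S)^{-1}\mathbf{H}_t(\theta_2)$.

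There is no deep obstacle here; the proof is essentially a bookkeeping exercise once self-concordance is available. The only point that needs care is making the constant come out exactly as $(1+2S)^{-1}$: a crude application of $\dot{\mu}(z')\ge\dot{\mu}(z)e^{-|z-z'|}$ under the integral would only give the weaker factor $e^{-2S}$, so one must keep the dependence on $v$ inside the exponential and integrate before bounding, then use $\frac{1-e^{-u}}{u}\ge\frac{1}{1+u}$ rather than $e^{-u}$. Checking that the constraint $\theta_1,\theta_2\in\Theta$ correctly yields $u\le 2S$ (via $\|\theta_1-\theta_2\|\le 2S$ and $\|x\|\le 1$) is the other place where one must be precise.
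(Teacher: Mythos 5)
Your proof is correct, and it is essentially the standard argument: the paper itself does not reprove this lemma but imports it verbatim from Faury et al.\ (2020), whose proof proceeds exactly as you do --- the self-concordance bound $|\ddot{\mu}|\le\dot{\mu}$ giving $\dot{\mu}(z')\ge\dot{\mu}(z)e^{-|z-z'|}$, integration of $e^{-vu}$ over $v\in[0,1]$, the inequality $\tfrac{1-e^{-u}}{u}\ge\tfrac{1}{1+u}$, and $u\le 2S$ from $\|x\|\le 1$, $\theta_1,\theta_2\in\Theta$. Your remark about keeping the $v$-dependence inside the exponential before integrating (rather than crudely bounding by $e^{-2S}$) is precisely the point that makes the constant come out as $(1+2S)^{-1}$.
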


\begin{lemma}\label{lemma:selfconcordance}
    $$
    \begin{aligned}
    \|\theta^{(1)}_t-\theta^*\|_{\mathbf{H}_t(\theta^{(1)}_t)} \le 2\left(1+2S\right) \|g_t(\hat{\theta}_t)-g_t(\theta^*)\|_{\mathbf{H}_t^{-1}(\theta^*)}
    \end{aligned}  
    $$
\end{lemma}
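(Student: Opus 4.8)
The plan is to run the generalized self-concordance argument of \citet{faury2020improved}, chaining the three lemmas immediately preceding this statement. Abbreviate $\mathbf{H}^{(1)}_t = \mathbf{H}_t(\theta^{(1)}_t)$, $\mathbf{H}^*_t = \mathbf{H}_t(\theta^*)$ and $\mathbf{G}_t = \mathbf{G}_t(\theta^*,\theta^{(1)}_t)$; note $\mathbf{G}_t \succ \mathbf{0}$ since $\alpha(x,\cdot,\cdot)>0$ and $\mathcal{X}$ spans $\mathbb{R}^d$, so every weighted norm and inverse below is well defined, and both $\theta^*$ and $\theta^{(1)}_t$ lie in $\Theta$ so Lemma 10 of \citet{faury2020improved} applies to $\mathbf{G}_t$. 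First I would use the bound $\mathbf{G}_t \succeq (1+2S)^{-1}\mathbf{H}^{(1)}_t$ from that lemma to pass from the target norm to the $\mathbf{G}_t$-norm, $\|\theta^{(1)}_t-\theta^*\|_{\mathbf{H}^{(1)}_t} \le \sqrt{1+2S}\,\|\theta^{(1)}_t-\theta^*\|_{\mathbf{G}_t}$, and then apply the exact identity $\|\theta_1-\theta_2\|_{\mathbf{G}_t(\theta_2,\theta_1)} = \|g_t(\theta_1)-g_t(\theta_2)\|_{\mathbf{G}_t^{-1}(\theta_2,\theta_1)}$ (established just above Lemma 10 via the mean-value decomposition $g_t(\theta_1)-g_t(\theta_2) = \mathbf{G}_t(\theta_2,\theta_1)(\theta_1-\theta_2)$) to rewrite the right-hand side as $\sqrt{1+2S}\,\|g_t(\theta^{(1)}_t)-g_t(\theta^*)\|_{\mathbf{G}_t^{-1}}$.

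Next I would split $\|g_t(\theta^{(1)}_t)-g_t(\theta^*)\|_{\mathbf{G}_t^{-1}}$ by the triangle inequality through $g_t(\hat{\theta}_t)$, which is legitimate precisely because all three terms carry the same weight $\mathbf{G}_t^{-1}$. To the two resulting pieces I apply the two halves of Lemma 10 separately: $\mathbf{G}_t \succeq (1+2S)^{-1}\mathbf{H}^{(1)}_t$ gives $\mathbf{G}_t^{-1} \preceq (1+2S)(\mathbf{H}^{(1)}_t)^{-1}$, hence $\|g_t(\theta^{(1)}_t)-g_t(\hat{\theta}_t)\|_{\mathbf{G}_t^{-1}} \le \sqrt{1+2S}\,\|g_t(\theta^{(1)}_t)-g_t(\hat{\theta}_t)\|_{(\mathbf{H}^{(1)}_t)^{-1}}$, while $\mathbf{G}_t \succeq (1+2S)^{-1}\mathbf{H}^*_t$ gives $\|g_t(\hat{\theta}_t)-g_t(\theta^*)\|_{\mathbf{G}_t^{-1}} \le \sqrt{1+2S}\,\|g_t(\hat{\theta}_t)-g_t(\theta^*)\|_{(\mathbf{H}^*_t)^{-1}}$.

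The decisive step is the optimality of the projection: since $\theta^*\in\Theta$ and $\theta^{(1)}_t$ minimizes $\theta\mapsto\|g_t(\theta)-g_t(\hat{\theta}_t)\|_{\mathbf{H}_t^{-1}(\theta)}$ over $\Theta$, we have $\|g_t(\theta^{(1)}_t)-g_t(\hat{\theta}_t)\|_{(\mathbf{H}^{(1)}_t)^{-1}} \le \|g_t(\theta^*)-g_t(\hat{\theta}_t)\|_{(\mathbf{H}^*_t)^{-1}} = \|g_t(\hat{\theta}_t)-g_t(\theta^*)\|_{(\mathbf{H}^*_t)^{-1}}$. Feeding this into the first piece, both pieces are bounded by $\sqrt{1+2S}\,\|g_t(\hat{\theta}_t)-g_t(\theta^*)\|_{(\mathbf{H}^*_t)^{-1}}$, so $\|g_t(\theta^{(1)}_t)-g_t(\theta^*)\|_{\mathbf{G}_t^{-1}} \le 2\sqrt{1+2S}\,\|g_t(\hat{\theta}_t)-g_t(\theta^*)\|_{(\mathbf{H}^*_t)^{-1}}$, and multiplying by the leading $\sqrt{1+2S}$ produces exactly the factor $2(1+2S)$.

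The only real difficulty is bookkeeping with the three distinct weighting matrices: the projection is defined with the $\theta$-dependent weight $\mathbf{H}_t(\theta)^{-1}$, the left side of the claim uses $\mathbf{H}^{(1)}_t$, and the right side uses $\mathbf{H}^*_t$, so each comparison must be routed through $\mathbf{G}_t$ — the unique matrix simultaneously dominated, up to the constant $(1+2S)^{-1}$, by both $\mathbf{H}^{(1)}_t$ and $\mathbf{H}^*_t$ — before any triangle inequality can be invoked. Getting the powers of $\sqrt{1+2S}$ to combine correctly (two from the two Lemma 10 comparisons on the split terms, one from converting the target norm, for a net $2(1+2S)$) is the place to be careful; everything else is routine.
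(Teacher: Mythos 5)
Your proposal is correct and follows essentially the same route as the paper's proof: pass to the $\mathbf{G}_t(\theta^*,\theta^{(1)}_t)$-norm via Lemma 10 of \citet{faury2020improved}, use the mean-value identity to switch to $g_t$, split by the triangle inequality through $g_t(\hat{\theta}_t)$, convert each piece back with the two halves of Lemma 10, and invoke the optimality of the projection to bound the first piece by the second. The paper leaves the projection-optimality step implicit in its final inequality, whereas you spell it out; otherwise the arguments and the accounting of the $\sqrt{1+2S}$ factors are identical.
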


\begin{proof}
    $$
    \begin{aligned}
    \|\theta_t^{(1)}-\theta^*\|_{\mathbf{H}_t(\theta_t^{(1)})} & \le \sqrt{1+2S}\|\theta_t^{(1)}-\theta^*\|_{\mathbf{G}_t(\theta^*, \theta_t^{(1)})} \\
    & = \sqrt{1+2S}\|g_t(\theta_t^{(1)})-g_t(\theta^*)\|_{\mathbf{G}_t^{-1}(\theta^*,\theta_t^{(1)})} \\
    & \le \sqrt{1+2S} \left( \|g_t(\theta_t^{(1)})-g_t(\hat{\theta}_t)\|_{\mathbf{G}_t^{-1}(\theta^*,\theta_t^{(1)})} + \|g_t(\hat{\theta}_t)-g_t(\theta^*)\|_{\mathbf{G}_t^{-1}(\theta^*,\theta_t^{(1)})} \right)\\
    & \le (1+2S) \left( \|g_t(\theta_t^{(1)})-g_t(\hat{\theta}_t)\|_{\mathbf{H}_t^{-1}(\theta_t^{(1)})} + \|g_t(\hat{\theta}_t)-g_t(\theta^*)\|_{\mathbf{H}_t^{-1}(\theta^*)} \right)\\
    & \le 2(1+2S)\|g_t(\hat{\theta}_t)-g_t(\theta^*)\|_{\mathbf{H}_t^{-1}(\theta^*)} \\
    \end{aligned}  
    $$
\end{proof}

\textbf{Lemma \ref{lemma:tail_error_th}}

Let $\epsilon>0$, assume that $\lambda_{\min }\left(\sum_{s=1}^t x_s x_s^{\top}\right) \geq c t^\alpha \ge \kappa_0 \lambda_0$ a.s. for all $t \geq t_0$, some $t_0 \geq 1$ and some constants $\alpha,c>0$. Then
    $$
    \forall t \geq t_0 \quad \mathbb{P}\left(\|\hat{\theta}_t-\theta^*\| \geq \varepsilon\right) \leq c_2 t^{d/2} \exp \left(- c_1 \varepsilon t^{\alpha/2}\right) 
    $$

    Where $c_1, c_2$ are positive constants independent of $\varepsilon$ and $t$.

\begin{proof}
    Similar to Lemma \ref{lemma:selfconcordance}, we have

    $$
    \begin{aligned}
    \|\hat{\theta}_t-\theta^*\|
    &\le 
    \frac{\|\hat{\theta}_t-\theta^*\|_{\mathbf{H}_t(\theta^*)}}{\sqrt{\lambda_{\min}(\mathbf{H}_t(\theta^*))}} \\
    &\le \frac{2(1+2S)}{\sqrt{\lambda_{\min}(\mathbf{H}_t(\theta^*))}}\|g_t(\hat{\theta}_t)-g_t(\theta^*)\|_{\mathbf{H}_t^{-1}(\theta^*)} \\
    &\le \frac{2(1+2S)}{\sqrt{c/\kappa_0}} t^{-\alpha/2} \|g_t(\hat{\theta}_t)-g_t(\theta^*)\|_{\mathbf{H}_t^{-1}(\theta^*)}
    \end{aligned}  
    $$

    And, with probability at least $1-\delta$

    $$
    \begin{aligned}
    \|\hat{\theta}_t-\theta^*\| &
    &\le \frac{2(1+2S)}{\sqrt{c/\kappa_0}} t^{-\alpha/2} \gamma_t(\delta)
    \end{aligned}  
    $$

   which we may rewrite after substitution as

    $$
    \mathbb{P}\left( \|\hat{\theta}_t-\theta^*\| > \varepsilon \right) < c_2 t^{d/2} \exp \left(- c_1 \varepsilon t^{\alpha/2}\right) 
    $$
    
\end{proof}

\textbf{Lemma \ref{lemma:delta_correct}}. 

Under any sampling rule, we have
    $$\mathbb{P}_\theta\left(\tau_\delta<\infty \wedge i^{\star}(\theta^{(1)}_{\tau_\delta}) \neq i^{\star}(\theta)\right) \leq \delta$$

\begin{proof}

Lets consider 

$$\mathcal{E}_1 = \{ \tau_\delta < \infty \} = \{ \exists t \geq 1 : Z(t) > \beta(\delta, t), t \in B\} $$

$$\mathcal{E}_2 = \{ i^{\star}(\theta^{(1)}_{\tau_\delta})\neq i^{\star}(\theta) \} = \{\theta 
\in \Alt(\theta^{(1)}_{\tau_\delta}) \}$$


$$
\begin{aligned}
\mathcal{E}_1 \cap \mathcal{E}_2 & =\left\{  \exists t \geq 1 : Z(t) > \beta(\delta, t), t \in B, \theta 
\in \Alt(\theta^{(1)}_{t}) \right\} \\
& = \left\{  \exists t \geq 1 : \inf _{\lambda \in \Alt(\theta^{(1)}_{t})} \frac{1}{2}\|\theta^{(1)}_{t}-\lambda\|_{\mathbf{H}_t(\theta^{(1)}_{t})}^2 > \beta(\delta, t),  t \in B, \theta 
\in \Alt(\theta^{(1)}_{t})\right\} \\
& \subseteq \left\{  \exists t \geq 1 : \frac{1}{2}\|\theta^{(1)}_{t}-\theta\|_{\mathbf{H}_t(\theta^{(1)}_{t})}^2 > \beta(\delta, t),   t \in B \right\} \\
& \subseteq \left\{  \exists t \geq 1 : \sqrt{2}\left(1+2S\right)\|g_t(\hat{\theta}_t)-g_t(\theta^*)\|_{\mathbf{H}_t^{-1}(\theta^*)} > \sqrt{2}\left(1+2S\right)\gamma_t(\delta), t \in B \right\}
.
\end{aligned}
$$

Where the first containment is because of Lemma \ref{lemma:selfconcordance}. Using Lemma \ref{lemma:concentration} we have

$$
\begin{aligned}
\mathbb{P}_{\theta}[\mathcal{E}_1 \cap \mathcal{E}_2 ]<\delta
\end{aligned}
$$

Thus

$$
\begin{aligned}
\mathbb{P}_\theta\left(\tau_\delta<\infty \wedge i^{\star}(\theta^{(1)}_{\tau_\delta}) \neq i^{\star}(\theta)\right) = \mathbb{P}_{\theta}[\mathcal{E}_1 \cap \mathcal{E}_2 ]<\delta
\end{aligned}
$$

\end{proof}

\textbf{Lemma \ref{lemma:strong_cons}}

    Under forced exploration, the MLE estimator convergence a.s. to the true parameter

    $$
    \lim _{t \rightarrow \infty} \hat{\theta}_t \stackrel{a.s.}{=} \theta^*
    $$

\begin{proof}
    Using Lemma \ref{lemma:forced_expl} we have that exists $t_0$ such that $\lambda_{\min }\left(\sum_{s=1}^t x_s x_s^{\top}\right) \geq c_{\mathcal{X}_0}\sqrt{t}$ if $t \ge t_0$. On the other hand, 

    $$
    \begin{aligned}
        \lambda_{\max }\left(\sum_{s=1}^t x_s x_s^{\top}\right) &\le \operatorname{Tr} (\sum_{s=1}^t x_s x_s^{\top}) \\
        &= \sum_{s=1}^t\operatorname{Tr} (x_s x_s^{\top})  \\
        &= \sum_{s=1}^t \|x_s\|^2  \\
        &\le t
    \end{aligned}
    $$
    
    then

        $$
        \lim _{t \rightarrow \infty} \frac{\lambda_{\min}\left(\sum_{s=1}^t x_s x_s^{\top}\right)}{\log\left(\lambda_{\max}\left(\sum_{s=1}^t x_s x_s^{\top}\right)\right)} = \infty
        $$

    Using Theorem 2 of \citet{chen1999strong} we have $\lim _{t \rightarrow \infty} \hat{\theta}_t \stackrel{a.s.}{=} \theta^*$.
        
\end{proof}

\section{Tracking}

$$
    \psi(\theta, w)= \inf _{\lambda \in \Alt(\theta)}
\frac{1}{2}\|\theta-\lambda\|_{\mathbf{H}_w(\theta)}^2
$$

\begin{lemma} \label{lemma:lemma_contpsi}
If exists $\epsilon_0 > 0$ such that $i^{\star}(\theta_t) =i^{\star}(\theta)$ for all $\theta_t$ such that $\|\theta_t-\theta\|<\epsilon_0$, then $\psi$ is continuous in both $\theta$ and $w$, and $w \mapsto \psi(\theta, w)$ attains its maximum in $\Sigma$ at a point $w_\theta^{\star}$ such that $\sum_{x \in \mathcal{X}}\left(w_\theta^{\star}\right)_x x x^{\top}$ is invertible.

\end{lemma}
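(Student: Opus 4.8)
The plan is to establish the two assertions — joint continuity of $\psi$, and existence of a maximizer $w_\theta^\star$ with $\mathbf{A}(w):=\sum_{x\in\mathcal{X}}w_x x x^\top$ positive definite — in that order: the first via the maximum theorem, the second by combining a strict positivity bound with the kernel structure of $\mathbf{H}_w(\theta)$. \emph{Step 1 (continuity).} Fix a point $\theta_0$ at which the hypothesis holds. On the open ball $\{\theta:\|\theta-\theta_0\|<\epsilon_0\}$ we have $i^\star(\theta)=i^\star(\theta_0)$, hence $\Alt(\theta)=\Alt(\theta_0)=:\mathcal{A}_0$ is \emph{constant} there. The integrand $f(\theta,w,\lambda):=\tfrac12\sum_{x\in\mathcal{X}}w_x\,\dot\mu(x^\top\theta)\,(x^\top(\theta-\lambda))^2$ is jointly continuous (indeed smooth), and being continuous in $\lambda$ it gives $\inf_{\lambda\in\mathcal{A}_0}f=\inf_{\lambda\in\overline{\mathcal{A}_0}}f=\min_{\lambda\in\overline{\mathcal{A}_0}}f$, where $\overline{\mathcal{A}_0}\subseteq\Theta$ is a fixed nonempty compact set. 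So on $\{\|\theta-\theta_0\|<\epsilon_0\}\times\Sigma$ we have $\psi(\theta,w)=\min_{\lambda\in\overline{\mathcal{A}_0}}f(\theta,w,\lambda)$, a minimum of a jointly continuous function over a constant compact constraint set, hence jointly continuous in $(\theta,w)$ by Berge's maximum theorem (equivalently, directly from uniform continuity of $f$ on compacta). Since $\theta_0$ was an arbitrary point satisfying the hypothesis, this yields continuity of $\psi$ in both arguments there.

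\emph{Step 2 (the maximum is attained and strictly positive).} Since $\Sigma$ is compact and $\psi(\theta,\cdot)$ is continuous, $\psi^\star(\theta)=\max_{w\in\Sigma}\psi(\theta,w)$ is attained on a nonempty set $W^\star(\theta)$. To see $\psi^\star(\theta)>0$, take any $\bar w$ with full support (e.g.\ uniform): because $\mathcal{X}$ spans $\mathbb{R}^d$ and $\dot\mu>0$, $\mathbf{H}_{\bar w}(\theta)\succ 0$; and by hypothesis every $\lambda\in\Alt(\theta)$ obeys $\|\lambda-\theta\|\ge\epsilon_0$. Hence $\psi(\theta,\bar w)=\inf_{\lambda\in\Alt(\theta)}\tfrac12\|\theta-\lambda\|_{\mathbf{H}_{\bar w}(\theta)}^2\ge\tfrac12\lambda_{\min}(\mathbf{H}_{\bar w}(\theta))\,\epsilon_0^2>0$, so $\psi^\star(\theta)>0$.

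\emph{Step 3 (some maximizer has $\mathbf{A}(w)$ invertible).} Suppose for contradiction that $\mathbf{A}(w)$ is singular for every $w\in W^\star(\theta)$, and fix one such maximizer $w^\star$. Since $\dot\mu>0$, the range of $\mathbf{H}_{w^\star}(\theta)$ equals $U:=\spn(\supp(w^\star))$, so $\ker\mathbf{H}_{w^\star}(\theta)=U^\perp\neq\{0\}$; consequently $\|\theta-\lambda\|_{\mathbf{H}_{w^\star}(\theta)}=0$ whenever $\theta-\lambda\in U^\perp$. It therefore suffices to exhibit one $\lambda\in\overline{\Alt(\theta)}$ with $\theta-\lambda\in U^\perp$, for then $\psi(\theta,w^\star)=\inf_{\lambda'\in\Alt(\theta)}\tfrac12\|\theta-\lambda'\|_{\mathbf{H}_{w^\star}(\theta)}^2=0$, contradicting Step 2. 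To get such a $\lambda$: since $\mathcal{X}$ spans $\mathbb{R}^d$, $U^\perp$ is not orthogonal to all arms, so along the affine set $\theta+U^\perp$ the values $x^\top\lambda$ for $x\in\supp(w^\star)$ stay fixed while those for some $x\notin\supp(w^\star)$ vary, and — using that $\Theta=\{\|\cdot\|\le S\}$ has large enough radius, which one checks explicitly for BAI, TBP and top-$m$ and is where the structure of $\Alt$ enters — moving far enough along $U^\perp$ inside $\Theta$ overturns the current answer, producing $\lambda\in\Theta$ with $i^\star(\lambda)\neq i^\star(\theta)$. Hence $\mathbf{A}(w^\star)$ is in fact invertible, and $w_\theta^\star:=w^\star$ completes the proof.

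\emph{Expected main obstacle.} Steps 1 and 2 are routine. The delicate point is the last construction in Step 3: one must certify that a kernel direction of $\mathbf{H}_{w^\star}(\theta)$ reaches an alternative \emph{within} the bounded set $\Theta$, not merely asymptotically. This is transparent for the concrete answer maps treated here, but for the abstract statement it rests on the implicit non-degeneracy that $\overline{\Alt(\theta)}$ meets every sufficiently long segment emanating from $\theta$ in a direction orthogonal to $\supp(w^\star)$.
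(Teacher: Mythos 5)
Your proposal is correct and follows essentially the same route as the paper's proof: constancy of $\Alt(\theta)$ on an $\epsilon_0$-ball gives joint continuity of the infimum, the lower bound $\|\theta-\lambda\|\ge\epsilon_0$ for $\lambda\in\Alt(\theta)$ together with a full-support allocation gives $\psi^\star(\theta)>0$, and rank-deficient allocations are excluded because a kernel direction of $\mathbf{H}_w(\theta)$ would force $\psi(\theta,w)=0$. The "delicate point" you flag in Step 3 — certifying that some $\lambda\in\overline{\Alt(\theta)}$ with $\theta-\lambda$ in the kernel actually exists inside $\Theta$ — is asserted without proof in the paper as well ("we may find a $\lambda\in\Alt(\theta)$ where $\lambda-\theta$ is in the null space"), so your treatment is, if anything, more explicit about where the structure of the answer map $i^\star$ is being used.
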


\textbf{Remark.} It is easy to check that BAI and TBP meet the assumption that exists $\epsilon_0 > 0$ such that $i^{\star}(\theta_t) =i^{\star}(\theta)$ for all $\theta_t$ that $\|\theta_t-\theta\|<\epsilon_0$ 

\begin{proof}
Let $\theta \in \mathbb{R}^d$ such that $i^{\star}(\theta)$ is unique. Consider the alternative set $\operatorname{Alt}(\theta)$ and denote
$$
f(\theta, \lambda, w)=\frac{1}{2}(\theta-\lambda)^{\top}\left(\sum_{x \in \mathcal{X}} w_x \dot{\mu} (x^{\top} \theta) x x^{\top}\right)(\theta-\lambda) .
$$

Let $\left(\theta_t, w_t\right)_{t \geq 1}$ be a sequence taking values in $\Theta \times \Sigma$ and converging to $(\theta, w)$. Let $\epsilon < \epsilon_0$ and $t_1 \geq 1$ such that for all $t \geq t_1$ we have $\left\|\left(\theta_t, w_t\right)-(\theta, w)\right\|<\epsilon$. Now, $\operatorname{Alt}(\theta_t)=\operatorname{Alt}(\theta)$ because $i^{\star}(\theta)$ is unique and $i^{\star}(\theta_t) =i^{\star}(\theta)$. Furthermore, note that $f(\theta, \lambda, w)$ is a continuous function in $\theta, \lambda, w$, thus it is in particular continuous in $\theta, w$, and there exists $t_2 \geq 1$ such that for all $t \geq t_2$ and for all $\lambda \in \mathbb{R}^d$, it holds that $\left|f\left(\theta, \lambda, w\right) - f\left(\theta_t, \lambda, w_t\right)\right| \leq \epsilon f\left(\theta, \lambda, w\right)$. Hence, with our choice of $\epsilon$, we have for all $t \geq t_1 \vee t_2$

$$
\begin{aligned}
\left|\psi(\theta, w)-\psi\left(\theta_t, w_t\right)\right| & =\left|\min _{\lambda \in \operatorname{Alt}(\theta)} f(\theta, \lambda, w)-\min _{\lambda \in \operatorname{Alt}(\theta_t)} f\left(\theta_t, \lambda, w_t\right)\right| \\
& \leq \epsilon\left|\min _{\lambda \in \operatorname{Alt}(\theta)} f(\theta, \lambda, w)\right| \\
& \leq \epsilon|\psi(\theta, w)| .
\end{aligned}
$$

Thus $\psi$ is continuos in $\theta, w$.
Now, we know that $w \mapsto \psi(\theta, w)$ is continuous on $\Sigma$, and by compactness of the simplex, the maximum is attained at some $w_\theta^{\star} \in \Sigma$. Furthermore, since $\mathcal{X}$ spans $\mathbb{R}^d$, we may construct an allocation $\tilde{w}$ such that $\sum_{a \in \mathcal{A}} \tilde{w}_x x x^{\top}$ is a positive definite matrix. In addition, by construction of $\operatorname{Alt}(\theta)$, there exists some $M>0$ such that for all $\lambda \in \operatorname{Alt}(\theta)$ we have $\|\theta-\lambda\|>M$, which implies that $\psi(\theta, \tilde{w}) \geq M^2 \lambda_{\min }\left(\sum_{x \in \mathcal{X}} \tilde{w}_x x x^{\top}\right)>0$. On the other hand, for any allocation $w \in \Sigma$ such that $\sum_{x \in \mathcal{X}} w_x x x^{\top}$ is rank deficient, we may find a $\lambda \in \operatorname{Alt}(\theta)$ where $\lambda-\theta$ is in the null space of $\sum_{x \in \mathcal{X}} w_x x x^{\top}$. Therefore, $\sum_{x \in \mathcal{X}}\left(w_\theta^{\star}\right)_x x x^{\top}$ is invertible.

\end{proof}

\begin{lemma}\label{lemma:lemma_maxthm}
(Maximum theorem) Let $\theta \in \mathbb{R}^d$. Define

$$\psi^*(\theta)=\max _{w \in \Sigma} \psi(\theta, w)$$

and $C^{\star}(\theta)=\arg \max _{w \in \Sigma} \psi(\theta, w)$. Then $\psi^{\star}$ is continuous at $\theta$, and $C^{\star}(\theta)$ is convex, compact and non-empty. Furthermore, we have for any open neighborhood $\mathcal{V}$ of $C^{\star}(\theta)$, there exists an open neighborhood $\mathcal{U}$ of $\theta$, such that for all $\theta^{\prime} \in \mathcal{U}$, we have $C^{\star}\left(\theta^{\prime}\right) \subseteq \mathcal{V}$.

\end{lemma}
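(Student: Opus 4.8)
The plan is to recognize the statement as an instance of Berge's Maximum Theorem applied to the family of problems $\max_{w\in\Sigma}\psi(\theta,w)$ parametrized by $\theta$, whose feasible region $\Sigma$ does not depend on $\theta$. Berge's theorem requires two inputs: (i) joint continuity of the objective $\psi$ on a neighborhood of the point $\theta$ of interest crossed with $\Sigma$; and (ii) the feasible-set correspondence being continuous (both hemicontinuities) and compact-valued. Input (i) is exactly Lemma \ref{lemma:lemma_contpsi}: under the standing assumption that $i^{\star}(\theta)$ is unique and locally constant (which holds for the pure exploration problems we treat, as noted in the remark), $\psi$ is jointly continuous on $\mathcal{U}_0\times\Sigma$ for some open neighborhood $\mathcal{U}_0$ of $\theta$. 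Input (ii) is immediate, since $\theta\mapsto\Sigma$ is a constant correspondence and $\Sigma$ is a compact convex polytope.

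First I would invoke Berge's Maximum Theorem in this setting to conclude at once that $\psi^{\star}$ is continuous at $\theta$ and that the argmax correspondence $C^{\star}$ is upper hemicontinuous at $\theta$ with non-empty compact values; upper hemicontinuity is precisely the asserted neighborhood property, namely that for every open $\mathcal{V}\supseteq C^{\star}(\theta)$ there is an open neighborhood $\mathcal{U}$ of $\theta$ with $C^{\star}(\theta')\subseteq\mathcal{V}$ for all $\theta'\in\mathcal{U}$. Non-emptiness and compactness of $C^{\star}(\theta)$ also follow by an elementary argument not needing the full theorem: $w\mapsto\psi(\theta,w)$ is continuous on the compact set $\Sigma$, so its set of maximizers is a non-empty closed (hence compact) subset of $\Sigma$.

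The only substantive remaining step is convexity of $C^{\star}(\theta)$, which I would obtain by showing $w\mapsto\psi(\theta,w)$ is concave on $\Sigma$. Fix $\theta$. For each fixed $\lambda$,
$$
\tfrac{1}{2}\|\theta-\lambda\|_{\mathbf{H}_w(\theta)}^2=\tfrac{1}{2}\sum_{x\in\mathcal{X}} w_x\,\dot{\mu}(x^{\top}\theta)\,\bigl(x^{\top}(\theta-\lambda)\bigr)^2 ,
$$
which is an affine function of $w$. Hence $\psi(\theta,w)=\inf_{\lambda\in\Alt(\theta)}\bigl(\text{affine in }w\bigr)$ is a pointwise infimum of affine functions and therefore concave in $w$. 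Since $\Sigma$ is convex and $\psi(\theta,\cdot)$ is concave, its set of maximizers $C^{\star}(\theta)$ is convex, which completes the proof.

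I do not anticipate a genuine obstacle: the argument is a citation of Berge's Maximum Theorem together with the one-line concavity computation. The only point requiring care is confirming that the hypothesis of Lemma \ref{lemma:lemma_contpsi} (uniqueness and local constancy of $i^{\star}$ near $\theta$) is in force — this is exactly what localizes the continuity of $\psi$, and hence the conclusion "$\psi^{\star}$ continuous \emph{at} $\theta$", to a neighborhood of $\theta$.
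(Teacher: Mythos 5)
Your proposal is correct and is essentially the intended argument: the paper states this lemma without proof, treating it as a standard application of Berge's Maximum Theorem (mirroring the corresponding lemma in Jedra and Proutiere), with the joint-continuity input supplied by Lemma~\ref{lemma:lemma_contpsi} and the constant, compact, convex feasible correspondence $\theta \mapsto \Sigma$. Your concavity computation for the convexity of $C^{\star}(\theta)$ --- writing $\psi(\theta,\cdot)$ as an infimum over $\lambda \in \Alt(\theta)$ of functions affine in $w$ --- correctly supplies the one conclusion Berge's theorem does not give, and your caveat that the local-constancy hypothesis on $i^{\star}$ from Lemma~\ref{lemma:lemma_contpsi} must be in force (as the paper's remark verifies for BAI and TBP) is exactly the point requiring care.
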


\begin{lemma}\label{lemma:tracking_c} 
(Lemma 6 \citep{jedra2020optbailinear})
    Let $(w(t))_{t \geq 1}$ be a sequence taking values in $\Sigma$, such that there exists a compact, convex and non empty subset $C \subseteq \Sigma$, there exists $\varepsilon>0$ and $t_0(\varepsilon) \geq 1$ such that $\forall t \geq t_0, d_{\infty}(w(t), C) \leq \varepsilon$. Consider a sampling rule defined by Eq. \eqref{eq:forced_expl} and

$$
b_t=\underset{x \in \operatorname{supp}\left(\sum_{s=1}^t w(s)\right)}{\arg \min }\left(N_x(t)-\sum_{s=1}^t w_x(s)\right), 
$$

where $N_x(0)=0$ and for $t \geq 0, N_x(t+1)=N_x(t)+\mathbbm{1}_{\left\{x_t=x\right\}}$.
Then there exists $t_1(\varepsilon) \geq t_0(\varepsilon)$ such that $\forall t \geq t_1(\varepsilon)$, $d_{\infty}\left(\left(N_x(t) / t\right)_{x \in \mathcal{X}}, C\right) \leq\left(p_t+d-1\right) \varepsilon$ where $p_t=$ $\left|\operatorname{supp}\left(\sum_{s=1}^t w(s)\right) \backslash \mathcal{X}_0\right| \leq K-d$.

\end{lemma}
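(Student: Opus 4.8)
The statement is a restatement of Lemma~6 of \citet{jedra2020optbailinear}, so the plan is to reproduce their cumulative-tracking analysis, adapting it to the specific forced-exploration rule of Eq.~\eqref{eq:forced_expl}. Write $W_x(t):=\sum_{s=1}^t w_x(s)$ for the cumulative target mass on arm $x$, $S_t:=\supp\big(\sum_{s=1}^t w(s)\big)$ for the cumulative support --- a \emph{nondecreasing} family of sets, since a coordinate that is ever positive stays positive in the cumulative sum --- and let $F_t\subseteq\{1,\dots,t\}$ collect the rounds at which forced exploration was triggered. The three ingredients are: (i) $|F_t|=o(t)$; (ii) a tracking deviation bound $\max_x|N_x(t)-W_x(t)|=o(t)$; and (iii) convexity of $C$ to pass from $\tfrac1t W(t)$ to $C$.

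\emph{Step 1 --- forced-exploration rounds are sublinear.} Forced exploration fires only while $\lambda_{\min}(\mathbf{A}_s)<f(s)=c_{\mathcal{X}_0}\sqrt{s}$, and one full round-robin sweep of $\mathcal{X}_0$ adds $\sum_{x\in\mathcal{X}_0}xx^\top$ to the design matrix, so by Weyl's inequality $m$ sweeps raise $\lambda_{\min}$ by at least $m\,\lambda_{\min}(\sum_{x\in\mathcal{X}_0}xx^\top)$; hence at most $O(\sqrt t)$ sweeps occur before time $t$, i.e.\ $|F_t|\le c'\sqrt t$ for a constant $c'$ depending only on $\mathcal{X}_0$ and $d$ (this can also be read off Lemma~\ref{lemma:forced_expl}). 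In particular $|F_t|/t\to0$, and since an arm $x\notin S_t$ is never chosen by the tracking step, $N_x(t)\le|F_t|$ for such $x$.

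\emph{Step 2 --- tracking deviation.} On a non-forced round the rule pulls an arm of $S_t$ of largest deficit $W_x(t)-N_x(t)$. Running the classical C-tracking induction on $t$ as in \citet{jedra2020optbailinear} --- with a case split on whether the preceding pull was forced or tracked, and using $0\le w_x(s)\le1$ together with the monotonicity of $S_t$ --- yields, for every $x$, $N_x(t)-W_x(t)\ge-(|F_t|+1)$ and $N_x(t)-W_x(t)\le(|S_t|-1)(|F_t|+1)$, so $\max_x|N_x(t)-W_x(t)|=O(|S_t|\sqrt t)=o(t)$ since $|S_t|\le p_t+d\le K$.

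\emph{Step 3 --- convexity, and the choice of $t_1$.} For $s\ge t_0(\varepsilon)$ write $w(s)=c(s)+e(s)$ with $c(s)\in C$ and $\|e(s)\|_\infty\le\varepsilon$; averaging and invoking convexity and compactness of $C$ gives $\tfrac1t W(t)=\bar c_t+\bar e_t+O(t_0/t)$ with $\bar c_t\in C$, $\|\bar e_t\|_\infty\le\varepsilon$, hence $d_\infty\big(\tfrac1t W(t),C\big)\le\varepsilon+O(t_0/t)$. Combining with Step~2 by the triangle inequality,
\[
d_\infty\!\Big(\big(\tfrac{N_x(t)}{t}\big)_{x\in\mathcal{X}},\,C\Big)\ \le\ d_\infty\!\Big(\tfrac1t W(t),\,C\Big)+\max_{x}\Big|\tfrac{N_x(t)}{t}-\tfrac{W_x(t)}{t}\Big|\ \le\ \varepsilon+O\!\Big(\tfrac{|S_t|\sqrt t+t_0}{t}\Big),
\]
and choosing $t_1(\varepsilon)\ge t_0(\varepsilon)$ large enough that the $O(\cdot)$ term is absorbed into the remaining slack gives the stated bound $d_\infty\big((N_x(t)/t)_x,C\big)\le(p_t+d-1)\varepsilon$ for all $t\ge t_1(\varepsilon)$ (the precise factor $p_t+d-1$ comes from carrying the support and forced-exploration terms of Step~2 through \citet{jedra2020optbailinear}'s bookkeeping rather than the crude $\varepsilon+o(1)$ estimate above). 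The main obstacle is Step~2: here the forced-exploration rounds arrive in bursts --- whenever $\lambda_{\min}$ lags behind $f$ --- rather than on a fixed schedule, and $S_t$ keeps growing, so the cumulative-tracking induction must be redone with care, in particular to guarantee a single threshold $t_1$ depending on $\varepsilon$ alone.
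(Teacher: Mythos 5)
The paper itself offers no proof of this lemma: it is imported verbatim as Lemma~6 of \citet{jedra2020optbailinear}, so there is no in-paper argument to compare yours against. Judged on its own terms, your reconstruction has the right three ingredients and matches the structure of the original: sublinear forced exploration (your Step~1 bound $|F_t|=O(\sqrt t)$ is correct --- each completed sweep of $\mathcal{X}_0$ raises $\lambda_{\min}(\mathbf{A}_t)$ by at least $\sqrt d\,c_{\mathcal{X}_0}$, so forced rounds stop recurring once $m\gtrsim\sqrt{t/d}$ sweeps have occurred), a cumulative-tracking deviation bound degraded by the forced rounds, and convexity of $C$ to control $\tfrac1t\sum_s w(s)$.

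The one genuine gap is exactly the step you flag and defer: your triangle-inequality route yields $d_\infty\big((N_x(t)/t)_x,C\big)\le\varepsilon+o(1)$, and absorbing the $o(1)$ into the stated bound requires slack $(p_t+d-2)\varepsilon>0$, which fails in the edge case $p_t+d-1=1$ and in any case is not where the factor $(p_t+d-1)$ comes from. In the original argument that factor arises from a one-sided deviation bound: one shows that every arm $x$ in the (at most $p_t+d$)-element set $\supp\big(\sum_s w(s)\big)\cup\mathcal{X}_0$ satisfies $N_x(t)/t\ge \pi_x-\varepsilon+o(1)$ for a suitable $\pi\in C$, and since both $(N_x(t)/t)_x$ and $\pi$ sum to one over these arms, the positive deviation of any single coordinate is at most the sum of the negative deviations of the other $p_t+d-1$ coordinates, giving $(p_t+d-1)\varepsilon$. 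So the "bookkeeping" you postpone is not routine constant-chasing but the actual mechanism producing the stated constant; as written, your Steps 2--3 prove a qualitatively correct but quantitatively different conclusion. Since the downstream uses in this paper (Proposition~\ref{prop:converg_track} and Theorem~\ref{thm:thm_ubsce_exp}) only need $d_\infty\to0$ with an explicit $t_1(\varepsilon)=\lceil T/\varepsilon_1^3\rceil$-type dependence, your version would in fact suffice for them, but it does not reprove the lemma as stated.
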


\textbf{Proposition \ref{prop:converg_track}}
    Under the sampling rules Eq. \eqref{eq:forced_expl} and Eq. \eqref{eq:track}, the proportions of arm draws approach $C^{\star}(\theta^*): \lim _{t \rightarrow \infty} d_{\infty}\left(\left(N_x(t) / t\right)_{x \in \mathcal{X}}, C^{\star}(\theta^*)\right)=0$, a.s..

\begin{proof}

Let $\varepsilon>0$. First, by Lemma \ref{lemma:lemma_maxthm}, there exists $\xi(\varepsilon)>0$ such that for all $\theta^{\prime} \in \mathbb{R}^d$ such that $\left\|\theta-\theta^{\prime}\right\|<\xi(\varepsilon)$, it holds that $\max _{w \in C^{\star}\left(\theta^{\prime}\right)} d_{\infty}\left(w, C^{\star}(\theta)\right)<\varepsilon / 2$.
By Lemma \ref{lemma:forced_expl}, we have a sufficient exploration. That is $\liminf _{t \rightarrow \infty} t^{-1 / 2} \lambda_{\min }\left(\sum_{s=1}^t x_s x_s^{\top}\right)>0$. Thus, by Lemma \ref{lemma:strong_cons} , $\hat{\theta}_t$ converges almost surely to $\theta^*$ with a rate of order $o\left(t^{1 / 4}\right)$. Consequently, there exists $t_0 \geq 0$ such that for all $t \geq t_0$, we have $\left\|\theta-\hat{\theta}_t\right\| \leq \xi(\varepsilon)$.Then, we have
$$
d_{\infty}\left(w(t), C^{\star}(\theta^*)\right) \leq \max _{w \in C^{\star}(\hat{\theta}_{t})} d_{\infty}\left(w, C^{\star}(\mu)\right)<\varepsilon .
$$

We have shown that $d_{\infty}\left(w(t), C^{\star}(\theta^*)\right) \underset{t \rightarrow \infty}{\longrightarrow} 0$ a.s. Next, we recall that by Lemma \ref{lemma:lemma_maxthm}, $C^{\star}(\theta)$ is non empty, compact and convex. Thus, applying Lemma \ref{lemma:tracking_c} yields immediately that $d_{\infty}\left(\left(N_x(t) / t\right)_{x \in \mathcal{X}}, C^{\star}(\theta^*)\right) \underset{t \rightarrow \infty}{\longrightarrow} 0$ a.s..
\end{proof}

\textbf{Theorem \ref{thm:thm_ubsce}}

Log Track-and-Stop satisfies the same sample complexity upper bound

$$ \mathbb{P}_{\theta}[\limsup _{\delta \rightarrow 0} \frac{\tau_{\delta}}{(\log(\frac{1}{\delta}))^2} \lesssim T^{\star}(\theta)] = 1$$

\begin{proof}

From Lemma \ref{lemma:lemma_contpsi} and Lemma \ref{lemma:lemma_maxthm} we know $\psi(\theta, w)$ is continuous in both $\theta$ and $w$ and $C^{\star}(\theta)$ is continuous in $\theta$. Note that

$$\mathcal{E} = \left\{ d_{\infty}((N_x(t)/t)_{x \in \mathcal{X}}, C^{\star}(\theta)) \rightarrow 0 \land \hat{\theta}_t \rightarrow  \theta \right\}$$

holds with probability 1 (Lemma 3, 5 and Proposition 1 in \citep{jedra2020optbailinear}). Let $\xi >0 $, By continuity of $\psi$, there exists an open neighborhood $\mathcal{V}(\xi)$ of $\left\{\theta\right\} \times C^{\star}(\theta)$ such that for all $(\theta', w') \in \mathcal{V}(\xi)$, it holds that

$$\psi(\theta', w') \geq (1-\xi)\psi(\theta, w^{\star})$$

for any $w^{\star} \in C^{\star}(\theta)$. Under $\mathcal{E}$, there exists $t_0 \geq 1$ such that for all $t \geq t_0$ it holds that $(\hat{\theta}_t, (N_x(t)/t)_{x \in \mathcal{X}}) \in \mathcal{V}(\xi)$, this for all $t \geq t_0$, it follows that

$$\psi(\hat{\theta}_t, (N_x(t)/t)_{x \in \mathcal{X}}) \geq (1-\xi)\psi(\theta, w^{\star})$$

By Lemma \ref{lemma:forced_expl}, there exists $t_1 \geq 1$ such that for all $t \geq t_1$ we have $\lambda_{\min}(\mathbf{A}_t) > c_{\mathcal{X}_0}\sqrt{t-d-t} > \kappa_0 \log (t)$, then $\lambda_{\min}(\mathbf{A}_t) \ge \kappa_0 \log (t)$ which implies $t \in B$.



We also have that under $\mathcal{E}$, there exists $t_2 \geq 1$ such that for all $t \geq t_0$ it holds that $\theta^{(1)}_t = \hat{\theta}_t$. Then, we can write
$$
\begin{aligned}
 Z(t) &= \inf _{\lambda \in \Alt(\theta^{(1)}_t)} \frac{1}{2}\|\theta^{(1)}_t-\lambda\|_{\mathbf{H}_t(\theta^{(1)}_t)}^2 \\ 
 &= t\inf _{\lambda \in \Alt(\theta^{(1)}_t)} \frac{1}{2}\|\theta^{(1)}_t-\lambda\|_{\mathbf{H}_{w_t}(\theta^{(1)}_t)}^2 \\ 
 &= t\psi(\hat{\theta}_t, (N_x(t)/t)_{x \in \mathcal{X}})
\end{aligned}
$$

Hence, under $\mathcal{E}$ and for $t \geq \max \{t_0, t_1, t_2 \}$

$$Z(t) \geq t (1-\xi)\psi(\theta, w^{\star})$$

Then

$$
\begin{aligned}
\tau_{\delta} & =\inf \left\{ t \geq 1 : Z(t) > \beta(\delta, t), t \in B \right\} \\
& \leq \max \{t_0, t_1, t_2 \} \lor \inf \left\{ t \geq 1 : t (1-\xi)\psi(\theta, w^{\star}) > \beta(\delta, t) \right\} \\
& \leq \max \{t_0, t_1, t_2 \} \lor \inf \left\{ t \geq 1 : t (1-\xi)\psi(\theta, w^{\star}) > \sqrt{2}(1+2S)\left( \frac{\sqrt{\log (t)}}{2} + \frac{4}{\sqrt{\log (t)}} \log \left(\frac{2^d}{\delta}\left(\frac{L t}{d}\right)^{\frac{d}{2}}\right)\right) \right\} \\
& \lesssim \max \left\{t_0, t_1, t_2,  \frac{1}{1-\xi}T^{\star}(\theta) \left(\log(\frac{1}{\delta})\right)^2\right\}
\end{aligned}
$$

Where the last inequality uses Leamma 8 in \citep{jedra2020optbailinear}. Thus

$$\mathbb{P}_{\theta}\left(\lim \sup _{\delta \rightarrow 0} \frac{\tau_{\delta}}{(\log(\frac{1}{\delta}))^2} \leqslant  T^{\star}(\theta) \right) = 1$$

\end{proof}

\textbf{Theorem \ref{thm:thm_ubsce_exp}}

Log Track-and-Stop satisfies the same sample complexity upper bound

$$\limsup _{\delta \rightarrow 0} \frac{\mathbb{E}_{\theta}[\tau]}{(\log \left(\frac{1}{\delta}\right))^2} \lesssim T^{\star}(\theta)$$

\begin{proof} Let $\varepsilon > 0$

\textbf{ Step 1.} By continuity of $\psi$ (see Lemma \ref{lemma:lemma_contpsi}), there exists $\xi_1(\varepsilon)>0$ such that for all $\theta^{\prime} \in \mathbb{R}^d$ and $w^{\prime} \in \Sigma$

\begin{align}\label{eq:cont_inequality}
    \left\{\begin{array}{ll}
\left\|\theta^{\prime}-\theta\right\| & \leq \xi_1(\varepsilon) \\
d_{\infty}\left(w^{\prime}, C^{\star}(\theta)\right) & \leq \xi_1(\varepsilon)
\end{array} \Longrightarrow\left|\psi\left(\theta, w^{\star}\right)-\psi\left(\theta^{\prime}, w^{\prime}\right)\right| \leq \varepsilon \psi\left(\theta, w^{\star}\right)=\varepsilon\left(T^{\star}(\theta)\right)^{-1}\right.
\end{align}

for any $w^{\star} \in \arg \min _{w \in C^{\star}(\theta)} d_{\infty}\left(w^{\prime}, w\right)$ (we have $w^{\star} \in C^{\star}(\theta)$ ). Furthermore, by the continuity properties of the correspondence $C^{\star}$ (see Lemma \ref{lemma:lemma_maxthm}), there exists $\xi_2(\varepsilon)>0$ such that for all $\theta^{\prime} \in \mathbb{R}^d$
$$
\left\|\theta-\theta^{\prime}\right\| \leq \xi_2(\varepsilon) \Longrightarrow \max _{w^{\prime \prime} \in C^{\star}\left(\theta^{\prime}\right)} d_{\infty}\left(w^{\prime \prime}, C^{\star}(\theta)\right)<\frac{\xi_1(\varepsilon)}{K-1}
$$

Additionally, let $\xi_3(\varepsilon) \le (S - \|\theta\|)/2$, then 

$$
\|\theta-\hat{\theta}_t\| \leq \xi_3(\varepsilon) \Longrightarrow \|\hat{\theta}_t\| \le S \Longrightarrow \hat{\theta}_t = \theta^{(1)}_t
$$

Let $\xi(\varepsilon)=\min \left(\xi_1(\varepsilon), \xi_2(\varepsilon), \xi_3(\varepsilon)\right)$. In the following, we construct $T_0$, and for each $T \geq T_0$ an event $\mathcal{E}_T$, under which for all $t \geq T$, it holds
$$
\|\theta-\hat{\theta}_t\| \leq \xi(\varepsilon) \Longrightarrow d_{\infty}\left(\left(N_x(t) / t\right)_{x \in \mathcal{X}}, C^{\star}(\theta)\right) \leq \xi_1(\varepsilon)
$$

Let $T \geq 1$, and define the following event
$$
\begin{aligned}
\mathcal{E}_{1, T} & =\bigcap_{t=T}^{\infty}\left\{\|\theta-\hat{\theta}_t\| \leq \xi(\varepsilon)\right\} 
\end{aligned}
$$

Note that, under the event $\mathcal{E}_{1, T}$, 
we have for all $t \geq T$


$$
\begin{aligned}
d_{\infty}\left(w(t), C^{\star}(\theta)\right) & \leq \max _{w^{\prime} \in C^{\star}\left(\hat{\theta}_t\right)} d_{\infty}\left(w^{\prime}, C^{\star}(\theta)\right) \\
& <\frac{\xi_1(\varepsilon)}{K-1}
\end{aligned}
$$

Define $\varepsilon_1=\xi_1(\varepsilon) /(K-1)$. By Lemma \ref{lemma:tracking_c} (6 of \citet{jedra2020optbailinear}) , there exists $t_1\left(\varepsilon_1\right) \geq T$ such that
$$
d_{\infty}\left(\left(N_a(t) / t\right)_{a \in \mathcal{A}}, C^{\star}(\mu)\right) \leq\left(p_t+d-1\right) \frac{\xi_1(\varepsilon)}{K-1} \leq \xi_1(\varepsilon)
$$
where $p_t=\left|\operatorname{supp}\left(\sum_{s=1}^t w(s)\right) \backslash \mathcal{X}_0\right|$ and more precisely $t_1\left(\varepsilon_1\right)=\max \left\{1 / \varepsilon_1^3, 1 /\left(\varepsilon_1^2 d\right), T / \varepsilon_1^3, 10 / \varepsilon_1\right\}$ (see the proof of Lemma 6 of \citet{jedra2020optbailinear}). Thus for $T \geq \max \left\{10 \varepsilon_1^2, \varepsilon_1 / d, 1\right\}$, we have $t_1\left(\varepsilon_1\right)=\left\lceil T / \varepsilon_1^3\right\rceil$. Hence, defining for all $T \geq \varepsilon_1^{-3}$, the event
$$
\mathcal{E}_T=\mathcal{E}_{1,\left\lceil\varepsilon_1^3 T\right\rceil} 
$$

we have shown that for all $T \geq T_0=\max \left(10 \varepsilon_1^5, \varepsilon_1^4 / d, \varepsilon_1^3, 1 / \varepsilon_1^3\right)$, the following holds

\begin{align} \label{eq:thm2_2}
\forall t \geq T, \quad\left\|\theta-\theta_t\right\| \leq \xi(\varepsilon) \Longrightarrow d_{\infty}\left(\left(N_x(t) / t\right)_{x \in \mathcal{X}}, C^{\star}(\theta)\right) \leq \xi_1(\varepsilon) .    
\end{align}

Finally, combining the implication Eq. \eqref{eq:thm2_2} with the fact that Eq. \eqref{eq:cont_inequality} holds under $\mathcal{E}_T$ we conclude that for all $T \geq T_0$, under $\varepsilon_T$ we have

\begin{align}\label{eq:thm_3} \psi\left(\hat{\theta}_t,\left(N_x(t) / t\right)_{x \in \mathcal{X}}\right) \geq(1-\varepsilon) \psi^{\star}(\theta)    
\end{align}

\textbf{Step 2.}: Let $T \geq T_0 \vee T_1 \vee T_2$ where $T_1$ is defined as
$$
T_1=\inf \left\{t \ge 1: t \in B\right\} .
$$

and $T_2$ is defined as


Under the event $\mathcal{E}_T$, for all $t \geq T$ we have

$$
Z(t)=t \psi\left(\hat{\theta}_t,\left(N_x(t) / t\right)_{x \in \mathcal{X}}\right) \text {, }
$$

Thus under the event $\mathcal{E}_T$, the inequality Eq. \eqref{eq:thm_3} holds, and for all $t \geq T$ we have
$$
Z(t)>t(1-\varepsilon)\left(T^{\star}(\theta)\right)^{-1}
$$

Under the event $\mathcal{E}_T$, we have
$$
\begin{aligned}
\tau & =\inf \left\{t \ge 1: Z(t)>\beta(\delta, t), t \in B\right\} \\
& \leq \inf \{t \geq T: Z(t)>\beta(\delta, t)\} \\
& \leq T \vee \inf \left\{ t \geq 1 : t (1-\varepsilon)\left(T^{\star}(\theta)\right)^{-1} > \sqrt{2}(1+2S)\left( \frac{\sqrt{\log (t)}}{2} + \frac{4}{\sqrt{\log (t)}} \log \left(\frac{2^d}{\delta}\left(\frac{L t}{d}\right)^{\frac{d}{2}}\right)\right) \right\} 
\end{aligned}
$$
Applying Lemma 8 in \citep{jedra2020optbailinear} yields

$$
\inf \left\{t \ge 1: t(1-\varepsilon)\left(T^{\star}(\theta)\right)^{-1} \geq \sqrt{2}(1+2S)\left( \frac{\sqrt{\log (t)}}{2} + \frac{4}{\sqrt{\log (t)}} \log \left(\frac{2^d}{\delta}\left(\frac{L t}{d}\right)^{\frac{d}{2}}\right)\right) \right\} \leq T_2^*(\delta) \text {, }
$$

where $T_2^{\star}(\delta)=\frac{c_1}{1-\varepsilon} T^{\star}(\theta) (\log (1 / \delta))^2+o(\log (1 / \delta))$ for some $0< c_1$ independent of $\delta$. This means for $T \geq \max \left\{T_0, T_1, T_2^{\star}(\delta)\right\}$, we have shown that

\begin{align}\label{eq:thm_4}
    \mathcal{E}_T \subseteq\{\tau \leq T\}    
\end{align}

Define $T_3^{\star}(\delta)=\max \left\{T_0, T_1, T_2^{\star}(\delta)\right\}$. We may then write for all $T \geq T_3^{\star}(\delta)$
$$
\tau \leq \tau \wedge T_3^{\star}(\delta)+\tau \vee T_3^{\star}(\delta) \leq T_3^{\star}(\delta)+\tau \vee T_3^{\star}(\delta) .
$$

Taking the expectation of the above inequality, and using the set inclusion Eq. \eqref{eq:thm_4}, we obtain that
$$
\mathbb{E}[\tau] \leq T_3^{\star}(\delta)+\mathbb{E}\left[\tau \vee T_3^{\star}(\delta)\right]
$$

Now we observe that
$$
\begin{aligned}
\mathbb{E}\left[\tau \vee T_3^{\star}(\delta)\right] & =\sum_{T=0}^{\infty} \mathbb{P}\left(\tau \vee T_3^{\star}(\delta)>T\right) \\
& =\sum_{T=T_3^{\star}(\delta)+1}^{\infty} \mathbb{P}\left(\tau \vee T_3^{\star}(\delta)>T\right) \\
& =\sum_{T=T_3^{\star}(\delta)+1}^{\infty} \mathbb{P}(\tau>T) \\
& \leq \sum_{T=T_3^{\star}(\delta)+1}^{\infty} \mathbb{P}\left(\mathcal{E}_T^{\mathsf{c}}\right) \\
& \leq \sum_{T=T_0 \vee T_1}^{\infty} \mathbb{P}\left(\mathcal{E}_T^{\mathsf{c}}\right)
\end{aligned}
$$

We have thus shown that

\begin{align} \label{eq:thm3_5}
    \mathbb{E}[\tau] \leq \frac{c_1}{1-\varepsilon} T^{\star}(\theta)(\log (1 / \delta))^2+\mathcal{O}(\log (1 / \delta))+T_0 \vee T_1+\sum_{T=T_{0} \vee T_1}^{\infty} \mathbb{P}\left(\mathcal{E}_T^{\mathsf{c}}\right) .    
\end{align}

\textbf{Step 3}: We now show that $\sum_{T=T_0 \vee T_1+1}^{\infty} \mathbb{P}\left(\mathcal{E}_T^c\right)<\infty$ and that it can be upper bounded by a constant independent of $\delta$. Let $T \geq T_0 \vee T_1$, we have
$$
\mathbb{P}\left(\mathcal{E}_T^{\mathsf{c}}\right) \leq \mathbb{P}\left(\mathcal{E}_{1,\left\lceil \varepsilon_1^3 T\right\rceil}^{\mathsf{c}}\right).
$$


We observe, using a union bound, Lemma 5 from \citet{jedra2020optbailinear} and Lemma \ref{lemma:tail_error_th}, that there exists strictly positive constants $c_3, c_4$ that are independent of $\varepsilon$ and $T$, and such that
$$
\begin{aligned}
\mathbb{P}\left(\mathcal{E}_{1,\left\lceil\varepsilon_1^3 T\right]}^{\mathrm{c}}\right) & \leq \sum_{t=\ell\left(\left[\epsilon_1^3 T\right]\right)}^{\infty} \mathbb{P}\left(\left\|\theta-\hat{\theta}_t\right\|>\xi(\varepsilon)\right) \\
& \leq \sum_{t=\ell\left(\left\lceil\varepsilon_1^3 T\right\rceil\right)}^{\infty} c_2 t^{d/2} \exp \left(- c_1 \xi(\varepsilon) t^{1/2}\right) 
\end{aligned}
$$

For $t$ large enough, the function $t \mapsto t^{d / 2} \exp \left(-c_1 \xi(\varepsilon) \sqrt{t}\right)$ becomes decreasing. Hence, for $T \geq T_2$, we have
$$
\mathbb{P}\left(\mathcal{E}_{2,\left\lceil\varepsilon_1^3 T\right\rceil}^c\right) \leq c_3 \int_{\left\lceil\varepsilon_1^3 T\right\rceil-1}^{\infty} t^{d / 2} \exp \left(-c_1 \xi(\varepsilon) \sqrt{t}\right) d t .
$$

Furthermore, for some $T_3 \geq T_2$ large enough, we may bound the integral for all $T \geq T_3$ as follows
$$
\int_{\ell\left(\left\lceil\varepsilon_1^3 T\right\rceil\right)-1}^{\infty} t^{d / 4} \exp \left(-c_1 \xi(\varepsilon)^2 \sqrt{t}\right) d t \lesssim \frac{\left(\left\lceil\varepsilon_1^3 T\right\rceil-1\right)^{d / 2+1}}{\xi(\varepsilon)^4 \exp \left(c_4 \xi(\varepsilon)^2 \sqrt{\left\lceil \varepsilon_1^3 T\right\rceil-1}\right)} .
$$

We spare the details of this derivation as the constants are irrelevant in our analysis. Essentially, the integral can be expressed through the upper incomplete Gamma function which can be upper bounded using some classical inequalities $[23,24]$. We then obtain that for $T \geq T_3$,
$$
\mathbb{P}\left(\mathcal{E}_{1, \left\lceil \varepsilon_1^3 T\right\rceil}^{\mathrm{c}}\right) \lesssim \frac{\left(\left\lceil\varepsilon_1^3 T\right\rceil-1\right)^{d / 2+1}}{\xi(\varepsilon)^4 \exp \left(c_4 \xi(\varepsilon)^2 \sqrt{\left\lceil \varepsilon_1^3 T\right\rceil-1}\right)} .
$$

Thus there exists $T_4 \geq T_3$ such that for all $T \geq T_4$,
$$
\mathbb{P}\left(\mathcal{E}_{2,\left\lceil\varepsilon_1^3 T\right\rceil}^c\right) \lesssim \frac{\ell\left(\left(\left\lceil\varepsilon_1^3 T\right\rceil\right)-1\right)^{d / 2+1}}{\xi(\varepsilon)^4 \exp \left(c_4 \xi(\varepsilon)^2 \sqrt{\ell\left(\left\lceil\varepsilon_1^3 T\right\rceil\right)-1}\right)} \lesssim \frac{T^{d / 2+1}}{\exp \left(c_5(\varepsilon) T^{\gamma / 2}\right)} .
$$

This shows that
$$
\begin{aligned}
\sum_{T=T_0 \vee T_1}^{\infty} \mathbb{P}\left(\mathcal{E}_{1,\left\lceil\varepsilon_1^3 T\right\rceil}^c\right) & =\sum_{T=T_0 \vee T_1}^{T_4} \mathbb{P}\left(\mathcal{E}_{1,\left\lceil\varepsilon_1^3 T\right\rceil}^c\right)+\sum_{T=T_4+1}^{\infty} \mathbb{P}\left(\mathcal{E}_{1,\left\lceil\varepsilon_1^3 T\right\rceil}^c\right) \\
& \lesssim \sum_{T=T_0 \vee T_1}^{T_4} \mathbb{P}\left(\mathcal{E}_{1,\left\lceil\varepsilon_1^3 T\right\rceil}^c\right)+\sum_{T=T_4+1}^{\infty} \frac{T^{d / 2+1}}{\exp \left(c_5(\varepsilon) T^{\gamma / 2}\right)} \\
& <\infty
\end{aligned}
$$
where the last inequality follows from the fact that we can upper bound the infinite sum by a Gamma function, which is convergent as long as $\gamma>0$.

Finally, we have thus shown that
$$
\sum_{T=T_0 \vee T_1+1}^{\infty} \mathbb{P}\left(\mathcal{E}_T^c\right)<\infty
$$

We note that this infinite sum depends $\varepsilon$ only.

\textbf{Last step:} Finally, we have shown that for all $\varepsilon>0$
$$
\mathbb{E}[\tau] \leq \frac{c_1}{1-\varepsilon} T^{\star}(\theta) (\log (1 / \delta))^2+\mathcal{O}(\log (1 / \delta))+T_0 \vee T_1+\sum_{T=T_0 \vee T_1}^{\infty} \mathbb{P}\left(\mathcal{E}_T^c\right)
$$
where $\sum_{T=T_0 \vee T_1}^{\infty} \mathbb{P}\left(\mathcal{E}_T^c\right)<\infty$ and is independent of $\delta$. Hence,
$$
\limsup _{\delta \rightarrow 0} \frac{\mathbb{E}_{\theta}\left[\tau_\delta\right]}{(\log (1 / \delta))^2} \lesssim \frac{c_1}{1-\varepsilon} T^{\star}(\theta)
$$

Letting $\varepsilon$ tend to 0, we conclude that
$$
\limsup _{\delta \rightarrow 0} \frac{\mathbb{E}_{\theta}\left[\tau_\delta\right]}{(\log (1 / \delta))^2} \lesssim T^{\star}(\theta)
$$

\end{proof}

\section{Examples}\label{section:examples}

We present three concrete examples of pure exploration problems under logistic bandits. First we need an auxiliary Lemma.

\begin{lemma} (Lemma 5 \citep{degenne2020pureexp})
For $\theta, \lambda \in \mathbb{R}^d$, $w$ in the interior of the probability simplex $\Sigma, y \in \mathbb{R}^d, x \in \mathbb{R}$, we have

    $$
    \inf _{\lambda:\lambda^{\top} y \geq x} \frac{\|\theta-\lambda\|_{\mathbf{H}_w(\theta)}^2}{2}=\left\{\begin{array}{ll}
    \frac{(x-\theta^{\top} y)^2}{2\|y\|_{\mathbf{H}_w(\theta)^{-1}}^2} & \text { if } x \geq\theta^{\top} y \\
    0 & \text { otherwise }
    \end{array} .\right.
    $$

And 

$$
    \inf _{\lambda:\lambda^{\top} y \leq x} \frac{\|\theta-\lambda\|_{\mathbf{H}_w(\theta)}^2}{2}=\left\{\begin{array}{ll}
    \frac{(x-\theta^{\top} y)^2}{2\|y\|_{\mathbf{H}_w(\theta)^{-1}}^2} & \text { if } x \leq \theta^{\top} y \\
    0 & \text { otherwise }
    \end{array} .\right.
    $$
    
\end{lemma}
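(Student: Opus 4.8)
The plan is to solve this constrained quadratic minimization explicitly; write $\mathbf{H} := \mathbf{H}_w(\theta)$. First I would record that $\mathbf{H} \succ 0$: since $w$ lies in the interior of $\Sigma$ all weights $w_x$ are strictly positive, $\dot{\mu} > 0$ everywhere, and $\mathcal{X}$ spans $\mathbb{R}^d$, so $\mathbf{H}$ is positive definite, $\|\cdot\|_{\mathbf{H}}$ and $\|\cdot\|_{\mathbf{H}^{-1}}$ are genuine norms, and $\langle u, v \rangle_{\mathbf{H}} := u^\top \mathbf{H} v$ is an inner product with $\|\mathbf{H}^{-1} y\|_{\mathbf{H}} = \|y\|_{\mathbf{H}^{-1}}$.

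For the first identity I would split on the sign of $\theta^\top y - x$. If $\theta^\top y \geq x$, then $\lambda = \theta$ is feasible for the constraint $\lambda^\top y \geq x$ and attains the value $0$, which is the global minimum of the nonnegative objective; this matches the claimed value, which is $0$ both when $\theta^\top y > x$ and, since the numerator vanishes, when $\theta^\top y = x$. If instead $\theta^\top y < x$, then $\theta$ is infeasible; since $\lambda \mapsto \tfrac12\|\theta - \lambda\|_{\mathbf{H}}^2$ is strictly convex with unconstrained minimizer $\theta$, its infimum over the half-space $\{\lambda : \lambda^\top y \geq x\}$ is attained on the bounding hyperplane $\{\lambda : \lambda^\top y = x\}$ --- the segment from $\theta$ to any feasible point crosses that hyperplane, where the objective is no larger. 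It then remains to minimize $\tfrac12\|\theta - \lambda\|_{\mathbf{H}}^2$ subject to $\lambda^\top y = x$. Setting $\nu = \theta - \lambda$, the constraint becomes $\nu^\top y = \theta^\top y - x =: c$, and Cauchy--Schwarz in the $\mathbf{H}$-inner product gives $|c| = |\langle \nu, \mathbf{H}^{-1} y \rangle_{\mathbf{H}}| \leq \|\nu\|_{\mathbf{H}}\, \|\mathbf{H}^{-1} y\|_{\mathbf{H}} = \|\nu\|_{\mathbf{H}}\, \|y\|_{\mathbf{H}^{-1}}$, hence $\tfrac12\|\nu\|_{\mathbf{H}}^2 \geq \frac{(x - \theta^\top y)^2}{2\|y\|_{\mathbf{H}^{-1}}^2}$. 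Equality holds at $\nu = \frac{c}{\|y\|_{\mathbf{H}^{-1}}^2}\mathbf{H}^{-1} y$, equivalently $\lambda = \theta - \frac{c}{\|y\|_{\mathbf{H}^{-1}}^2}\mathbf{H}^{-1} y$, which one checks satisfies $\lambda^\top y = x$; so the infimum equals $\frac{(x - \theta^\top y)^2}{2\|y\|_{\mathbf{H}^{-1}}^2}$, as claimed.

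Finally, the second identity follows from the first by replacing $(y, x)$ with $(-y, -x)$: the constraint $\lambda^\top y \leq x$ becomes $\lambda^\top(-y) \geq -x$, the case condition $-x \geq \theta^\top(-y)$ is equivalent to $x \leq \theta^\top y$, while $\|y\|_{\mathbf{H}^{-1}}$ is unchanged and $(-x - \theta^\top(-y))^2 = (x - \theta^\top y)^2$, so the value transfers verbatim. I do not expect a genuine obstacle; the one step that needs a word of justification is the reduction to an active constraint, handled by the convexity observation above, after which the argument is the standard Cauchy--Schwarz (or Lagrange-multiplier) computation.
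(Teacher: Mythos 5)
Your proof is correct, and it takes a genuinely different route from the paper's. The paper writes the constrained problem as a Lagrangian saddle point, $\sup_{\alpha\ge 0}\inf_{\lambda}\tfrac12\|\theta-\lambda\|^2_{\mathbf{H}_w(\theta)}+\alpha(x-\lambda^\top y)$, solves the inner infimum at $\lambda=\theta+\alpha\mathbf{H}_w^{-1}(\theta)y$ to get a concave quadratic in $\alpha$, and then maximizes over $\alpha\ge 0$, with the sign condition on $x-\theta^\top y$ determining whether the optimal multiplier is interior or zero. You instead argue primally: feasibility of $\theta$ disposes of the easy case, convexity reduces the hard case to the active hyperplane, and Cauchy--Schwarz in the $\mathbf{H}$-inner product (with an explicit equality vector) gives the value. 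Your version is more self-contained --- it never invokes strong duality, which the paper uses implicitly without checking a constraint qualification --- at the cost of needing the separate reduction-to-the-boundary step; the Lagrangian version is more mechanical and extends more readily to problems with several constraints. Your handling of the second identity by substituting $(y,x)\mapsto(-y,-x)$ is also cleaner than the paper's ``same steps'' remark. The only point both you and the paper gloss over is the degenerate case $y=0$ with $x>0$, where the half-space is empty and the infimum is $+\infty$ while the displayed formula involves division by zero; this does not arise in the paper's applications.
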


\begin{proof}
We consider the Lagrangian of the problem, and we obtain

    $$
    \begin{aligned}
    \inf _{\lambda:\lambda^{\top} y\geq x} \frac{\|\theta-\lambda\|_{\mathbf{H}_w(\theta)}^2}{2} & =\sup _{\alpha \geq 0} \inf _{\lambda \in \mathbb{R}^d} \frac{\|\theta-\lambda\|_{\mathbf{H}_w(\theta)}^2}{2}+\alpha(x-\lambda^{\top} y) \\
    & =\sup _{\alpha \geq 0} \alpha(x-\theta^{\top} y)-\alpha^2 \frac{\|y\|_{\mathbf{H}_w(\theta)}^2}{2} \\
    & = \begin{cases}\frac{(x-\theta^{\top} y)^2}{2\|y\|_{\mathbf{H}_w(\theta)^{-1}}^2} & \text { if } x \geq\theta^{\top} y \\
    0 & \text { otherwise }\end{cases}
    \end{aligned}
    $$
    
where the infimum in the first equality is reached at $\lambda=\theta+\alpha \mathbf{H}_w^{-1} (\theta) y$ and the supremum in the last equality is reached at $\alpha=(x-\langle\theta, y\rangle) /\|y\|_{\mathbf{H}_w^{-1} (\theta)}^2$ if $x \geq\langle\theta, y\rangle$ and at $\alpha=0$ else. The second equality can be solved with the same steps.
\end{proof}

\subsection{Best arm identification}

In this example $i^{\star}(\theta) = \arg \max _{x \in \mathcal{X}} \{\mu(x^{\top} \theta) \} = \arg \max _{x \in \mathcal{X}} \{x^{\top} \theta \}$. 
Lets define $x^{\star} (\theta):=i^{\star}(\theta)$.

\textbf{Lemma \ref{lemma:example_bai}}
For all $\theta \in \mathbb{R}^d$ such that $i^{\star}(\theta)$ is unique,

$$
T^{\star}(\theta)^{-1}=\max _{w \in \Sigma} \min _{x \neq x^{\star} (\theta)} \frac{\left(\theta^{\top}x^{\star} (\theta)-\theta^{\top}x\right)^2}{2\left\|x^{\star} (\theta)-x\right\|_{\mathbf{H}_w^{-1}}^2}
$$

\begin{proof}
Recall that the characteristic time is given by
    $$    
    \begin{aligned}
    T^{\star}(\theta)^{-1}=\max _{w \in \Sigma} \inf _{\lambda \in \Alt(\theta)} \frac{\|\theta-\lambda\|_{\mathbf{H}_w}^2}{2}
    \end{aligned}
    $$

$$
    \begin{aligned}
T^{\star}(\theta)^{-1} & =\max _{w \in \Sigma} \min _{x \neq x^{\star}(\theta)} \inf _{\lambda:\lambda^{\top} x>\lambda^{\top} x^{\star}(\theta)} \frac{\|\theta-\lambda\|_{\mathbf{H}_w}^2}{2} \\
& =\max _{w \in \Sigma} \min _{x \neq x^{\star}(\theta)} \frac{\left(\theta^{\top}x^{\star} (\theta)-\theta^{\top}x\right)^2}{2\left\|x^{\star}(\theta)-a\right\|_{\mathbf{H}_w^{-1}}^2}
\end{aligned}
$$

\end{proof}

\subsection{Thresholding bandit problem}

In this example, given $\rho \in (0, 1)$, $i^{\star}(\theta) = \{x \in \mathcal{X}: \mu(x^{\top} \theta) > \rho\} = \{x \in \mathcal{X}: x^{\top} \theta > \mu^{-1}(\rho)\} $. 

\textbf{Lemma \ref{lemma:example_tbp}}
    For all $\theta \in \mathbb{R}^d$ such that $i^{\star}(\theta)$ is unique,

$$
T^{\star}(\theta)^{-1}=\max _{w \in \Sigma} \min _{x \in \mathcal{X}} \frac{(\theta^{\top} x - \mu^{-1}(\rho))^2}{2\left\|x\right\|_{\mathbf{H}_w^{-1}}^2}
$$

\begin{proof}
    $$
    \begin{aligned}
    T^{\star}(\theta)^{-1} & =\max _{w \in \Sigma} \inf _{\lambda \in \Alt (\theta)} \frac{\|\theta-\lambda\|_{\mathbf{H}_w}^2}{2} \\
    & =\max _{w \in \Sigma} \min \left( \min _{x: \mu(\theta^{\top}x)<\rho} \inf _{\lambda: \mu(x^{\top} \lambda) > \rho} \frac{\|\theta-\lambda\|_{\mathbf{H}_w}^2}{2},  \min _{x: \mu(\theta^{\top}x)>\rho}\inf _{\lambda: \mu(x^{\top} \lambda) < \rho} \frac{\|\theta-\lambda\|_{\mathbf{H}_w}^2}{2} \right)\\
    & =\max _{w \in \Sigma} \min \left( \min _{x: \mu(\theta^{\top}x)<\rho} \inf _{\lambda: x^{\top} \lambda > \mu^{-1}(\rho)} \frac{\|\theta-\lambda\|_{\mathbf{H}_w}^2}{2},  \min _{x: \mu(\theta^{\top}x)>\rho}\inf _{\lambda: x^{\top} \lambda < \mu^{-1}(\rho)} \frac{\|\theta-\lambda\|_{\mathbf{H}_w}^2}{2} \right)\\
    & =\max _{w \in \Sigma} \min _{x \in \mathcal{X}} \frac{(\theta^{\top} x - \mu^{-1}(\rho))^2}{2\|x\|_{\mathbf{H}_w^{-1}}^2}
    \end{aligned}
    $$
\end{proof}

\subsection{Best-M arm identification}

In this example, given $M \in [K-1]$, $i^{\star}(\theta) = \{x_{(i)}(\theta): i\in [M]\}$  where $x_{(1)}(\theta), \dots, x_{(K)}(\theta)$ are the ordered arms with respect to their expected rewards. 


\begin{lemma}
    For all $\theta \in \mathbb{R}^d$ such that $x_{(1)}(\theta)^{\top} \theta < \dots < x_{(K)}(\theta)^{\top} \theta$,

$$
T^{\star}(\theta)^{-1}=\max _{w \in \Sigma} \min _{x \ne x_{(M)}(\theta)} \frac{(\theta^{\top} x - \theta^{\top}x_{(M)}(\theta))^2}{2\left\|x\right\|_{\mathbf{H}_w^{-1}}^2}
$$

\end{lemma}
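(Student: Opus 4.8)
The plan is to follow the template of Lemma~\ref{lemma:example_bai} and Lemma~\ref{lemma:example_tbp}. Starting from the identity $T^{\star}(\theta)^{-1}=\max_{w\in\Sigma}\inf_{\lambda\in\Alt(\theta)}\tfrac12\|\theta-\lambda\|_{\mathbf{H}_w}^2$, I would (i) give an explicit description of $\Alt(\theta)$ for the top-$M$ problem, (ii) decompose the inner infimum into a finite minimum of half-space--constrained quadratic programs, and (iii) evaluate each of these in closed form via Lemma~5 of \citet{degenne2020pureexp}.

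\textbf{Step 1 (the alternative set).} Under the separation hypothesis $x_{(1)}(\theta)^{\top}\theta<\dots<x_{(K)}(\theta)^{\top}\theta$ the set $i^{\star}(\theta)=\{x_{(1)}(\theta),\dots,x_{(M)}(\theta)\}$ is unique, and a parameter $\lambda$ changes the answer exactly when the smallest score among the $M$ ``inside'' arms is at most the largest score among the ``outside'' arms, so that
$$\Alt(\theta)=\bigcup_{i\le M<j}\bigl\{\lambda\in\Theta:\lambda^{\top}x_{(j)}(\theta)\ge\lambda^{\top}x_{(i)}(\theta)\bigr\}.$$
I would then reduce this union, for the purpose of the infimum, to one indexed only by the ``last-in'' arm $x_{(M)}(\theta)$: the half-spaces coming from $x_{(i)}(\theta)$ with $i<M$ yield a strictly larger value of the constrained quadratic (the gap $\theta^{\top}x_{(i)}-\theta^{\top}x_{(j)}$ being larger, hence harder to flip), and the half-spaces indexed by $x\in i^{\star}(\theta)$ are vacuous because $\theta$ itself satisfies those constraints. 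This lets me replace $\Alt(\theta)$ by $\bigcup_{x\ne x_{(M)}(\theta)}\{\lambda:\lambda^{\top}x\ge\lambda^{\top}x_{(M)}(\theta)\}$ inside the infimum.

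\textbf{Step 2 (split and evaluate).} With the description from Step~1,
$$\inf_{\lambda\in\Alt(\theta)}\frac{\|\theta-\lambda\|_{\mathbf{H}_w}^2}{2}=\min_{x\ne x_{(M)}(\theta)}\ \inf_{\lambda:\,\lambda^{\top}(x-x_{(M)}(\theta))\ge 0}\frac{\|\theta-\lambda\|_{\mathbf{H}_w}^2}{2},$$
and each inner infimum is exactly of the form treated by Lemma~5 of \citet{degenne2020pureexp} with direction $y=x-x_{(M)}(\theta)$ and threshold $0$. Since $\theta^{\top}x_{(M)}(\theta)>\theta^{\top}x$ for every $x\notin i^{\star}(\theta)$, the nontrivial branch of that lemma applies and the inner value equals $\dfrac{(\theta^{\top}x_{(M)}(\theta)-\theta^{\top}x)^2}{2\|x-x_{(M)}(\theta)\|_{\mathbf{H}_w^{-1}}^2}$; taking $\min$ over $x$ and then $\max$ over $w\in\Sigma$ yields the stated formula. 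As a sanity check, $M=1$ collapses exactly to Lemma~\ref{lemma:example_bai}.

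\textbf{Main obstacle.} Step~2 is essentially a copy of the BAI computation, so the real content is Step~1: correctly pinning down $\Alt(\theta)$ for top-$M$ and, in particular, justifying the reduction to comparisons against $x_{(M)}(\theta)$ alone, together with careful handling of the boundary/tie cases --- this is exactly where the strict-ordering hypothesis is used. I would also need to check along the way that the uniqueness and continuity conditions of Section~\ref{section:lbsc} and Lemma~\ref{lemma:lemma_contpsi} hold in this instance, so that the $\max$--$\min$ manipulations and the passage to the characteristic time are legitimate.
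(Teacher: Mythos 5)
Your overall template matches the paper's: both start from $T^{\star}(\theta)^{-1}=\max_{w}\inf_{\lambda\in\Alt(\theta)}\tfrac12\|\theta-\lambda\|^2_{\mathbf{H}_w}$, decompose $\Alt(\theta)$ into half-spaces, and evaluate each with Lemma~5 of \citet{degenne2020pureexp}. But your Step~1 takes a detour through the all-pairs union $\bigcup_{i\le M<j}\{\lambda:\lambda^{\top}x_{(j)}\ge\lambda^{\top}x_{(i)}\}$ and then a reduction to pairs containing $x_{(M)}(\theta)$, and that reduction is where the argument breaks. First, the claim that pairs with $i<M$ give a strictly larger value ``because the gap is larger'' is not valid: under Lemma~5 the value of the pair $(i,j)$ is $(\theta^{\top}(x_{(i)}-x_{(j)}))^2/\bigl(2\|x_{(i)}-x_{(j)}\|^2_{\mathbf{H}_w^{-1}}\bigr)$, and the denominator also changes with the pair, so the minimum over pairs need not be attained at $i=M$. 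Second, your reduced union $\bigcup_{x\ne x_{(M)}}\{\lambda:\lambda^{\top}x\ge\lambda^{\top}x_{(M)}\}$ mishandles the inside arms: for $x\in i^{\star}(\theta)\setminus\{x_{(M)}\}$ that half-space contains $\theta$ itself, so its infimum is $0$; you then discard these terms as ``vacuous,'' which leaves a minimum over only the arms outside $i^{\star}(\theta)$ rather than over all $x\ne x_{(M)}(\theta)$ as the statement requires. The paper avoids both issues by never forming the all-pairs union: it covers $\Alt(\theta)$ directly by half-spaces relative to $x_{(M)}(\theta)$, using $\lambda^{\top}x\ge\lambda^{\top}x_{(M)}(\theta)$ for $x\notin i^{\star}(\theta)$ and the \emph{reversed} constraint $\lambda^{\top}x<\lambda^{\top}x_{(M)}(\theta)$ for $x\in i^{\star}(\theta)$, so every $x\ne x_{(M)}(\theta)$ contributes a nontrivial term.

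One further point: what Lemma~5 actually returns for the constraint $\lambda^{\top}(x-x_{(M)}(\theta))\ge 0$ is a denominator $\|x-x_{(M)}(\theta)\|^2_{\mathbf{H}_w^{-1}}$, exactly as you write in Step~2 --- but this is not the $\|x\|^2_{\mathbf{H}_w^{-1}}$ appearing in the statement, so your claim that the computation ``yields the stated formula'' is not literally true. (The same mismatch occurs between the penultimate and final lines of the paper's own proof; your Step~2 expression is the one the computation supports.)
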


\begin{proof}
    $$
    \begin{aligned}
    T^{\star}(\theta)^{-1} & =\max _{w \in \Sigma} \inf _{\lambda \in \Alt (\theta)} \frac{\|\theta-\lambda\|_{\mathbf{H}_w}^2}{2} \\
    & =\max _{w \in \Sigma} \min \left( \min _{x: x \notin i^{\star}(\theta)} \inf _{\lambda: \mu(x^{\top} \lambda) \ge \mu(x_{(M)}(\theta)^{\top}\lambda) } \frac{\|\theta-\lambda\|_{\mathbf{H}_w}^2}{2},  \min _{x: x \in i^{\star}(\theta)} \inf _{\lambda: \mu(x^{\top} \lambda) < \mu(x_{(M)}(\theta)^{\top}\lambda) } \frac{\|\theta-\lambda\|_{\mathbf{H}_w}^2}{2} \right)\\
    & =\max _{w \in \Sigma} \min \left( \min _{x: x \notin i^{\star}(\theta)} \inf _{\lambda: x^{\top} \lambda \ge x_{(M)}(\theta)^{\top}\lambda } \frac{\|\theta-\lambda\|_{\mathbf{H}_w}^2}{2},  \min _{x: x \in i^{\star}(\theta)} \inf _{\lambda: x^{\top} \lambda < x_{(M)}(\theta)^{\top}\lambda } \frac{\|\theta-\lambda\|_{\mathbf{H}_w}^2}{2} \right)\\
    & =\max _{w \in \Sigma} \min _{x \ne x_{(M)}(\theta)} \frac{(\theta^{\top} x - \theta^{\top} x_{(M)}(\theta))^2}{2\|x\|_{\mathbf{H}_w^{-1}}^2}
    \end{aligned}
    $$
\end{proof}

\section{Useful lemmas}

\begin{lemma}\label{lemma:det-trace-inneq}
    (Determinant-Trace inequality). Let $\left\{x_s\right\}_{s=1}^{\infty}$ a sequence in $\mathbb{R}^d$ such that $\left\|x_s\right\|_2 \leq X$ for all $s \in \mathbb{N}$, and let $\lambda$ be a non-negative scalar. For $t \geq 1$ define $\mathbf{A}_t:=\sum_{s=1}^{t} x_s x_s^{\top}$. The following inequality holds:
$$
\operatorname{det}\left(\mathbf{A}_{t}\right) \leq\left(t X^2 / d\right)^d
$$
\end{lemma}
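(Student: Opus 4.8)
The plan is to reduce the determinant bound to an arithmetic–geometric mean inequality applied to the eigenvalues of $\mathbf{A}_t$, and then control the trace by the bound on $\|x_s\|_2$. First I would observe that $\mathbf{A}_t = \sum_{s=1}^t x_s x_s^\top$ is symmetric and positive semidefinite, so it can be diagonalized with nonnegative eigenvalues $\lambda_1, \dots, \lambda_d \ge 0$. Since $\det(\mathbf{A}_t) = \prod_{i=1}^d \lambda_i$ and $\operatorname{Tr}(\mathbf{A}_t) = \sum_{i=1}^d \lambda_i$, the AM–GM inequality gives
$$
\operatorname{det}(\mathbf{A}_t) = \prod_{i=1}^d \lambda_i \le \left(\frac{1}{d}\sum_{i=1}^d \lambda_i\right)^{d} = \left(\frac{\operatorname{Tr}(\mathbf{A}_t)}{d}\right)^{d},
$$
which holds trivially (both sides can be zero) if $\mathbf{A}_t$ is singular.

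Next I would bound the trace. Using linearity of the trace and the identity $\operatorname{Tr}(x_s x_s^\top) = \|x_s\|_2^2$,
$$
\operatorname{Tr}(\mathbf{A}_t) = \sum_{s=1}^t \operatorname{Tr}(x_s x_s^\top) = \sum_{s=1}^t \|x_s\|_2^2 \le t X^2,
$$
by the assumption $\|x_s\|_2 \le X$. Combining the two displays, and using that $u \mapsto (u/d)^d$ is increasing for $u \ge 0$, yields
$$
\operatorname{det}(\mathbf{A}_t) \le \left(\frac{\operatorname{Tr}(\mathbf{A}_t)}{d}\right)^d \le \left(\frac{t X^2}{d}\right)^d,
$$
which is the claimed inequality.

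I do not anticipate a real obstacle here: the statement is elementary and the only points requiring any care are (i) justifying the eigenvalue decomposition via positive semidefiniteness of $\mathbf{A}_t$, and (ii) noting that AM–GM degenerates gracefully when some eigenvalue vanishes. (The non-negative scalar $\lambda$ mentioned in the statement plays no role in this unregularized version; if one instead defined $\mathbf{A}_t := \lambda \mathbf{I}_d + \sum_{s=1}^t x_s x_s^\top$, the identical argument would give $\operatorname{det}(\mathbf{A}_t) \le (\lambda + tX^2/d)^d$.)
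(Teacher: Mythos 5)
Your proof is correct and is exactly the standard argument (eigenvalue AM--GM plus the trace identity $\operatorname{Tr}(x_s x_s^{\top}) = \|x_s\|_2^2$) that this lemma is known by; the paper states it without proof, citing it as the classical determinant--trace inequality, and your observation that the scalar $\lambda$ in the statement is vestigial from the regularized version is also accurate.
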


\end{alphasection}

\end{document}